\newif\ifsupplementary
\definecolor{darkblue}{rgb}{0.0, 0.0, 0.8}
\newif\ifchecklist
\newif\ifappendix
\newif\iftableofcontents
\newif\ifacknowledgment
	\def\@noticestring{}
\theoremstyle{plain}
\newtheorem{theorem}{Theorem}[section]%  
\newtheorem{corollary}[theorem]{Corollary} %
\newtheorem{lemma}[theorem]{Lemma}
\newtheorem*{lemma*}{Lemma}
\newtheorem{proposition}[theorem]{Proposition}
\theoremstyle{definition}
\theoremstyle{remark}
\newtheorem*{remark*}{Remark}
\newcommand{\R}{\ifmmode\mathbb{R}\else$\mathbb{R}$\fi}
\newcommand{\N}{\ifmmode\mathbb{N}\else$\mathbb{N}$\fi}
\newcommand{\Z}{\ifmmode\mathbb{Z}\else$\mathbb{Z}$\fi}
\newcommand{\Q}{\ifmmode\mathbb{Q}\else$\mathbb{Q}$\fi}
\newcommand{\A}{\ifmmode\mathbb{A}\else$\mathbb{A}$\fi}
\let\tn\textnormal
\newcommand{\bmx}{{\bm{x}}}
\newcommand{\bmW}{{\bm{W}}}
\newcommand{\bmw}{{\bm{w}}}
\newcommand{\bme}{{\bm{e}}}
\newcommand{\bmb}{{\bm{b}}}
\newcommand{\bmPsi}{{\bm{\Psi}}}
\newcommand{\bmbeta}{{\bm{\beta}}}
\newcommand{\bmy}{{\bm{y}}}
\newcommand{\bmphi}{{\bm{\phi}}}
\newcommand{\bmPhi}{{\bm{\Phi}}}
\newcommand{\bmzero}{{\bm{0}}}
\newcommand{\calO}{{\mathcal{O}}}
\newcommand{\calC}{{\mathcal{C}}}
\newcommand{\calS}{{\mathcal{S}}}
\newcommand{\calA}{{\mathcal{A}}}
\newcommand{\bmcalL}{{\bm{\mathcal{L}}}}
\newcommand{\tildephi}{{\widetilde{\phi}}}
\newcommand{\tildeg}{{\widetilde{g}}}
\newcommand{\tildex}{{\widetilde{x}}}
\newcommand{\tildey}{{\widetilde{y}}}
\newcommand{\tilden}{{\widetilde{n}}}
\newcommand{\tildef}{{\widetilde{f}}}
\newcommand{\tildeM}{{\widetilde{M}}}
\newcommand{\tildepsi}{{\widetilde{\psi}}}
\newcommand{\tildedelta}{{\widetilde{\delta}}}
\newcommand{\scrE}{{\mathscr{E}}}
\newcommand{\scrL}{{\mathscr{L}}}
\newcommand{\hatn}{{\widehat{n}}}
\newcommand{\hatdelta}{{\widehat{\delta}}}
\def\one{{\ensuremath{\mathds{1}}}}
\newcommand{\middleValue}{{\tn{mid}}}
\newcommand{\nestnet}{{\hspace{0.6pt}\mathcal{N}\hspace{-1.9pt}\mathcal{N}\hspace{-1.25pt}}}
\newcommand{\bin}{\tn{bin}\hspace{1.2pt}}
\newcommand{\mystep}[2]{\par \vspace{0.25cm}\noindent\textbf{\hspace{8pt}Step }$#1\colon$ #2 \vspace{0.18cm} \par }
\newcommand{\myto}[2][1]{\mathop{
		\vcenter{\hbox{\scalebox{1}[#1]{\tikz{\draw[->,line width=0.72pt] (0,0.5) to (0.69*#2,0.5);}}}}
}}
\newcommand{\subjclass}[2][1991]{%
  \let\@oldtitle\@title%
  \gdef\@title{\@oldtitle\footnotetext{#1 \emph{Mathematics subject classification.} #2}}%
}
\definecolor{mylinenumbercolor}{HTML}{BEBEBE}
\newcommand*\patchAmsMathEnvironmentForLineno[1]{%
	\expandafter\let\csname old#1\expandafter\endcsname\csname #1\endcsname
	\expandafter\let\csname oldend#1\expandafter\endcsname\csname end#1\endcsname
	\renewenvironment{#1}%
	{\linenomath\csname old#1\endcsname}%
	{\csname oldend#1\endcsname\endlinenomath}}% 
\newcommand*\patchBothAmsMathEnvironmentsForLineno[1]{%
	\patchAmsMathEnvironmentForLineno{#1}%
	\patchAmsMathEnvironmentForLineno{#1*}}%
\newcommand{\biggg}{\bBigg@{3.5}}
\newcommand{\Biggg}{\bBigg@{4.5}}
\let\tilde\widetilde
\let\epsilon\varepsilon
\let\mycdots\cdots
\def\cdots{{\mycdots}}
\long\def\black#1{{\color{black}#1}}
\definecolor{mypurple}{HTML}{A000A0}
\definecolor{mygray}{rgb}{0.92,0.92,0.92}
\title{Neural Network Architecture Beyond\\ Width and Depth}
\author{%
	Zuowei Shen \\
	Department of Mathematics\\
	National University of Singapore\\
	\texttt{matzuows@nus.edu.sg} \\
	\And
		Haizhao Yang  \\
	Department of Mathematics\\
	University of Maryland, College Park\\
	\texttt{hzyang@umd.edu} \\
	\And
		Shijun Zhang\thanks{\hspace{1.5pt}Corresponding author.} \\
	Department of Mathematics\\
	National University of Singapore\\
	\texttt{zhangshijun@u.nus.edu} \\
	% examples of more authors
	% \And
	% Coauthor \\
	% Affiliation \\
	% Address \\
	% \texttt{email} \\
	% \AND
	% Coauthor \\
	% Affiliation \\
	% Address \\
	% \texttt{email} \\
	% \And
	% Coauthor \\
	% Affiliation \\
	% Address \\
	% \texttt{email} \\
	% \And
	% Coauthor \\
	% Affiliation \\
	% Address \\
	% \texttt{email} \\
}
\begin{document}	
\maketitle

% \begin{abstract}
% 	This paper introduces one more dimension, called height, in addition to two common dimensions width and depth in the characterization of neural network size.	It is proved that three-dimensional networks with height, width, and depth varying have much better approximation power than standard networks, i.e., two-dimensional ones with only width and depth varying. We say a network is of height $s$ if each hidden neuron of this network is activated by a network of height $\le s-1$. In the base case, the height of standard networks is defined as $1$. Networks of height $s$ are also called height-$s$ nested networks (NestNets) due to their nested architectures.
% 	We first prove by construction that $s$-th order ReLU NestNets with $\mathcal{O}(n)$ parameters can approximate $1$-Lipschitz continuous functions on $[0,1]^d$ within an error $\mathcal{O}(n^{-(s+1)/d})$, which is much better than the optimal error $\mathcal{O}(n^{-2/d})$ achieved by standard ReLU networks with $\mathcal{O}(n)$ parameters. 
% 	Next, we extend such a result to generic continuous functions on $[0,1]^d$ with the approximation error characterized by the modulus of continuity.	Finally, we conduct numerical experiments to verify the super-approximation power of ReLU NestNets.
% \end{abstract}

\begin{abstract}
This paper proposes a new neural network architecture by introducing an additional dimension called height beyond width and depth. Neural network architectures with height, width, and depth as hyper-parameters are called three-dimensional architectures. It is shown that neural networks with three-dimensional architectures are significantly more expressive than the ones with two-dimensional architectures (those with only width and depth as hyper-parameters), e.g., standard fully connected networks. The new network architecture is constructed recursively via a nested structure, and hence we call a network with the new architecture nested network (NestNet). A NestNet of height $s$ is built with each hidden neuron activated by a NestNet of height $\le s-1$. When $s=1$, a NestNet degenerates to a standard network with a two-dimensional architecture. It is proved by construction that height-$s$ ReLU NestNets with $\mathcal{O}(n)$ parameters can approximate $1$-Lipschitz continuous functions on $[0,1]^d$ with an error $\mathcal{O}(n^{-(s+1)/d})$, while the optimal approximation error of standard ReLU networks with $\mathcal{O}(n)$ parameters is $\mathcal{O}(n^{-2/d})$. Furthermore, such a result is extended to generic continuous functions on $[0,1]^d$ with the approximation error characterized by the modulus of continuity. Finally, we use numerical experimentation to show the advantages of the super-approximation power of ReLU NestNets.
% 	Finally, a numerical example is provided to explore the advantages of the super-approximation power of ReLU NestNets.
\end{abstract}

\section{Introduction}
\label{sec:intro}

% The approximation power of deep neural networks has been widely studied in recent years. 
% It is of great interest to investigate the (optimal) approximation error of various types of neural networks for common target functions like H\"older continuous function space. 
%
%There is a typical question in this topic: How to 
%
%In this paper, we focus on feed-forward neural networks, which are constructed via the compositions of activation functions and the affine linear maps.
%
%
%In this paper, we  introduce a new type of neural network
%with super-approximation power. 
%This type of network is built with the nested architecture, i.e., each hidden neuron is activated by a trainable sub-network. We call such networks nested networks (\textbf{NestNet}). 
In this paper, we design a new neural network architecture by introducing one more dimension, called height, in addition to width and depth in the characterization of dimensions of neural networks. 
% We call the new network architecture three-dimensional architecture since it is freely changing in three dimensions: height, width, and depth.
% Neural network architectures with height, width, and depth varying are called three-dimensional neural  network architectures.
% introducing one more dimension, called height, in addition to width and depth in the characterization of dimensions of neural networks. Neural networks with height, width, and depth varying are called three-dimensional neural  networks.
% It is shown that three-dimensional neural networks are significantly more expressive than two-dimensional ones, i.e., the classical networks with only width and depth varying (e.g., standard fully connected neural networks).
We call neural network architectures with height, width, and depth as hyper-parameters three-dimensional architectures.
It is proved by construction that neural networks with three-dimensional architectures improve the approximation power significantly, compared to standard networks with two-dimensional architectures (those with only width and depth as hyper-parameters).
The approximation power of standard neural networks has been widely studied in recent years. 
The optimality of the approximation of standard fully-connected rectified linear unit (ReLU) networks (e.g., see \cite{yarotsky18a,shijun:6,shijun:2,shijun:thesis}) implies limited room for further improvements.
This
motivates us to design a new neural network architecture by
introducing an additional dimension of height beyond width and depth.

We will focus on the ReLU ($\max\{0,x\}$) activation function and use it to demonstrate our ideas. 
% It would be interesting
% for future work to extend our work to other activation functions.
% We will prove 
% that ReLU NestNets outperform standard ReLU networks from a perspective of approximation power.
% It is of great interest to investigate the (optimal) approximation error of various types of neural networks for common target functions like H\"older continuous function space. 
Our new network architecture is constructed recursively  via a nested structure, and hence we call a neural network with the new architecture nested network (\textbf{NestNet}). 
A NestNet of height $s$ is built with each hidden neuron activated by a NestNet of height $\le s-1$. In the case of $s=1$, a NestNet degenerates to a standard network with a two-dimensional architecture.
Let us use a simple example to explain the height of a NestNet.
We say a network is activated by $\varrho_1,\cdots,\varrho_r$ if each hidden neuron of this network is activated by one of $\varrho_1,\cdots,\varrho_r$. Here, $\varrho_1,\cdots,\varrho_r$ are trainable functions mapping $\R$ to $\R$.
Then, a network of height $s\ge 2$ can be regarded as 
a $(\varrho_1,\cdots,\varrho_r)$-activated network, where $\varrho_1,\cdots,\varrho_r$ are (realized by) networks of height $\le s-1$.
%The parameters corresponding to $\varrho_1,\cdots,\varrho_r$ is called \textbf{activation parameters}, while the parameters in the affine linear maps are called \textbf{core parameters}. The number of parameters for a NestNet is the sum of the activation and core parameters.
See an example of a height-$2$ network  in Figure~\ref{fig:nesnet}. The network therein can be regarded as a $(\varrho_1,\varrho_2)$-activated network, where $\varrho_1$ and $\varrho_2$ are (realized by) networks of height $1$ (i.e., standard networks). The number of parameters in the network of Figure~\ref{fig:nesnet} is the sum of the numbers of parameters in $\bmcalL_0,\bmcalL_1,\bmcalL_2$ and $\varrho_1,\varrho_2$.

\begin{figure}[htbp!]
% 	\vskip 0.1in
	\centering
	\includegraphics[width=0.66\textwidth]{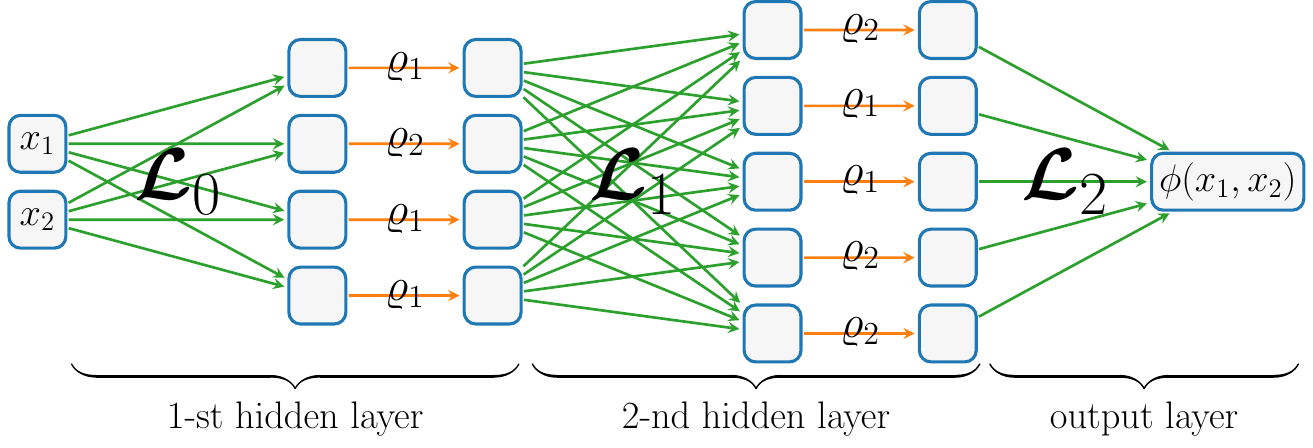}
	\caption{An example of a network of height $2$,   where $\varrho_1$ and $\varrho_2$ are (realized by) networks of height $1$ (i.e., standard networks).  Here, $\bmcalL_0$, $\bmcalL_1$ and $\bmcalL_2$ are affine linear maps. 
%		This network can also
%		be regarded as a $(\varrho_1,\varrho_2)$-activated network.
	}
	\label{fig:nesnet}
% 	\vskip 0.02in
\end{figure}

%s, i.e., networks activated by some sub-networks.
%We call such networks \textbf{nested networks}, abbreviated as \textbf{NestNets}.
%To be exact,
%we say  $(g_1,g_2,\cdots,g_k)$-activated networks are NestNets  if  $g_1,g_2,\cdots,g_k$ are normal networks. See Figure~\ref{fig:nesnet} for an example. The parameters in $g_1,g_2,\cdots,g_k$ and the outer network architecture are called activation and core parameters, respectively.
%
%Using a similar idea, we can define high-order NestNet recursively. 
%We say $(g_1,g_2,\cdots,g_k)$-activated networks are $\ell$-order NestNets if each of $g_1,g_2,\cdots,g_k$ is $j$-order NestNets with $j<\ell$, where $1$-order NestNets are the normal NestNets.

We remark that a NestNet can be regarded as a sufficiently large standard network by expanding all of its sub-network activation functions. We propose the nested network architecture since it
shares the parameters via repetitions of sub-network activation functions. 
In other words, a NestNet can provide a special parameter-sharing scheme.
This is the key reason why the NestNet has much better approximation power than the standard network. 
If we regard the network  in Figure~\ref{fig:nesnet} as a NestNet of height $2$, then the number of parameters is the sum of the numbers of parameters in $\bmcalL_0,\bmcalL_1,\bmcalL_2$ and $\varrho_1,\varrho_2$. However, if we expand the network  in Figure~\ref{fig:nesnet} to a large standard network, then the number of parameters in $\varrho_1$ and $\varrho_2$ will be added many times for computing the total number of parameters.

%See Table~\ref{tab:accuracy:comparison} for a comparison.

%Let us further discuss why we emphasize the parameters depending on the target function.
%It was shown in \cite{yarotsky18a,shijun:2,shijun:thesis,shijun:6} 
%that the approximation error $\calO(n^{-2/d})$ is (nearly) optimal for ReLU networks with $\calO(n)$ parameters to approximate $1$-Lipschitz continuous functions on $[0,1]^d$. To gain better approximation errors, existing results either consider smaller target function spaces  \cite{yarotsky:2019:06,yarotsky2017,shijun:3,barron1993,Weinan2019,doi:10.1002/mma.5575,bandlimit} or introduce new activation functions  \cite{shijun:4,shijun:7,shijun:5,pmlr-v139-yarotsky21a}. 
%Observe that, in many existing results, most parameters of networks constructed to approximate the target function $f$ are independent of $f$.
%We propose a new perspective to study the approximation error in terms of the number of parameters depending on $f$, which are called the \textbf{intrinsic parameters}, excluding those independent of $f$. We prove by construction that the approximation error can be greatly improved from our new perspective.

% Next, let us discuss the dimensions of a network architecture: height, width, and depth. 
% Recall that the width of a network (architecture) is defined as $\max_i\{N_i\}$, where $N_i$ denotes the number of neurons in $i$-th hidden layer; the depth of  a network (architecture) is defined as the number of hidden layers.
Next, let us discuss our new network architecture from the perspective of hyper-parameters.
We call the network architecture with only width as a hyper-parameter one-dimensional architecture. Its depth and height are both equal to one. Neural networks with this type of architecture are generally called shallow networks. See an example in Figure~\ref{fig:123D:net}\subref{subfig:1D}.
We call the network architecture with only width and depth as hyper-parameters two-dimensional architecture. Its height is equal to one. Neural networks with this type of architecture are generally called deep networks. See an example in Figure~\ref{fig:123D:net}\subref{subfig:2D}.
We call the network architecture with  height, width, and depth as hyper-parameters three-dimensional architecture, which is proposed
%. This is the architecture introduced 
in this paper.
Neural networks with this type of architecture are called NestNets. See an example in Figure~\ref{fig:123D:net}\subref{subfig:3D}.
One may refer to Table~\ref{tab:error:comparison} for the approximation power of networks with these three types of architectures discussed above.

\begin{table}[htbp!]   
%	\vskip 0.1in   
% 	\def\arraystretch{1.0}
	\caption{Comparison for the approximation error of $1$-Lipschitz continuous functions on $[0,1]^d$ approximated by ReLU NestNets and standard ReLU networks.  } 
	\label{tab:error:comparison}
	\vskip 0.075in
	\centering  
	\resizebox{0.95\textwidth}{!}{ 
		\begin{tabular}{ccccccccc} 			
			\toprule% \toprule[1.2pt]  
			& dimension(s)  &  \#parameters &   approximation error  &  remark & reference  \\
			\midrule

			one-hidden-layer  network & 
			width varies (depth\,=\,height\,=\,1) & $\calO(n)$ & $n^{-1}$ for $d=1$ &  linear combination     \\
			
			\midrule
			
			deep  network &  
			width and depth vary (height\,=\,1) & $\calO(n)$ & $n^{-2/d}$  & composition &  \cite{shijun:2,shijun:6,shijun:thesis,yarotsky18a}\\
			
			%			\midrule
			%			
			%			(height-$1$) NestNet & $\calO(n)$ & $n^{-3/d}$ &  & nested compositions & this paper \\
			
			\midrule
			
			 NestNet of height $s$ & 
			width, depth, and height vary  & $\calO(n)$ & $n^{-(s+1)/d}$ &  nested composition & this paper \\
			
			\bottomrule% \bottomrule[1.2pt] 
		\end{tabular} 
	}%%% \resizebox
% 		\vskip 0.05in
\end{table}

% \begin{figure}[htbp!]
% % 		\vskip 0.1in
% 	\centering
% 	\begin{subfigure}[c]{0.24\textwidth}
% 		\centering            \includegraphics[width=0.95\textwidth]{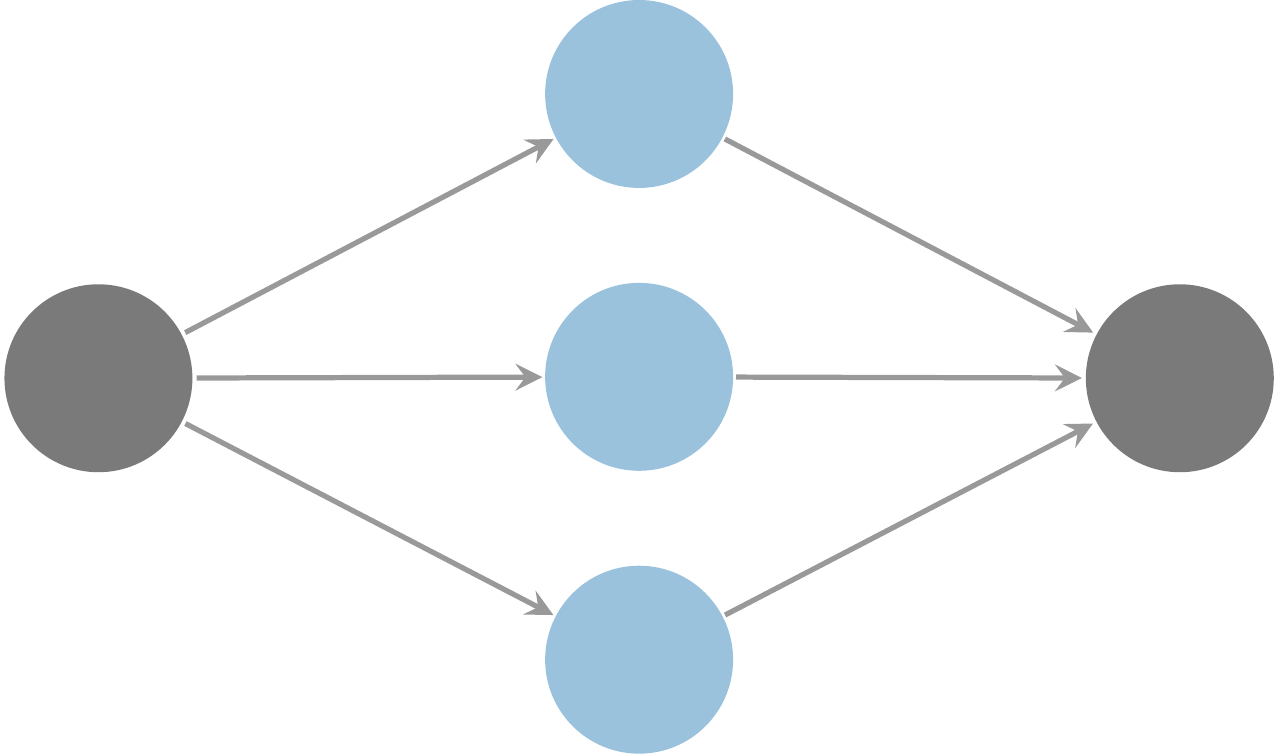}
% 		\subcaption{}
% 		\label{subfig:1D}
% 	\end{subfigure}%\hspace{3pt}
% 		\hspace{1pt}
% 	\begin{subfigure}[c]{0.24\textwidth}
% 		\centering           \includegraphics[width=0.95\textwidth]{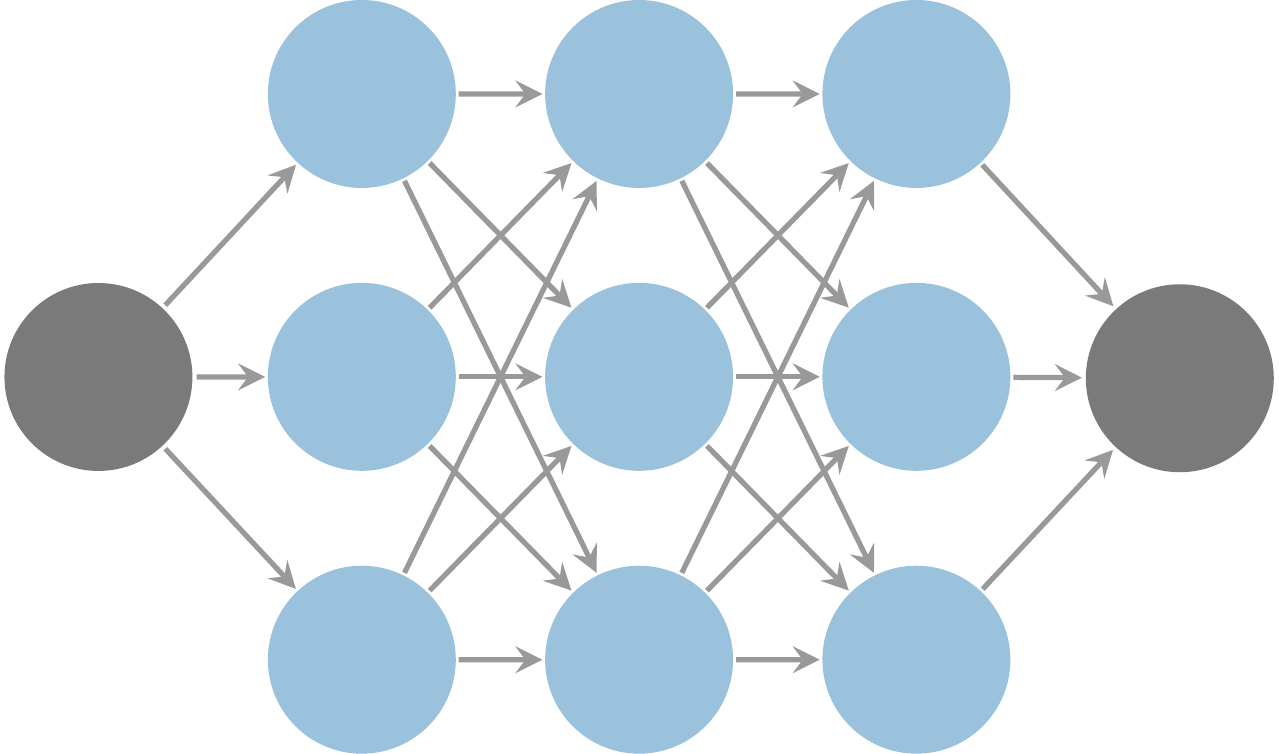}
% 		\subcaption{}
% 		\label{subfig:2D}
% 	\end{subfigure}\hspace{1pt}
% % 	\\ \vspace{6pt}
% 	\begin{subfigure}[c]{0.24\textwidth}
% 		\centering           \includegraphics[width=0.95\textwidth]{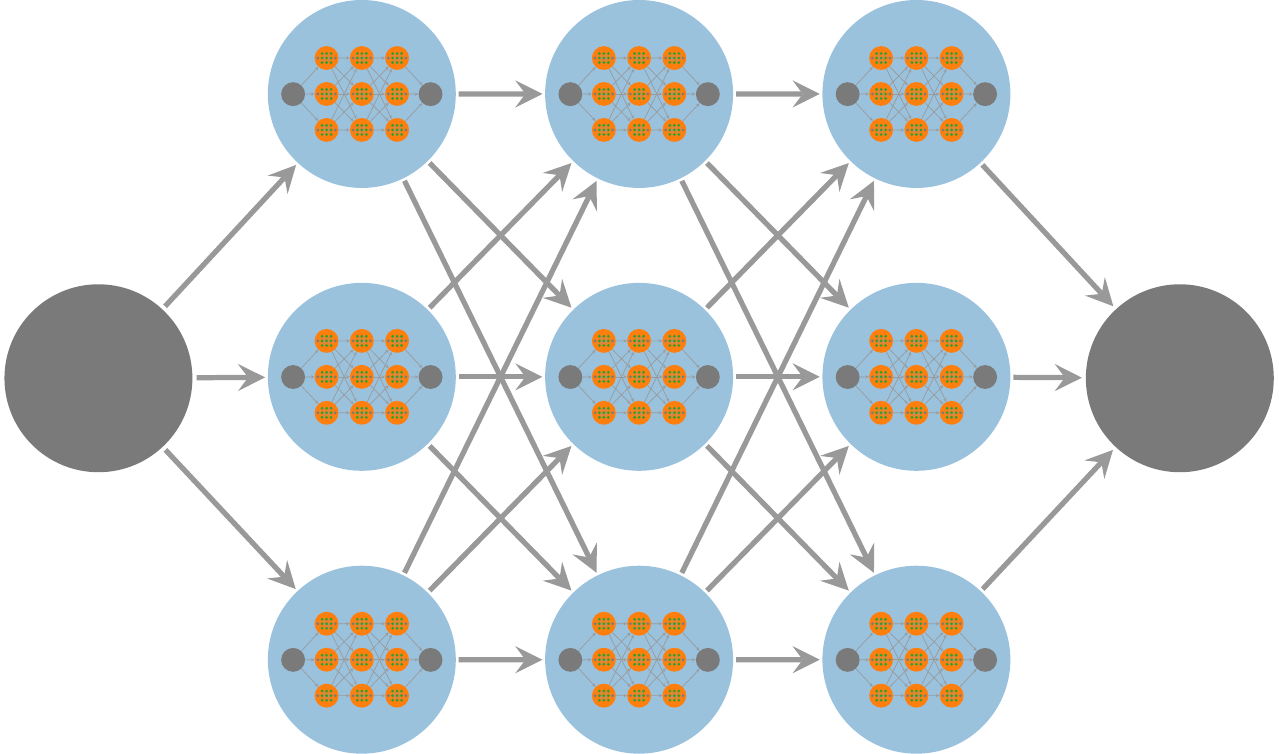}
% 		\subcaption{}
% 		\label{subfig:3D}
% 	\end{subfigure}\hspace{1pt}
% % 	\hspace{3pt}
% 		\begin{subfigure}[c]{0.24\textwidth}
% 		\centering           \includegraphics[width=0.95\textwidth]{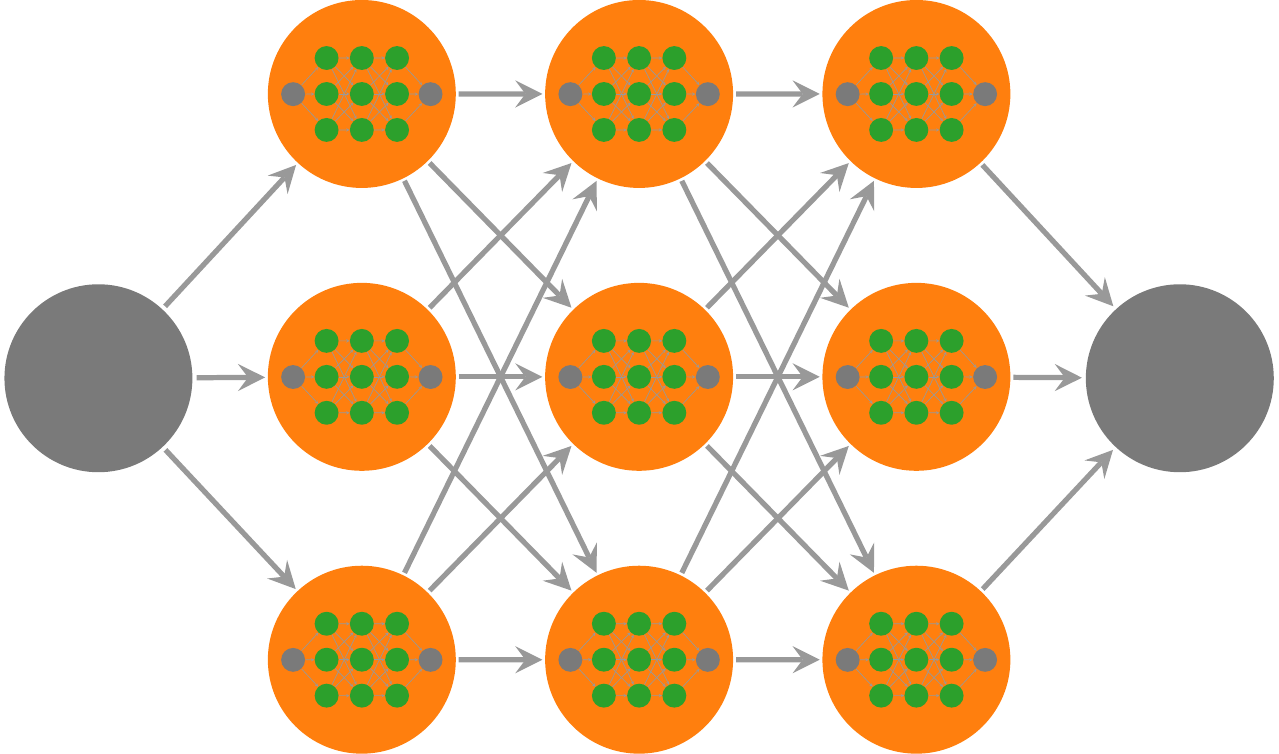}
% 		\subcaption{}
% 		\label{subfig:3D2}
% 	\end{subfigure}
% 	\caption{Illustrations of neural networks with one-, two-, and three-dimensional architectures.  (a) One-dimensional case (width\,=\,3, depth\,=\,height\,=\,1). (b) Two-dimensional case (width\,=\,depth\,=\,3, height\,=\,1). (c) Three-dimensional case (width\,=\,depth\,=\,height\,=\,3). (d) Zoom-in of an activation function of the network in (c). The network in (d) can also be regarded as a network of height $2$.}
% 	\label{fig:123D:net}
% % 		\vskip 0.02in
% \end{figure}
\begin{figure}[htbp!]
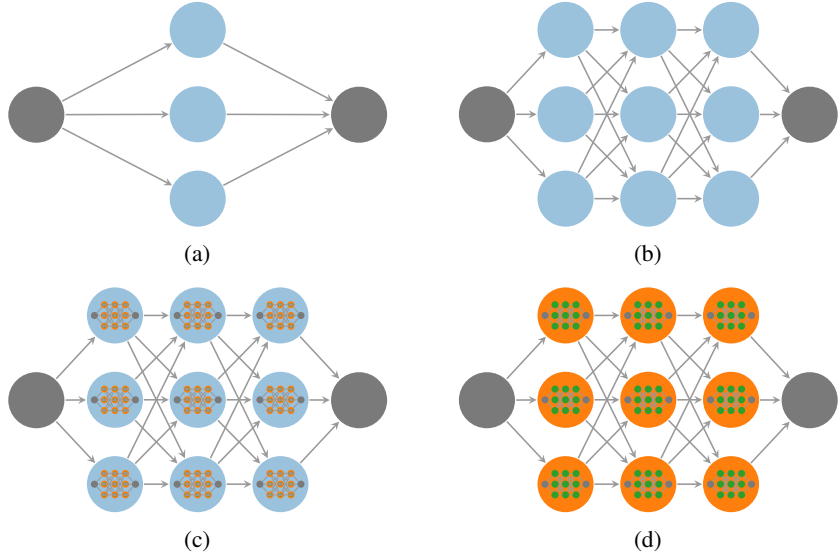

% 		\vskip 0.1in
	\centering
	\begin{subfigure}[c]{0.42\textwidth}
		\centering            \includegraphics[width=0.86435\textwidth]{1D.pdf}
		\subcaption{}
		\label{subfig:1D}
	\end{subfigure}\hspace{1pt}
	%	\hspace{8pt}
	\begin{subfigure}[c]{0.42\textwidth}
		\centering           \includegraphics[width=0.86435\textwidth]{2D.pdf}
		\subcaption{}
		\label{subfig:2D}
	\end{subfigure}\\
	\vspace{6pt}\begin{subfigure}[c]{0.42\textwidth}
		\centering           \includegraphics[width=0.86435\textwidth]{3D.pdf}
		\subcaption{}
		\label{subfig:3D}
	\end{subfigure}\hspace{1pt}
		\begin{subfigure}[c]{0.42\textwidth}
		\centering           \includegraphics[width=0.86435\textwidth]{3D2.pdf}
		\subcaption{}
		\label{subfig:3D2}
	\end{subfigure}
	\caption{Illustrations of neural networks with one-, two-, and three-dimensional architectures.  (a) One-dimensional case (width\,=\,3, depth\,=\,height\,=\,1). (b) Two-dimensional case (width\,=\,depth\,=\,3, height\,=\,1). (c) Three-dimensional case (width\,=\,depth\,=\,height\,=\,3). (d) Zoom-in of an activation function of the network in (c). The network in (d) can also be regarded as a network of height $2$.}
	\label{fig:123D:net}
% 		\vskip 0.02in
\end{figure}

% We first introduce one more dimension, called height, besides two common dimensions width and depth in the characterization of neural network size. It is proved that three-dimensional networks with height, width, and depth varying have much better approximation power than standard networks, i.e., two-dimensional ones with only width and depth varying.
% Next, we 
%We say a network is of height $s$ if each hidden neuron of this network is activated by a network of height $s-1$. In the base case, the height of standard networks is $1$. Networks of height $s$ are also called height-$s$ nested networks (\textbf{NestNets}) due to their nested architectures.	
%We propose the nested architecture for neural networks (NestNets) and prove the super-approximation power of NestNets.

Our main contributions are summarized as follows.
We first propose a three-dimensional neural network architecture by introducing one more dimension called height beyond width and depth. 
We show that neural networks with three-dimensional architectures are significantly more expressive than
standard networks.
In particular, we prove that height-$s$ ReLU NestNets with $\mathcal{O}(n)$ parameters can approximate $1$-Lipschitz
continuous functions on $[0,1]^d$ with an error $\mathcal{O}(n^{-(s+1)/d})$, which is much better than the optimal error $\mathcal{O}(n^{-2/d})$ of standard ReLU networks with $\mathcal{O}(n)$ parameters. 
In the case of $s+1\ge d$, the approximation error is bounded by $\calO(n^{-(s+1)/d}) \le \calO(n^{-1})$, which means we overcome the curse of dimensionality.
Furthermore, we extend our result to generic continuous functions with the approximation error characterized by the modulus of continuity. See Theorem~\ref{thm:main} and Corollary~\ref{coro:main} for more details.
Finally, we conduct simple experiments to show the numerical advantages of the super-approximation power of ReLU NestNets.

%	\item We generalize the result in the $L^p$-norm  for $p\in[1,\infty)$ to a new one measured in the $L^\infty$-norm. Such a generalization is at a price of more intrinsic parameters. Refer to Theorem~\ref{thm:mainInfty} for more details.
%	
%	\item We further extend our results and show that the number of intrinsic parameters can be reduced to three. To be precise, ReLU networks with three intrinsic parameters can achieve an arbitrary error for approximating H\"older continuous function on $[0,1]^d$. In this scenario, extremely high precision is required as we shall see later.

% As a consequence, we overcome the curse of  dimensionality in the sense of the approximation error characterized by intrinsic parameters when the variation of $\omega_f(r)$ as $r\to 0$ is moderate (e.g., $\omega_f(r) \lesssim r^\alpha$ for H\"older continuous functions).

%\vspace{8pt}
%\textbf{Organization}:
The rest of this paper is organized as follows. 
In Section~\ref{sec:main},
we present the main results, provide the ideas of proving them, and discuss related work.
%In Section~\ref{sec:main},
%we present our main results in Section~\ref{sec:main:results}, discuss related work in Section~\ref{sec:related:work}, and present a further discussion in Section~\ref{sec:further:discussion} for a deeper understanding.
The detailed proofs of the main results are placed in the appendix. Next, we conduct experiments to show
the advantages of the super-approximation power of ReLU NestNets in Section~\ref{sec:experiments}.
Finally, Section~\ref{sec:conclusion} concludes this paper with a short discussion.

%%%%%%%%%%%%%%%%%%%%%%%%%%%%%%%%%%%%%%
%%%%%%%% main results
\section{Main results and related work}
\label{sec:main}	

In this section, we first present our main results and 
discuss the proof ideas. The detailed proofs of the main results are placed in the appendix.
Next, we discuss related work from multiple perspectives. 

%%%%%%%%%%%%%%%%%%%%
\subsection{Main results}
\label{sec:main:results}

% In the rest of this paper, 
We use $\nestnet_{s}\{n\}$ for $n,s\in\N$ to denote the set of functions realized by 
height-$s$ ReLU NestNets with as most $n$ parameters. 
We will give the mathematical definition of $\nestnet_{s}\{n\}$.
We first discuss some notations regarding affine linear maps.
We use $\scrL$ to denote the set of all affine linear maps, i.e.,
\begin{equation*}
	\scrL\coloneqq 
	\Big\{\bmcalL:\bmcalL(\bmx)=\bmW \bmx+\bmb,\   \bmW\in \R^{d_2\times d_1},\   \bmb\in\R^{d_2},\    d_1,d_2\in\N^+\Big\}.
\end{equation*}
Let $\#\bmcalL$ denote the number of parameters in $\bmcalL\in \scrL$, i.e., 
\begin{equation*}
    \#\bmcalL=(d_1+1)d_2\quad \tn{if $\bmcalL(\bmx)= \bmW \bmx+\bmb$\quad  for $\bmW\in \R^{d_2\times d_1}$ and $\bmb\in\R^{d_2}$.}
\end{equation*}

We use $\vec{g}=(\varrho_1,\cdots,\varrho_k)$ to denote an activation function vector, where $\varrho_i:\R\to\R$ is an activation function for each $i\in \{1,\cdots,k\}$. When $\vec{g}=(\varrho_1,\cdots,\varrho_k)$ is applied to a vector input $\bmx=(x_1,\cdots,x_{k})$,  
\begin{equation*}
	\vec{g}(\bmx)=\Big(\varrho_1(x_1),\,\cdots,\,\varrho_k(x_k)\Big) \quad \tn{for any $\bmx=(x_1,\cdots,x_k)\in\R^k$.}
\end{equation*} 
Let $\tn{set}(\vec{g})$ denote the function set containing all entries (functions) in $\vec{g}$. For example, if $\vec{g}=(\varrho_1,\varrho_2,\varrho_3,\varrho_2,\varrho_1)$, then $\tn{set}(\vec{g})=\{\varrho_1,\varrho_2,\varrho_3\}$.

To define $\nestnet_{s}\{n\}$ for $n,s\in\N$ recursively, we first consider 
the degenerate case. Define
\begin{equation*}
	\nestnet_0\{n\}\coloneqq \big\{\tn{id}_\R,\,\tn{ReLU}\big\}\eqqcolon \nestnet_s\{0\} \quad \tn{for $n,s\in\N$,} 
\end{equation*} 
where $\tn{id}_\R:\R\to\R$ is the identity map.
That is, we regard the identity map and ReLU
as height-$0$ ReLU NestNets with $n$ parameters
or as height-$s$ ReLU NestNets with $0$ parameters.

Next, let us present the recursive step.
For $n,s\in\N^+$, a (vector-valued) function $\bmphi\in\nestnet_{s}\{n\}$ has the following form:
\begin{equation}\label{eq:def:bmphi}
    \bmphi=\bmcalL_{m}\circ \vec{g}_m\circ 
	\ \cdots \  
	\circ \bmcalL_1\circ \vec{g}_1\circ \bmcalL_0,
\end{equation}
where $\bmcalL_0,\cdots,\bmcalL_m\in \scrL$ are affine linear maps. Moreover, Equation~\eqref{eq:def:bmphi} 
% shall
satisfies the following two conditions:
\begin{itemize}
    % \item Condition on affine linear maps:
    % \begin{equation}\label{eq:affine:linear:maps}
    %     \bmcalL_0,\cdots,\bmcalL_m\in \scrL.
    % \end{equation}
    % This condition means $\bmcalL_0,\cdots,\bmcalL_m$ are affine linear maps.
    \item Condition on activation functions:
    \begin{equation}\label{eq:activation:functions}
        \bigcup_{i=1}^{m}\tn{set}(\vec{g}_i)=\{\varrho_1,\cdots,\varrho_r\}\quad \tn{ and} \quad \varrho_j\in \bigcup_{i=0}^{s-1}\nestnet_{i}\{n_j\}\quad \tn{ for $j=1,\cdots,r$.}
    \end{equation}
     Here, $\vec{g}_i$ is an activation function vector for each $i\in\{1,\cdots,m\}$. All entries in $\vec{g}_1,\cdots,\vec{g}_m$ form an activation function set $\{\varrho_1,\cdots,\varrho_r\}$.
     For each $j\in \{1,\cdots,r\}$, $\varrho_j$  can be realized by a height-$i$ NestNet with $\le n_j$ parameters for some $i=i_j\le s-1$.
    This condition means each hidden neuron is activated by a NestNet of lower height.
    \item Condition on the number of parameters:
    \begin{equation}
    \label{eq:number:of:parameters}
        \sum_{i=0}^m \#\bmcalL_i \, +\, \sum_{j=1}^{r}n_j \le n.
    \end{equation}
     This condition means the total number of parameters is no more than $n$. The total number of parameters is calculated by adding two parts. The first one is the number of parameters in affine linear maps $\bmcalL_0,\cdots,\bmcalL_m$. The other part is the number of parameters in the activation set $\{\varrho_1,\cdots,\varrho_r\}$ formed by the entries in activation function vectors $\vec{g}_1,\cdots,\vec{g}_m$.
\end{itemize}
Then, with two conditions in Equations~%\eqref{eq:affine:linear:maps},
\eqref{eq:activation:functions} and \eqref{eq:number:of:parameters}, we can define $\nestnet_s\{n\}$ for $n,s\in\N^+$ as follows:
% \begin{small}
\begin{equation*}
    \begin{split}
        	\nestnet_s\{n\}\coloneqq \bigg\{
	\bmphi:\,\bmphi=\bmcalL_{m}\circ \vec{g}_m\circ 
	\,\cdots\,  
	\circ \bmcalL_1\circ \vec{g}_1\circ \bmcalL_0,
	\hspace{5.6pt}   
	\bmcalL_0,\cdots,\bmcalL_m\in \scrL,
	\hspace{4.5pt}      	\bigcup_{i=1}^{m}\tn{set}(\vec{g}_i)
		=\{\varrho_1,\cdots,\varrho_r\},&\\
		\varrho_j\in \bigcup_{i=0}^{s-1}\nestnet_{i}\{n_j\} 
		\hspace{5pt}    
		\tn{for $j=1,\cdots,r$},
		\hspace{5pt}     
		\sum_{i=0}^{m}\#\bmcalL_i \, +\, \sum_{j=1}^{r}n_j \le n &\bigg\}.
    \end{split}
\end{equation*}
% \end{small}
% \begin{equation*}
%     \begin{split}
%         	\nestnet_s\{n\}\coloneqq \Bigg\{
% 	\bmphi: \quad  &\bmphi=\bmcalL_{m}\circ \vec{g}_m\circ 
% 	\ \cdots \  
% 	\circ \bmcalL_1\circ \vec{g}_1\circ \bmcalL_0,\qquad
% 	\bmcalL_0,\cdots,\bmcalL_r\in \scrL,\\
% 	&
% 	\begin{aligned}	    	\bigcup_{i=1}^{m}\tn{set}(\vec{g}_i)
% 		=\{\varrho_1,\cdots,\varrho_r\},\qquad
% 		\varrho_j\in \bigcup_{i=0}^{s-1}\nestnet_{i}\{n_j\} \,\  \tn{for $j=1,\cdots,r$},&\\
% 		\sum_{i=0}^{m}\#\bmcalL_i \, +\, \sum_{j=1}^{r}n_j \le n &\,\Bigg\}.
% 	\end{aligned}
%     \end{split}
% \end{equation*}

% \begin{equation*}
% 	\nestnet_s\{n\}\coloneqq \Biggg\{\hspace{2pt}
% 	\bmphi: \bmphi=\bmcalL_{m}\circ \vec{g}_m\circ 
% 	\ \cdots \  
% 	\circ \bmcalL_1\circ \vec{g}_1\circ \bmcalL_0,\ 
% 	\begin{cases}[1.3]	
% 	    \bmcalL_0,\cdots,\bmcalL_r\in \scrL  \\ 
	
% 		\bigcup_{i=1}^{m}\tn{set}(\vec{g}_i)=\big\{\varrho_1,\cdots,\varrho_m\big\}\\
		
% 		\varrho_j\in \bigcup_{i=0}^{s-1}\nestnet_{i}\{n_j\} \  \tn{for $j=1,\cdots,r$} \\		
		
% 		\sum_{i=0}^{m}\#\bmcalL_i \ +\ \sum_{j=1}^{r}n_j \le n
% 	\end{cases} 
% 		\hspace{-9pt}\Biggg\}
% \end{equation*}
We remark that, in the definition above,
$m$ can be equal to $0$. In this case, the function $\bmphi$ degenerates to an affine linear map.

% This is the based case.
% Next, let us consider the general case $s\ge 2$. We can recursively define $\nestnet_{s}\{n\}$.

% \begin{equation*}
% 	\nestnet_s\{n\}\coloneqq \Biggg\{\hspace{2pt}
% 	\phi: \phi=\bmcalL_{m}\circ vec{g_{m}}\circ 
% 	\ \cdots \  
% 	\circ \bmcalL_1\circ \vec{g}_1\circ \bmcalL_0,\  
% 	\begin{cases}	
% 		\calS(\vec{g}_1),\cdots,\calS(vec{g_r})\subseteq \bigcup_{i=0}^{s-1}\nestnet_{i} \\		
% 		\tn{\small$\bmcalL_0,\cdots,\bmcalL_r\in \scrL$ are affine linear maps}  \\ 
% 		\tn{\small\#paremters in $\bmcalL_0,\cdots,\bmcalL_r$ and 
% 		$\vec{g}_1,\cdots, vec{g_r}$ is $\le n$}
% 	\end{cases} 
% 		\hspace{-9pt}\Biggg\}
% \end{equation*}
% and 
% $\nestnet_s\coloneqq \bigcup_{n\in\N} \nestnet_s\{n\}$
% for $s=1,2,3,\cdots$. 

% We remark that $\vec{g}_1,\cdots,\vec{g}_m$ in the definition above are activation function vectors generated by lower ReLU NestNets. 
In the NestNet example in Figure~\ref{fig:nesnet}, the function $\phi$ therein is in $\bigcup_{n\in\N}\nestnet_2\{n\}$ and the activation function vectors $\vec{g}_1$ and $\vec{g}_2$ can be represented as
\begin{equation*}
	\vec{g}_1=\big(\varrho_1,\,\varrho_2,\,\varrho_1,\,\varrho_1\big)
	\quad \tn{and}\quad \vec{g}_2=\big(\varrho_2,\,\varrho_1,\,\varrho_1,\,\varrho_2,\,\varrho_2\big).
\end{equation*}
Moreover, the activation function set containing all entries in $\vec{g}_1$ and $\vec{g}_2$ is a subset of $\bigcup_{n\in\N}\nestnet_{1}\{n\}$, i.e.,   $\{\varrho_1,\varrho_2\}\subseteq \bigcup_{n\in\N}\nestnet_{1}\{n\}$.

Let $C([0,1]^d)$ denote the set of continuous functions on $[0,1]^d$.
By convention, the modulus of continuity of a continuous function $f\in C([0,1]^d)$ is defined as 
\begin{equation*}
	\omega_f(r)\coloneqq \sup\big\{|f(\bmx)-f(\bmy)|: \|\bmx-\bmy\|_2\le r,\ \bmx,\bmy\in [0,1]^d\big\}\quad \tn{for any $r\ge0$.}
\end{equation*}
Under these settings, we can find a function in $\nestnet_{s}\big\{\calO(n)\big\}$ to approximate $f\in C([0,1]^d)$ with an approximation error $\calO\big(\omega_f(n^{-(s+1)/d})\big)$, as shown in the main theorem below.
\begin{theorem}
	\label{thm:main}
	Given a continuous function $f\in C([0,1]^d)$, for any $n,s\in\N^+$ and $p\in[1,\infty]$,
	there exists $\phi\in \nestnet_{s}\big\{C_{s,d}(n+1)  \big\}$
	such that 
	\begin{equation*}
		\|\phi-f\|_{L^p([0,1]^d)}\le 7\sqrt{d}\,\omega_f\big(n^{-(s+1)/d}\big),
	\end{equation*}
	where $C_{s,d}=10^3 d^2 (s+7)^2$ if $p\in [1,\infty)$ and  $C_{s,d}=10^{d+3} d^2 (s+7)^2$ if $p=\infty$.
\end{theorem}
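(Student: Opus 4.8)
The plan is to reduce the approximation problem to a discrete interpolation (point-fitting) problem and to solve the latter by induction on the height $s$, exploiting the parameter-sharing that is built into the definition of $\nestnet_s$. First I would set $N=\lceil n^{(s+1)/d}\rceil$ and partition $[0,1]^d$ into $N^d$ congruent subcubes $\{Q_{\bmbeta}\}_{\bmbeta\in\{0,\dots,N-1\}^d}$, fixing a representative point in each. The target of the construction is a NestNet $\phi$ that is (approximately) constant on each $Q_{\bmbeta}$ with value $f$ at the representative point; since the diameter of each cube is $\sqrt{d}/N$, this yields an error controlled by $\omega_f(\sqrt{d}\,N^{-1})\le\omega_f(\sqrt{d}\,n^{-(s+1)/d})$, and a routine monotonicity/subadditivity estimate for $\omega_f$ absorbs the $\sqrt d$ into the stated constant $7\sqrt d$. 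For $p<\infty$ I would exclude a thin ``trifling'' neighborhood of the cube boundaries, whose measure is negligible, so that only the interior behavior matters; for $p=\infty$ this shortcut is unavailable and one must control every cube simultaneously, which is exactly where the dimension-dependent constant $C_{s,d}=10^{d+3}d^2(s+7)^2$ (as opposed to $10^3 d^2(s+7)^2$) enters.

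Next I would factor $\phi$ as $\phi=\psi\circ E$, where $E:[0,1]^d\to\Z$ is an encoder that reads off the cube index $\bmbeta(\bmx)$ and packs it into a single integer $\iota(\bmx)=\sum_{i=1}^d \beta_i N^{i-1}\in\{0,\dots,N^d-1\}$, and $\psi$ is a one-dimensional point-fitting map with $\psi(\iota)=f(\text{representative of }Q_{\bmbeta})$. The encoder is assembled from $d$ coordinatewise quantizers (step functions with $N$ levels) followed by a single affine combination; the crux on this side is that a step function with $N=n^{(s+1)/d}$ levels must be realized with only $\calO(n)$ parameters, which again forces the use of the nested structure rather than a naive construction.

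The heart of the argument is therefore the following point-fitting lemma, which I would prove by induction on $s$: for any prescribed values $\theta_0,\dots,\theta_{M-1}$ with $M\approx n^{s+1}$ there is $\psi\in\nestnet_s\{\calO(n)\}$ with $\psi(j)=\theta_j$ for all $j$. For the base case $s=1$ this is the classical bit-extraction fitting of $\approx n^2$ points by a standard ReLU network, consistent with the height-$1$ rate $n^{-2/d}$. For the inductive step I would write the index $j=an^{s}+b$ with $a\in\{0,\dots,n-1\}$ and $b\in\{0,\dots,n^{s}-1\}$, and use the induction hypothesis to obtain a height-$(s-1)$ NestNet $g$ that fits $n^s$ points; crucially $g$ is then installed as a shared activation function in an outer standard-network layer with $\calO(n)$ neurons, so by the accounting rule in Equation~\eqref{eq:number:of:parameters} the many copies of $g$ are paid for only once. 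The outer affine maps select the block index $a$ and rescale, turning the single reusable $g$ together with $\calO(n)$ cheap linear parameters into a fit of $n\cdot n^s=n^{s+1}$ arbitrary values, which raises the height from $s-1$ to $s$ and multiplies the fitting capacity by $n$.

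I expect the inductive step just described to be the main obstacle. The difficulty is not the existence of $g$ but arranging the outer layer so that one shared sub-network, under affine pre- and post-composition together with ReLU selection, reproduces $n$ independent blocks of $n^s$ prescribed values; making this compatible with the exact parameter bookkeeping of Equation~\eqref{eq:number:of:parameters} (so that the final count is $C_{s,d}(n+1)$ rather than something larger) is the delicate part. Once the lemma is in place, the remaining work --- composing $\psi$ with the encoder $E$, verifying $\phi\in\nestnet_s\{C_{s,d}(n+1)\}$ by summing the affine and shared-activation parameter counts, and bounding $\|\phi-f\|_{L^p}$ by $7\sqrt d\,\omega_f(n^{-(s+1)/d})$ via the cube-diameter estimate and (for $p=\infty$) the trifling-region control --- is comparatively routine bookkeeping.
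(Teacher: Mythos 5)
Your global architecture --- coordinatewise quantizers, an affine packing of the cube index into one real number, a one-dimensional point-fitting map, a trifling region argument for $p<\infty$, and an extra dimension-dependent factor for $p=\infty$ --- is exactly the paper's (Theorem~\ref{thm:main:gap} and its proof). The genuine gap is at the heart of your plan: your point-fitting lemma is false as stated. You claim that for \emph{arbitrary} prescribed reals $\theta_0,\dots,\theta_{M-1}$ with $M\approx n^{s+1}$ there is $\psi\in\nestnet_{s}\{\calO(n)\}$ with $\psi(j)=\theta_j$ exactly, and you then apply it with $\theta_j=f(\bmx_\bmbeta)$. But for each of the finitely many architecture shapes with at most $W=\calO(n)$ parameters, the map sending the parameters to the vector $\big(\psi(0),\dots,\psi(M-1)\big)\in\R^{M}$ is piecewise polynomial, so the set of exactly representable value vectors is a finite union of semialgebraic sets of dimension at most $W\ll M$; almost every vector in $\R^M$ is unreachable, and every vector in $\R^M$ does occur as the grid values of some $f\in C([0,1]^d)$. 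This is precisely why the paper's Proposition~\ref{prop:point:fitting} is weaker in two ways: it assumes the slow-variation condition $|y_j-y_{j-1}|\le\varepsilon$, and it delivers only approximate fitting $|\phi(j)-y_j|\le\varepsilon$ (the values are quantized to $\lfloor y_j/\varepsilon\rfloor\varepsilon$ and only the $\{-1,0,1\}$ increments, i.e.\ bits, are reproduced exactly). The final bound $7\sqrt{d}\,\omega_f(n^{-(s+1)/d})$ must then absorb both the cube-diameter error and this fitting error, not the former alone.

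Your inductive mechanism for the parameter sharing also fails, even for bits. You install as the shared activation the height-$(s-1)$ network $g$ that fits \emph{one} block of $n^{s}$ values; but then each of the $n$ outer blocks can only compute an affine reparametrization $\alpha_a\, g(\cdot)+\gamma_a$ of the same fixed value sequence, so the blocks cannot carry independent data --- two affine parameters per block is all the freedom you have. The paper resolves this by making the shared recursive object \emph{data-independent}: Lemma~\ref{lem:bit:extraction} builds a universal bit-extraction function satisfying $g\big(p+\bin 0.\xi_1\cdots\xi_{n^r}\big)=\sum_{j\le p}\xi_j$ for \emph{all} bit strings $\xi$, while the actual data enter through cheap affine parameters: $n$ real numbers, each packing $n^{s}$ increment bits, stored as the values of a one-hidden-layer CPwL network of width $n+1$ (Lemma~\ref{lem:bit:extraction:from:indices}). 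Finally, once you are forced into the slow-variation framework, your packing $\iota=\sum_i \beta_i N^{i-1}$ has wrap-around jumps (consecutive 1D indices corresponding to far-apart cubes), violating $|y_j-y_{j-1}|\le\varepsilon$; the paper repairs this by mapping indices into a set $\calA_1$ with gaps $\calA_2$ and interpolating linearly across the gaps (Step 3.2 of the proof of Theorem~\ref{thm:main:gap}), keeping all consecutive differences below $\omega_f(\sqrt{d}/K)$. Without these three repairs the plan cannot be completed.
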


We remark that the constant $C_{s,d}$ in Theorem~\ref{thm:main} is valid for all $n\in \N^+$. As we shall see later, $C_{s,d}$ can be greatly reduced if one only cares about large $n\in \N^+$.
Generally, it is challenging to simplify the approximation error in Theorem~\ref{thm:main} to make it explicitly depend on $n$
due to the complexity of $\omega_f(\cdot)$. 
However, the approximation error can be simplified to an explicit one depending on $n$ in the case of special target function spaces like H\"older continuous function space. To be exact, if $f$ is a H{\"o}lder continuous function on $[0,1]^d$ of order $\alpha\in(0,1]$ with a H\"older constant $\lambda>0$, then 
\begin{equation*}\label{eqn:Holder}
	|f(\bmx)-f(\bmy)|\leq \lambda \|\bmx-\bmy\|_2^\alpha\quad \tn{for any $\bmx,\bmy\in[0,1]^d$,}
\end{equation*}
implying $\omega_f(r)\le \lambda r^\alpha$ for any $r\ge 0$.
This means we can get an exponentially small approximation error $7\lambda\sqrt{d}\,n^{-(s+1)\alpha/d}$ as shown in Corollary~\ref{coro:main} below. 

\begin{corollary}
	\label{coro:main}
	Suppose $f$ is a H{\"o}lder continuous function on $[0,1]^d$ of order $\alpha\in(0,1]$ with a H\"older constant $\lambda>0$. For any $n,s\in \N^+$ and $p\in[1,\infty]$,
	there exists $\phi\in \nestnet_{s}\big\{C_{s,d} (n+1) \big\}$
	such that 
	\begin{equation*}
		\|\phi-f\|_{L^p([0,1]^d)}\le 7\lambda\sqrt{d}\,n^{-(s+1)\alpha/d},
	\end{equation*}
	where $C_{s,d}=10^3 d^2 (s+7)^2$ if $p\in [1,\infty)$ and  $C_{s,d}=10^{d+3} d^2 (s+7)^2$ if $p=\infty$.
\end{corollary}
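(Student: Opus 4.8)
The plan is to obtain Corollary~\ref{coro:main} directly from Theorem~\ref{thm:main}, so that no new network construction is required; the only genuine content is the elementary fact that H\"older continuity of $f$ controls its modulus of continuity. First I would apply Theorem~\ref{thm:main} with the same parameters $n,s\in\N^+$ and $p\in[1,\infty]$ to produce a function $\phi\in\nestnet_{s}\big\{C_{s,d}(n+1)\big\}$ satisfying
\[
	\|\phi-f\|_{L^p([0,1]^d)}\le 7\sqrt{d}\,\omega_f\big(n^{-(s+1)/d}\big),
\]
with exactly the same constant $C_{s,d}$. This immediately gives the claimed membership of $\phi$ in the required NestNet class, so nothing further is needed on the architecture side.

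Next I would bound the modulus of continuity using the H\"older hypothesis. From the definition of $\omega_f$ together with the assumption $|f(\bmx)-f(\bmy)|\le \lambda\|\bmx-\bmy\|_2^\alpha$, any pair $\bmx,\bmy\in[0,1]^d$ with $\|\bmx-\bmy\|_2\le r$ satisfies $|f(\bmx)-f(\bmy)|\le \lambda r^\alpha$, and taking the supremum over all such pairs yields $\omega_f(r)\le \lambda r^\alpha$ for every $r\ge0$. Specializing to $r=n^{-(s+1)/d}$ gives
\[
	\omega_f\big(n^{-(s+1)/d}\big)\le \lambda\, n^{-(s+1)\alpha/d},
\]
using the exponent identity $\big(n^{-(s+1)/d}\big)^\alpha=n^{-(s+1)\alpha/d}$. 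Substituting this estimate into the bound from Theorem~\ref{thm:main} produces $\|\phi-f\|_{L^p([0,1]^d)}\le 7\lambda\sqrt{d}\, n^{-(s+1)\alpha/d}$, which is precisely the assertion.

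Since both steps are routine, I do not expect any real obstacle here: the entire difficulty has already been absorbed into Theorem~\ref{thm:main}, and this corollary is essentially a two-line deduction. The only points requiring a moment of care are verifying that the H\"older inequality passes cleanly to the supremum defining $\omega_f$ and that the exponent arithmetic is carried out correctly; the constant $C_{s,d}$ and its case split on $p$ are simply inherited unchanged from the theorem.
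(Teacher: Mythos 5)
Your proposal is correct and matches the paper's own argument exactly: the paper derives Corollary~\ref{coro:main} by applying Theorem~\ref{thm:main} and using the bound $\omega_f(r)\le \lambda r^\alpha$ implied by H\"older continuity, with $r=n^{-(s+1)/d}$, inheriting $C_{s,d}$ unchanged. Nothing further is needed.
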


In Corollary~\ref{coro:main}, if $\alpha=1$, i.e., $f$ is a Lipschitz continuous function with a Lipschitz constant $\lambda>0$, then the approximation error  can be further simplified to $7\lambda\sqrt{d}\,n^{-(s+1)/d}$. See Table~\ref{tab:error:comparison} for the comparison of the approximation error of $1$-Lipschitz continuous functions on $[0,1]^d$ approximated by ReLU NestNets and standard ReLU networks.

% as shown in the following corollary.
%\begin{corollary}
%	\label{coro:main:lipschitz}
%	Suppose $f$ is a H{\"o}lder continuous function on $[0,1]^d$ of order $\alpha\in(0,1]$ with a H\"older constant $\lambda>0$. For any $n\in \N^+$,  $s\in\N$, and $p\in[0,\infty]$,
%	there exists $\phi\in \nestnet_{s}\big\{C_{s,d} (n+1) \big\}$
%	such that 
%	\begin{equation*}
%		\|\phi-f\|_{L^p([0,1]^d)}\le 7\lambda\sqrt{d}\,n^{-(s+2)\alpha/d},
%	\end{equation*}
%	where $C_{s,d}=10^4 d^2 (s+2)^4$ if $p\in [1,\infty)$ and  $C_{s,d}=10^{d+4} d^2 (s+2)^4$ if $p=\infty$.
%\end{corollary}

%%%%%%%%%%%%%%%%%%%%%%%%%%%%
\subsection{Sketch of proving Theorem~\ref{thm:main}}
% \subsection{Sketch of proving Theorem~\ref{thm:main}}

We will discuss how to prove Theorem~\ref{thm:main}.
Given a target function $f\in C([0,1]^d)$, the key point is to
construct an almost piecewise constant function realized by a ReLU NestNet
to approximate $f$ well  except for a small region. Then we can get the desired result by dealing with the approximation in this small region. We divide the sketch of proving Theorem~\ref{thm:main} into three main steps.
\begin{enumerate}[1.]
	\item First, we divide $[0,1]^d$  into a union of  cubes $\{Q_\bmbeta\}_{\bmbeta\in \{0,1,\cdots,K-1\}^d}$ and a small region $\Omega$ with $K=\calO(n^{(s+1)/d})$. Each $Q_\bmbeta$ is associated with a representative $\bm{x}_\bmbeta\in Q_\bmbeta$ for each vector index  $\bmbeta$.  See Figure~\ref{fig:proof:sketch:thm:main} for an illustration for $K=4$ and $d=2$. 
	
	\item Next, we design a vector-valued function $\bmPhi_1(\bmx)$ to map the whole cube $Q_\bmbeta$ to its index $\bmbeta$ for each $\bmbeta$.
	Here, $\bmPhi_1$ can be defined/constructed via \[\bmPhi_1(\bmx)=\big[\phi_1(x_1),\,\phi_1(x_2),\,\cdots,\,\phi_1(x_d)\big]^T,\]
	where each one-dimensional function $\phi_1$  is a step function outside a small region.  We can efficiently construct ReLU NestNets with the desired size to approximate such an almost step function $\phi_1$ with sufficiently many ``steps'' by using the composition architecture of ReLU NestNets.
	See the appendix for the detailed construction.
	
	\item  Finally, we need to construct a function $\phi_2$ realized by a ReLU NestNet to map $\bmbeta$ approximately to $f(\bmx_\bmbeta)$ for each $\bmbeta\in \{0,1,\cdots,K-1\}^d$. Then we have 
	\begin{equation*}
		\phi_2\circ\bmPhi_1(\bmx)=\phi_2(\bmbeta)\approx f(\bmx_\bmbeta)\approx f(\bmx)
		\quad \tn{for any $\bmx\in Q_\bmbeta$ and each $\bmbeta$,}
	\end{equation*}
	 implying 
	\begin{equation*}
		\phi\coloneqq \phi_2\circ\bmPhi_1 \approx  f \quad \tn{on $[0,1]^d\backslash\Omega$}.
	\end{equation*} 
	Then, we can get a good approximation on $[0,1]^d$ by using Lemma~$3.4$ of our previous paper \cite{shijun:3} to deal with the approximation inside $\Omega$.
	We remark that, in the construction of $\phi_2:\R^d\to\R$, we only need to care about the values of $\phi_2$ at a set of $K^d$ points $\{0,1,\cdots,K-1\}^d$. As we shall see later, this is the key point to ease the design of a ReLU NestNet with the desired size to realize $\phi_2$.
% 	with the desired size. 
\end{enumerate}

\begin{figure}[htbp!]       
	\centering
	\includegraphics[width=0.72\textwidth]{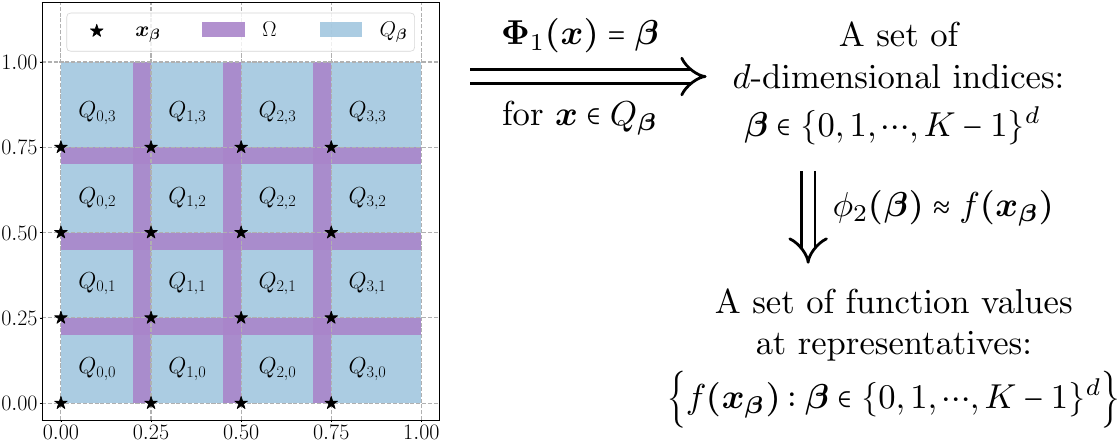}
	\caption{An illustration of the ideas of constructing  $\phi=\phi_2\circ\bmPhi_1$ to approximate $f$ for $K=4$ and $d=2$. 
		Note that $\phi\approx f$ outside $\Omega$
		  since $\phi(\bmx)=\phi_2\circ\bmPhi_1(\bmx)=\phi_2(\bmbeta)\approx f(\bmx_\bmbeta)\approx f(\bmx)$ for any $\bmx\in Q_\bmbeta$ and each $\bmbeta\in\{0,1,\cdots,K-1\}^d$.}
	\label{fig:proof:sketch:thm:main}
\end{figure}

See Figure~\ref{fig:proof:sketch:thm:main} for an illustration of the above steps.
Observe that in Figure~\ref{fig:proof:sketch:thm:main}, we have 
\begin{equation*}
	\phi(\bmx)=\phi_2\circ\bmPhi_1(\bmx)=\phi_2(\bmbeta)\mathop{\approx}\limits^{\scrE_1} f(\bmx_\bmbeta)\mathop{\approx}\limits^{\scrE_2} f(\bmx)
\end{equation*}
for any $\bmx\in Q_\bmbeta$ and each $\bmbeta\in \{0,1,\cdots,K-1\}^d$. That means $\phi-f$ is bounded by $\scrE_1+\scrE_2$ on $[0,1]^d\backslash \Omega$. 
% Clearly, $\scrE_2\le \omega_f(\sqrt{d}/K)$ since $\|\bmx-\bmx_\bmbeta\|_2\le \sqrt{d}/K$ 
For any $\bmx\in Q_\bmbeta$ and each $\bmbeta$,
we have 
\begin{equation*}
    \|\bmx_\bmbeta-\bmx\|_2\le \sqrt{d}/K\quad \Longrightarrow \quad |f(\bmx_\bmbeta)-f(\bmx)|\le \omega_f(\sqrt{d}/K)\quad \Longrightarrow \quad \scrE_2\le \omega_f(\sqrt{d}/K).
\end{equation*}
The upper bound of $\scrE_1$ is determined by the construction of $\phi_2:\R^d\to\R$. As stated previously, we only need to care about the values of $\phi_2$ at a set of $K^d$ points $\{0,1,\cdots,K-1\}^d\subseteq \R^d$, which gives us much freedom to control $\scrE_1$.
As we shall see later, $\scrE_1$ can be bounded by $\calO\big(\omega_f(\sqrt{d}/K)\big)$. Therefore, $\phi-f$ is controlled by $\calO\big(\omega_f(\sqrt{d}/K)\big)$ outside $\Omega$, from which we deduce the desired approximation error on $[0,1]^d\backslash\Omega$ since $K=\calO(n^{-(s+1)/d})$. Finally, by using Lemma~$3.4$ of our previous paper \cite{shijun:3} to deal with the approximation inside $\Omega$, we can get the desired approximation error on $[0,1]^d$.

%%%%%%%%%%%%%%%%%
\subsection{Related work}
\label{sec:related:work}

% We will connect our results to
% related existing ones for a deeper understanding.
% We 
We first compare  our results with existing ones from an approximation perspective. Next, we discuss the parameter-sharing schemes of neural networks.
Finally, 
we connect our NestNet architecture to existing trainable activation functions.  
\subsubsection*{Discussion from an approximation perspective}

The study of  the approximation power of deep neural networks has become an active topic in recent years. 
This topic has been extensively studied from many perspectives, e.g., in terms of combinatorics \cite{NIPS2014_5422}, topology \cite{ 6697897},
 information theory \cite{PETERSEN2018296},
 fat-shattering dimension \cite{Kearns,Anthony:2009}, Vapnik-Chervonenkis (VC) dimension \cite{Bartlett98almostlinear,Sakurai,pmlr-v65-harvey17a},   classical approximation theory \cite{Cybenko1989ApproximationBS,HORNIK1989359,barron1993,yarotsky18a,yarotsky2017,doi:10.1137/18M118709X,ZHOU2019,10.3389/fams.2018.00014,2019arXiv190501208G,2019arXiv190207896G,suzuki2018adaptivity,Ryumei,Wenjing,Bao2019ApproximationAO,2019arXiv191210382L,MO,shijun:1,shijun:2,shijun:3,shijun:thesis,shijun:intrinsic:parameters,shijun:arbitrary:error:with:fixed:size}, etc. 
 To the best of our knowledge, 
 the study of neural network approximation has two main stages: shallow (one-hidden-layer) networks and deep networks.
 
 In the early works of neural network approximation, 
 the approximation power of shallow networks is investigated.
 In particular,
 the universal approximation theorem  \cite{Cybenko1989ApproximationBS,HORNIK1991251,HORNIK1989359}, without approximation error estimate, showed that  a sufficiently large neural network can approximate a target function in a certain function space arbitrarily well. 
For one-hidden-layer neural networks of width $n$ and sufficiently smooth functions, an asymptotic approximation error $\calO(n^{-1/2})$ in the $L^2$-norm is proved in \cite{barron1993,barron2018approximation}, leveraging an idea that is similar to Monte Carlo sampling for high-dimensional integrals.  

Recently, a large number of works  focus on the study of deep neural networks.
It is shown in \cite{shijun:2,yarotsky18a,shijun:thesis} that the  optimal approximation error is $\calO(n^{-2/d})$ by using ReLU networks with $n$ parameters to approximate $1$-Lipschitz continuous functions on $[0,1]^d$. 
This optimal approximation error follows a natural question: How can we get  a better approximation error?
Generally, there are two ideas to get better errors. The first one is to consider smaller function spaces, e.g., smooth functions \cite{shijun:3,yarotsky:2019:06} and band-limited functions \cite{bandlimit}. The other one is to introduce new networks, e.g.,  Floor-ReLU networks \cite{shijun:4}, Floor-Exponential-Step (FLES) networks \cite{shijun:5}, and  (Sin, ReLU, $2^x$)-activated networks \cite{jiao2021deep}.

This paper proposes a three-dimensional neural network architecture by introducing one more dimension called height beyond width and depth. 	As shown in Theorem~\ref{thm:main} and Corollary~\ref{coro:main}, neural networks with three-dimensional architectures are significantly more expressive than
the ones with two-dimensional architectures. We will conduct experiments to explore the numerical properties of NestNets in Section~\ref{sec:experiments}.
% To be exact,
% we prove by construction that height-$s$ ReLU NestNets with $\mathcal{O}(n)$ parameters can approximate $1$-Lipschitz
% continuous functions on $[0,1]^d$ with an error $\mathcal{O}(n^{-(s+1)/d})$, which is better than the optimal error $\mathcal{O}(n^{-2/d})$ of standard ReLU networks with $\mathcal{O}(n)$ parameters. Such a result can be generalized to generic continuous functions in Theorem~\ref{thm:main}, i.e., the approximation error is $\calO\big(\omega_f(n^{-(s+1)/d})\big)$ for any $f\in C([0,1]^d)$.
%  Therefore, we overcome the curse of  dimensionality if $s+1\ge d$ and  the variation of $\omega_f(r)$ as $r\to 0$ is moderate (e.g., $\omega_f(r) \lesssim r^\alpha$ for H\"older continuous functions).

%First, we connect our results to transfer learning. Next, we discuss the error analysis of deep neural networks to reveal
%the motivation for reducing the number of parameters that need to be trained. Finally, we discuss related work from an approximation perspective.

\subsubsection*{Discussion from a parameter-sharing perspective}
As discussed previously, our NestNet architecture can be regarded as a sufficiently large standard network architecture with a specific parameter-sharing scheme.
Parameter-sharing schemes are used in neural networks to control the overall number of parameters for reducing memory and communication costs.
There are two common parameter-sharing schemes for a neural network.
% To the best of our knowledge, existing parameter-sharing schemes of a neural network can be roughly divided into two cases. 
The first scheme is to share parameters in the same layer. A typical network example with this scheme is the convolutional neural network (CNN). In CNN architectures, filters in a CNN layer are shared for all channels, which means the parameters in the filters are shared. 
 The second scheme is to share parameters  across different layers of networks, e.g., recurrent neural networks. 
 
 In the NestNet architecture, we share parameters via repetitions of sub-network activation functions. Both of parameter-sharing schemes discussed just above are used in the NestNet architecture. The nested architecture of NestNets gives us much freedom to determine how many parameters to share.
Beyond parameter-sharing schemes for a neural network, there are also parameter-sharing schemes among different neural networks or models, especially for multi-task learning.
One may refer to
\cite{savarese2018learning,2020arXiv200902386W,2006.10598,NEURIPS2020_42cd63cb,9859706,9879069} for more discussion on parameter sharing in neural networks.

%%%%%%%%%%%%%%%%%%%%%
%\subsection{Further discussion}
%\label{sec:further:discussion}

\subsubsection*{Connection to trainable activation functions}

The key idea of trainable activation functions is to add a small number of trainable parameters to existing activation functions. 
Let us present several existing trainable activation functions as follows. 
	A ReLU-like function is introduced in \cite{7410480} by modifying the negative part of ReLU using a trainable parameter $\alpha$, i.e.,
	the parametric ReLU (PReLU) is defined as
$    	\tn{PReLU}(x)\coloneqq 
	\left\{\tn{\footnotesize$
	\begin{array}{@{}l@{}l}
    	x &\   \   \tn{if}\  x\ge 0\\
    	\alpha x &\   \   \tn{if}\  x<0.
	\end{array}$}
		\right.$
 A variant of
	ELU unit  is introduced in \cite{Trottier2017ParametricEL} by  adding two trainable parameters $\beta,\gamma>0$, i.e., the parametric ELU (PELU) is given by 
    $    		\tn{PELU}(x)\coloneqq 
		\left\{\tn{\footnotesize$
		\begin{array}{@{}l@{}l}
			{\beta}/{\gamma} &\   \     \tn{if}\  x\ge 0\\
			\beta(\exp({x}/{\gamma})-1) x &\   \     \tn{if}\  x<0.
			\end{array}$}
			\right.$
	 Authors in \cite{8546022}  propose  a type of flexible ReLU (FReLU), which is defined via
		$
			\tn{FReLU}(x)\coloneqq 
				\tn{ReLU}(x+\alpha)+\beta,
		$
	where $\alpha$ and $\beta$ are two trainable parameters.
One may refer to \cite{APICELLA202114} for a survey of modern trainable activation functions. 
To the best of our knowledge, most existing trainable activation functions 
can be regarded as parametric variants of the original activation functions. That is, they are attained via parameterizing the original activation functions with a small number of (typically $1$ or $2$) 
trainable parameters.

By contrast, activation functions in our NestNets are much more flexible. They can be (realized by) either complicated or simple sub-NestNets.
That is, we can freely determine the number of parameters in the activation functions of NestNets. In other words, in NestNets, we can randomly distribute the parameters in the affine linear maps and activation functions. 
In short, compared to the networks with existing trainable activation functions, our NestNets are more flexible and have much more freedom in the choice of activation functions.

%(or sub-NestNets) to be activation functions. That means 
%are given by a (simple) mathematical expression with several trainable parameters based on the original activation functions. In other words, they

%NestNet
%
%In our NestNet architecture, we can randomly distribute the parameters in the affine linear maps and activation functions. However, in the existing trainable activation functions, the number of parameters in the activation is given.
%To the best of our knowledge, most trainable activation functions are given by a (simple) mathematical expression with several trainable parameters. 

%As contrast, our NestNets allow ``complicated'' sub-networks (or sub-NestNets) to be activation functions. That means our NestNets can randomly distribute the parameters in the affine linear maps and activation functions. 
%Most existing trainable activation functions can be regarded as special cases of our NestNet architecture. 
%In short, our NestNets are more flexible and have much more freedom in the choice of activation functions, compared to the networks with existing trainable activation functions. 

%%%%%%%%%%%%%%%%%%%
%\subsubsection*{Extension of main results}

%%%%%%%%%%%%%%%%%%%%%%%%%%%%%%%%%%%%%%%
%%%%%%%%% experiments
\section{Experimentation}
\label{sec:experiments}

In this section, we will conduct experiments as a proof of concept to explore the numerical properties of ReLU NestNets. It is challenging to tune the hyper-parameters of large NestNets due to their nested architectures. Thus, our experimentation focuses on relatively small NestNets of height $2$ and we introduce a simple sub-network activation function $\varrho$, which
 is realized by 
a trainable one-hidden-layer ReLU network of width $3$. To be exact,
$\varrho$ is given by 
\begin{equation}\label{eq:def:varrho}
	\varrho(x)=\bmw_1^T\cdot(x\bmw_0+\bmb_0)+b_1\quad \tn{for any $x\in\R$,}
\end{equation}
where $\bmw_0,\bmw_1,\bmb_0\in\R^3$ and $b_1\in \R$ are trainable parameters. There are $10$ parameters in $\varrho$.
The initial settings for $\varrho$ in our experiments are $\bmw_0=(1,1,1)$, $\bmw_1=(1,1,-1)$, $\bmb_0=(-0.2,-0.1,0.0)$, and $b_1=0$.
% given by As for the activation function, we adopt $\sigma$ for the standard network and $\varrho$ for the NesNet, where $\sigma$ is ReLU ($\max\{0,x\}$) and 
%  See Figure~\ref{fig:varrho} for illustrations.
We believe that NestNets can achieve good results in some real-world applications if proper optimization algorithms are developed for NestNets.
In this paper, we only consider two classification problems: a synthetic classification problem based on the Archimedean spiral in Section~\ref{sec:experiment:spiral} and 
an image classification problem corresponding to a standard benchmark dataset
Fashion-MNIST
\cite{2017arXiv170807747X} in Section~\ref{sec:experiment:fmnist}.
We remark that a classification function can be
continuously extended to $\R^d$ if each class of samples are located in a bounded closed subset of $\R^d$ and these subsets are pairwise disjoint. That means we can apply our theory to classification problems.
% two classification problems mentioned above.

% In this section
% we will conduct a simple experiment to explore the numerical advantages of the super-approximation power of ReLU NestNets. 
% To this end, we first discuss the experiment setup in Section~\ref{sec:experiment:setup} and then present the experiment results in Section~\ref{sec:experiment:results}.

\subsection{Archimedean spiral}
\label{sec:experiment:spiral}

%Since our emphasis is the approximation error,
%we need to reduce the optimization and optimization errors in our experiments. To this end, 
We will design a binary classification experiment by constructing two disjoint sets based on the Archimedean spiral, which can be described by the equation
$r=a+b\theta$
in polar coordinates $(r,\theta)$ for given $a,b\in\R$.
% To this end, we will design two disjoint sets.
% In other words, 
% we will design two sets $\calS_0$ and $\calS_1$ in $[0,1]^2$ based on the Archimedean spiral, where the label for $\calS_i$ is $i$ for $i=0,1$.
% %Next, let us discuss the construction of two classes of sets in detail.
Let us first define two curves 
(Archimedean spirals)
 as follows:
\begin{equation*}
	\widetilde{\calC}_i\coloneqq \Big\{(x,y):x=r_i\cos\theta,\  
	y=r_i\sin\theta,\   r_i=a_i+b_i\theta,\    
	\theta\in [0,s\pi]
	\Big\},
\end{equation*}
for $i=0,1$, where $a_0=0$, $a_1=1$, $b_0=b_1={1}/{\pi}$, and $s=30$.
To simplify the discussion below,
we normalize $\widetilde{\calC}_i$ as $\calC_i\subseteq [0,1]^2$, where $\calC_i$ is defined by
\begin{equation*}
	{\calC}_i\coloneqq \Big\{(x,y):x=\tfrac{\tildex}{2(s+2)}+\tfrac12,\    
	y=\tfrac{\tildey}{2(s+2)}+\tfrac12,\    
	(\tildex,\tildey)\in \widetilde{\calC}_i
	\Big\},
\end{equation*}
for $i=0,1$. Then, we can define the two desired sets as follows:
\begin{equation*}
	{\calS}_i\coloneqq \Big\{(u,v):
	\sqrt{(u-x)^2+(v-y)^2}\le \varepsilon,\   
	(x,y)\in {\calC}_i
	\Big\},
\end{equation*}
for $i=0,1$, where $\varepsilon=0.005$ in our experiments.
See an illustration for $\calS_0$ and $\calS_1$ in Figure~\ref{fig:spiral}.

\begin{figure}[htbp!]
    \centering
    \begin{minipage}{0.45\textwidth}
    % 	\vskip 0.1in
    	\centering
    	\includegraphics[height=0.58\textwidth]{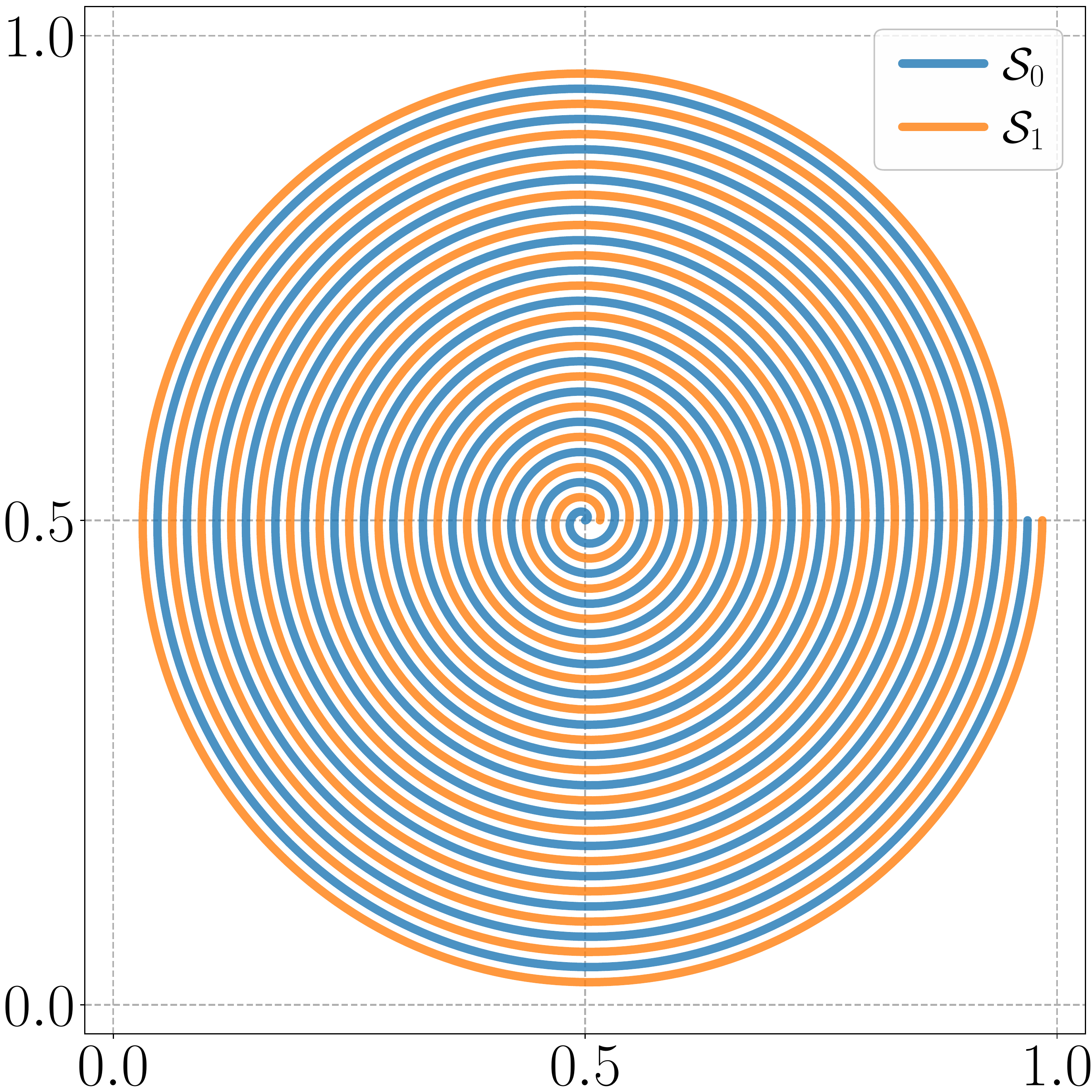}
    	\caption{An illustration for $\calS_0$ and $\calS_1$.
    	}
    	\label{fig:spiral}
    % 	\vskip 0.02in
    \end{minipage}
    \hspace{1pt}
    \begin{minipage}{0.51233\textwidth}
    % 	\vskip 0.1in
    	\centering
    	\includegraphics[height=0.5094362\textwidth]{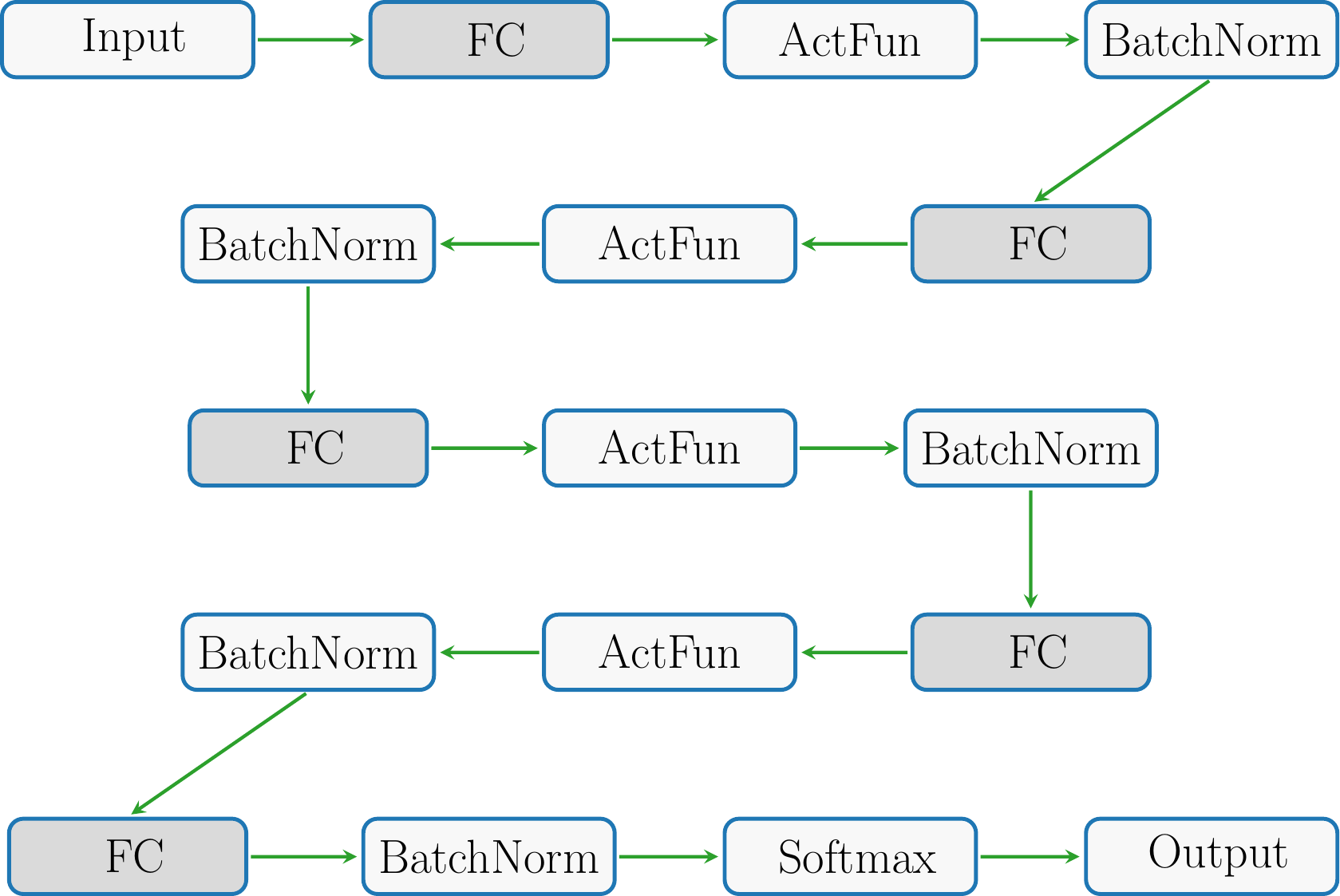}
    	\caption{A network architecture illustration.
    	}
    	\label{fig:arc:experiments}
    % 	\vskip 0.02in
    \end{minipage}
\end{figure}

To explore the numerical performance of NestNets, we design
NestNets and standard networks to classify samples in $\calS_0\bigcup \calS_1$.
We adopt four-hidden-layer fully connected network architecture of width $20$, $35$, or $50$. 
To make the optimization more stable,
we add the layers of batch normalization \cite{10.5555/3045118.3045167}. 
See Figure~\ref{fig:arc:experiments} for an illustration of the full network architecture. In Figure~\ref{fig:arc:experiments}, FC and ActFun are short of fully connected layer and activation function, respectively. ActFun is ReLU for standard networks, while for NestNets, ActFun is the learnable sub-network activation function $\varrho$ given in Equation~\eqref{eq:def:varrho}.

Before presenting the experiment results, let us present the hyper-parameters for training the networks mentioned above. For each $i\in\{0,1\}$, we randomly choose $3\times 10^5$ training samples and $3\times 10^4$ test samples in $\calS_i$ with label $i$. Then, we use these $6\times 10^5$ training samples to train the networks and use these $6\times 10^4$ test samples to compute the test accuracy.
We use the cross-entropy loss function to evaluate the loss between the networks and the target classification function. The number of epochs and the batch size are set to $500$ and $512$, respectively.
We adopt RAdam \cite{Liu2020On} as the
optimization method. 
% For the $i$-th epoch, the learning rate for the parameters in the sub-network activation function $\varrho$ is $0.2\times0.002\times 0.9^{i-1}$ and the learning rate for other parameters is $0.002\times0.9^{i-1}$.
In epochs $5(i-1)+1$ to $5i$ for $i=1,2,\cdots,100$, the learning rate is $0.2\times0.002\times0.9^{i-1}$ for the parameters in $\varrho$ and $0.002\times0.9^{i-1}$ for all other parameters. 
We remark that all training (test) samples are standardized before training, i.e., we rescale the samples to have a mean of $0$ and a standard deviation of $1$.

Finally, let us present the experiment results to compare the numerical performances of NestNets and standard networks.
% We will compare the test accuracies for NestNets and standard networks. 
We adopt the average of test accuracies in the last $100$ epochs as the target test accuracy.
As we can see from Table~\ref{tab:accuracy:comparison} and Figure~\ref{fig:accuracy:comparison}, by adding $10$ more parameters (stored in $\varrho$), NestNets achieve much better test accuracies   than standard networks though slightly more training time is required. In an ``unfair'' comparison, the test accuracy attained by
the NestNet with $1.4\times 10^3$ parameters is still better than that of the standard network with $7.9\times 10^3$ parameters. This numerically verifies that the NestNet has much better approximation power than the standard network.

\begin{table}[htbp!] 
%	\vskip 0.1in 
% 	\def\arraystretch{1.01} 
	\caption{Test accuracy comparison.} 
	\label{tab:accuracy:comparison}
%	\vskip 0.05in
\vskip 0.075in
	\centering  
	\resizebox{0.78\textwidth}{!}{ 
		\begin{tabular}{ccccccc} 
			\toprule% \toprule[1.2pt]  
			    &      width & depth & \#parameters & activation function &training time &   test accuracy  \\
			
			\midrule
			\rowcolor{mygray}
			standard network    & \multirow{1}{*}{$20$} & \multirow{1}{*}{$4$}& \multirow{1}{*}{$1362$} & ReLU  & $\approx 2532$ s  &     0.738290 \\
			
			NestNet & \multirow{1}{*}{$20$} & \multirow{1}{*}{$4$}& \multirow{1}{*}{$1362+10$} & sub-network ($\varrho$)& $\approx 4016$ s  &   0.873631 \\
			
			\midrule
			\rowcolor{mygray}
			standard network  &  \multirow{1}{*}{$35$} & \multirow{1}{*}{$4$}  &   \multirow{1}{*}{$3957$} & ReLU  & $\approx 2595$ s   & 0.816048 \\

			NestNet &  \multirow{1}{*}{$35$} & \multirow{1}{*}{$4$}  &   \multirow{1}{*}{$3957+10$} & sub-network ($\varrho$)& $\approx 4104$ s  &   0.995962 \\
			
			\midrule
            \rowcolor{mygray}
			standard network  &  \multirow{1}{*}{$50$} & \multirow{1}{*}{$4$} &  \multirow{1}{*}{$7902$} &  ReLU  & $\approx 2642$ s   &  0.866118  \\

			NestNet &\multirow{1}{*}{$50$} & \multirow{1}{*}{$4$} &  \multirow{1}{*}{$7902+10$}& sub-network ($\varrho$)& $\approx 4218$ s  &   0.999984 \\
			
%			\midrule
			
			\bottomrule% \bottomrule[1.2pt] 
		\end{tabular} 
	}%%% \resizebox
% 	\vskip 0.05in
\end{table} 

\begin{figure}[htbp!]
% 	\vskip 0.1in
	\centering
	\begin{subfigure}[b]{0.325\textwidth}
		\centering            \includegraphics[width=0.95\textwidth]{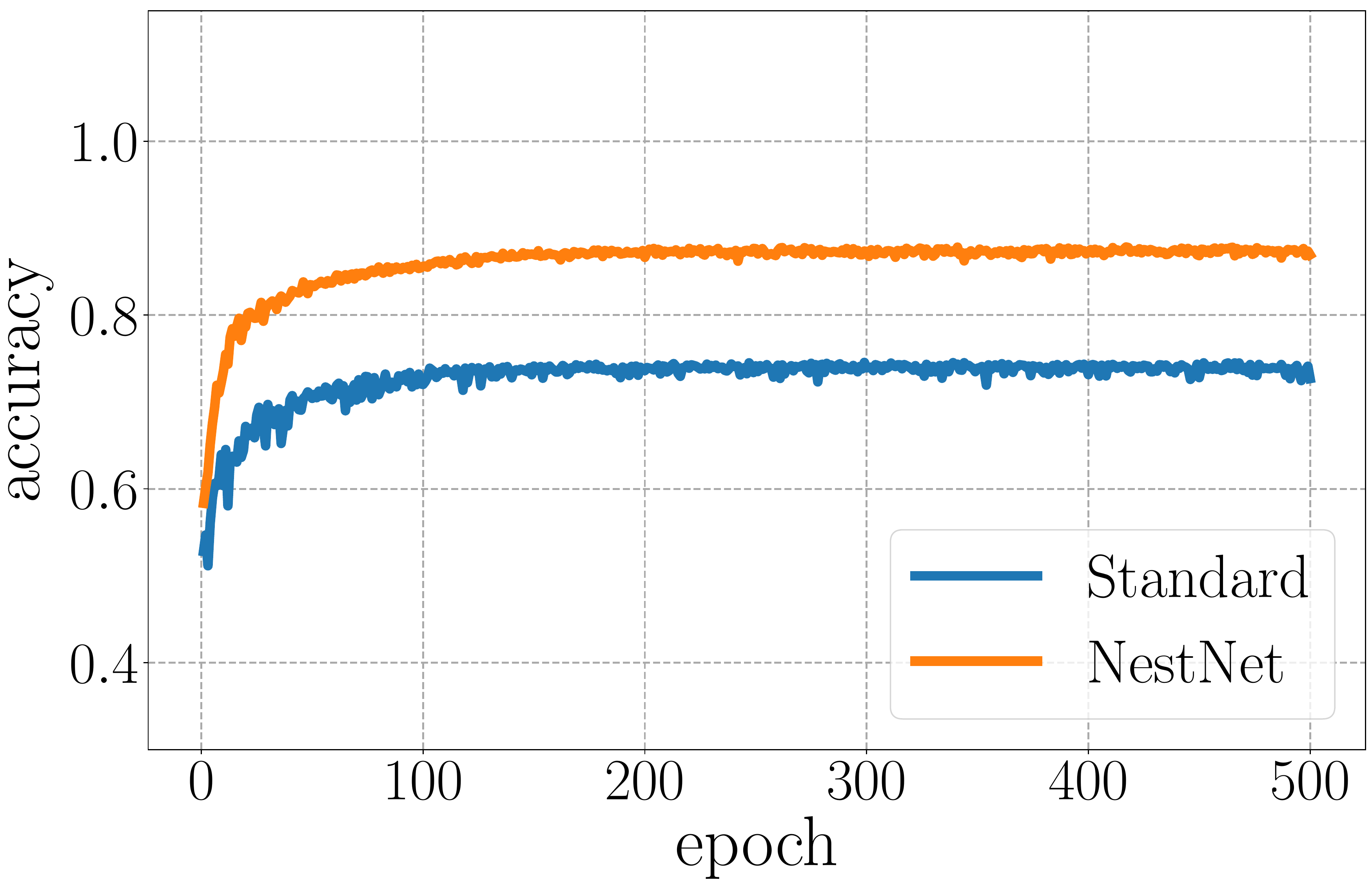}
		\subcaption{Width\,=\,20.}
	\end{subfigure}
	%	\hspace{8pt}
	\begin{subfigure}[b]{0.325\textwidth}
		\centering           \includegraphics[width=0.95\textwidth]{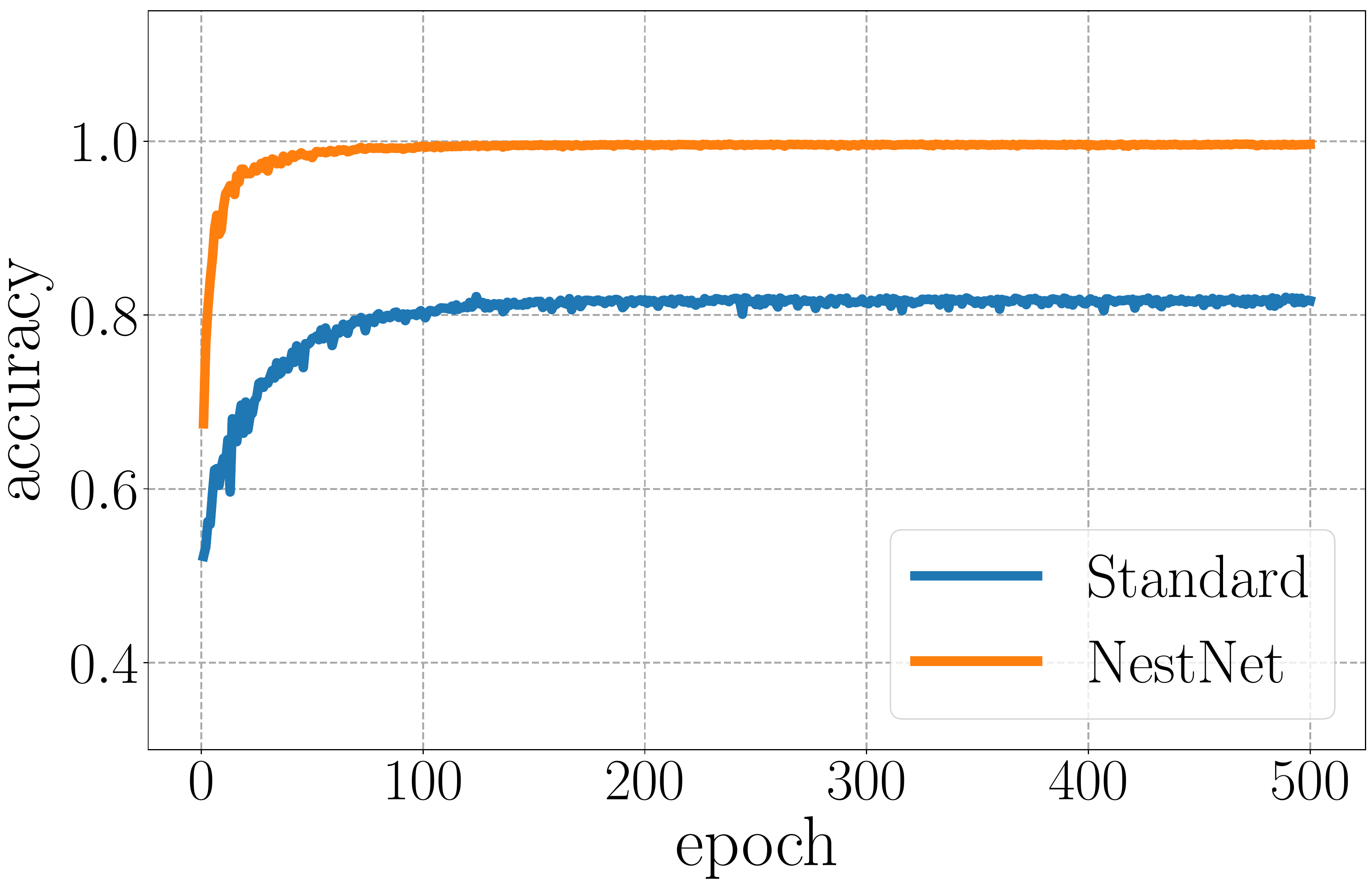}
		\subcaption{Width\,=\,35.}
	\end{subfigure}
	\begin{subfigure}[b]{0.325\textwidth}
		\centering           \includegraphics[width=0.95\textwidth]{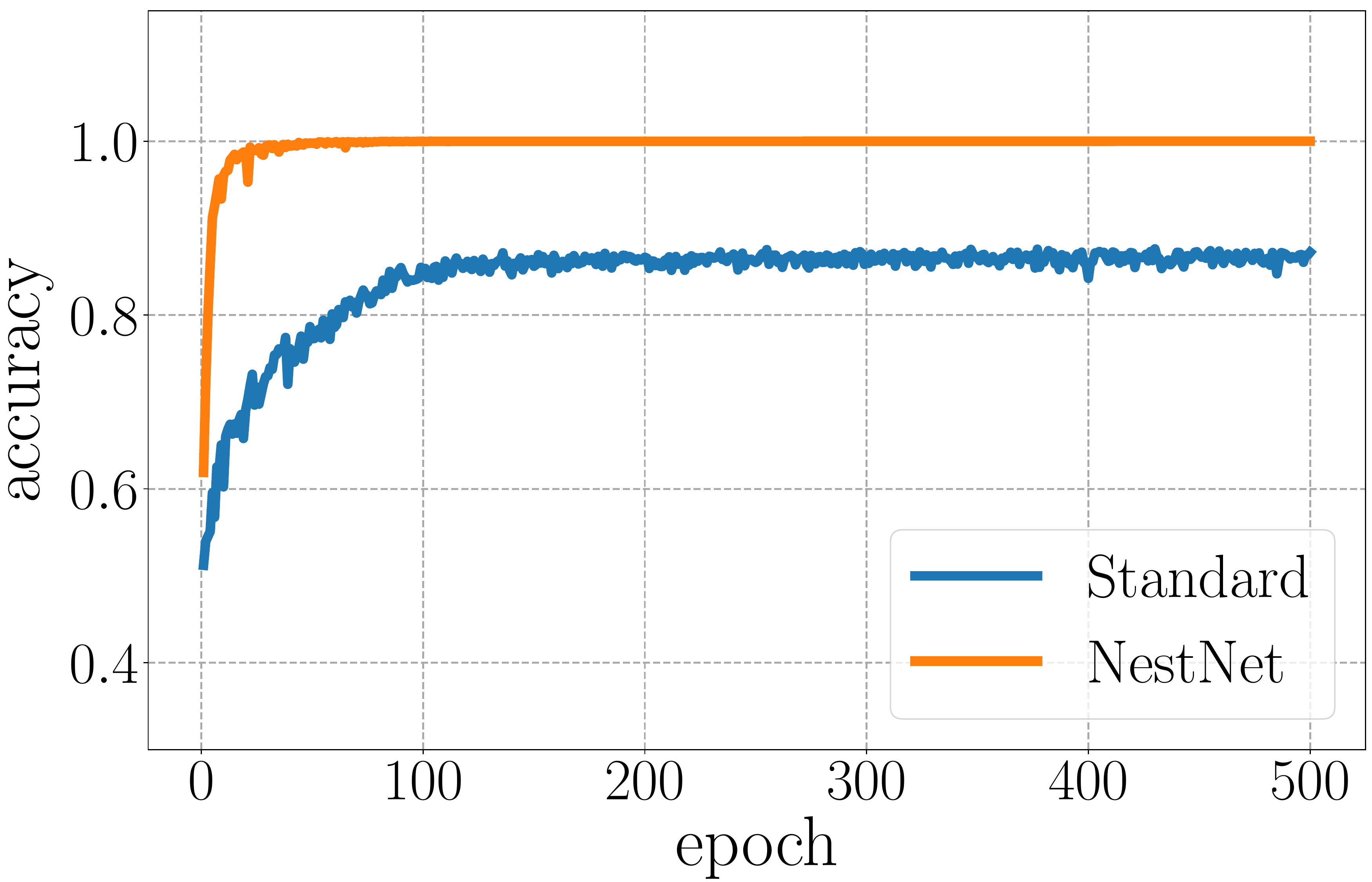}
		\subcaption{Width\,=\,50.}
	\end{subfigure}
	\caption{Test accuracy over epochs.}
	\label{fig:accuracy:comparison}
% 	\vskip 0.02in
\end{figure}

%%%%%%%%%%%%%%%%%%%%%%%%%%%%%%55
\subsection{Fashion-MNIST}
\label{sec:experiment:fmnist}

We will design convolutional neural network (CNN) architectures activated by ReLU or the sub-network activation function $\varrho$ given in Equation~\eqref{eq:def:varrho} to classify image samples in Fashion-MNIST \cite{2017arXiv170807747X}.
This dataset 
consists of a training set of $6\times 10^4$ samples and a test set of $10^4$ samples. Each sample is a $28\times28$ grayscale image, associated with a label from 10 classes. 
To compare the numerical performances of NestNets and standard networks, we design a standard CNN architecture and a NestNet architecture that is constructed by replacing a few activation functions of a standard CNN network by the sub-network activation function $\varrho$.
For simplicity, we denote the standard CNN and the NestNet as CNN1 and CNN2. 
To make the optimization more stable, we add the layers of
 dropout \cite{DBLP:journals/corr/abs-1207-0580,JMLR:v15:srivastava14a} and batch normalization \cite{10.5555/3045118.3045167}.
See illustrations of CNN1 and CNN2 in Figure~\ref{fig:CNN:arc}.
We present more details of them in Table~\ref{tab:CNN:arc}. 

\begin{figure}[htbp!]
%  	\vskip 0.1in
	\centering
	\begin{subfigure}[b]{0.49\textwidth}
		\centering            \includegraphics[width=0.985\textwidth]{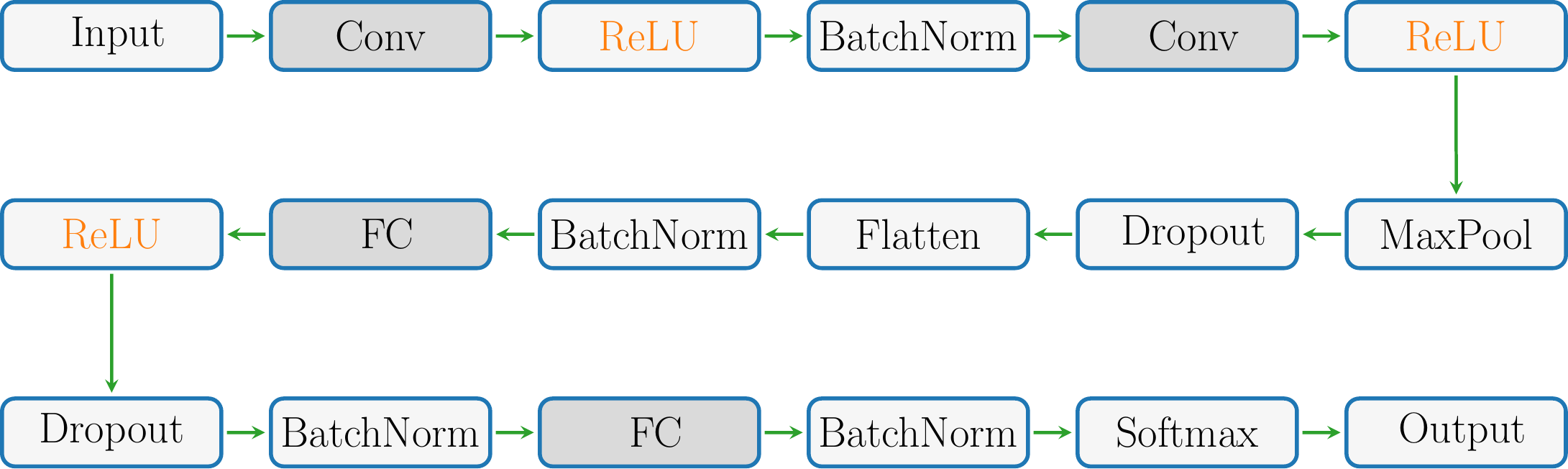}
		\vspace*{3pt}
		\subcaption{CNN1.}
	\end{subfigure}
		\hspace{3pt}
	\begin{subfigure}[b]{0.49\textwidth}
		\centering           \includegraphics[width=0.985\textwidth]{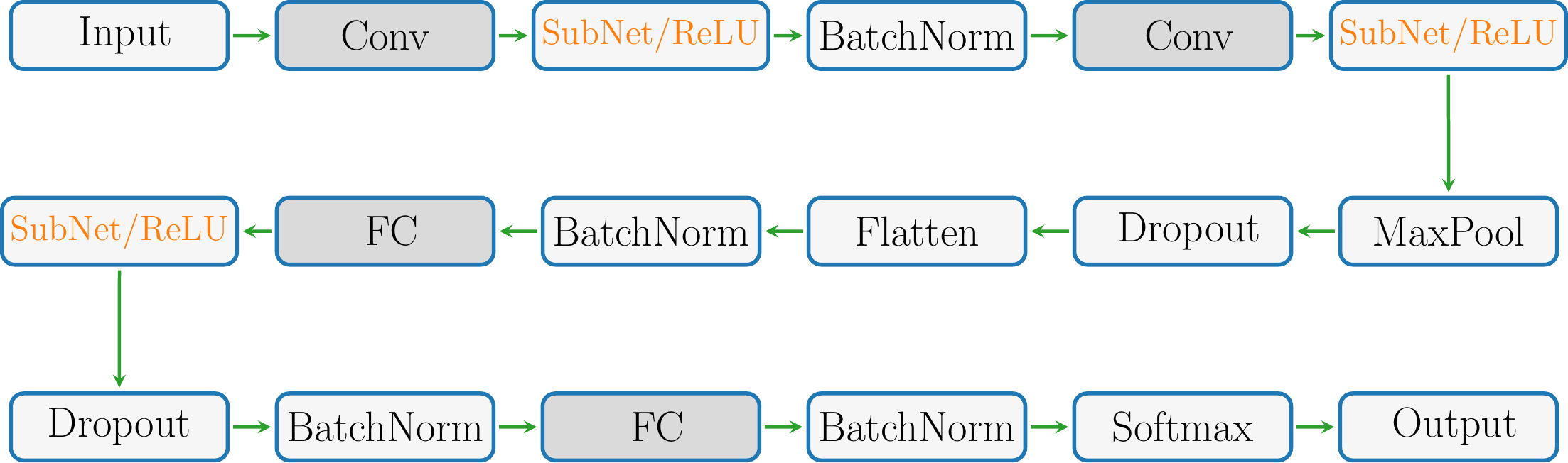}
		\vspace*{3pt}
		\subcaption{CNN2.}
	\end{subfigure}
	\caption{Illustrations of CNN1 and CNN2. Conv and FC represent convolutional  and fully connected layers, respectively. CNN2 is indeed a NestNet of height $2$.}
	\label{fig:CNN:arc}
% 	\vskip -0.2in
\end{figure}

\begin{table}[htbp!]  
% 	\vskip 0.15in	
    \caption{Details of CNN1 and CNN2.} 
	\label{tab:CNN:arc}
	\vskip 0.075in
	\centering  
	\resizebox{0.95\textwidth}{!}{ 
		\begin{tabular}{ccccccccccc} 
			\toprule% \toprule[1.2pt]  
			\multirow{2}{*}{layers}   &     
		\multicolumn{2}{c}{activation function} 
			& \multirow{2}{*}{output size of each layer} & \multirow{2}{*}{dropout} & \multirow{2}{*}{batch normalization }  \\
			\cmidrule{2-3}
% 			\cmidrule{4-5}
			& CNN1 & CNN2 \\
			
			\midrule
			input $\in \R^{28\times 28}$  &
			\phantom{$\tn{ReLU},\hspace{6pt}  22\times(26\times26)$}& & $28\times 28$   \\
			
			\midrule
			
			Conv-1: $1\times (3\times 3), \, 12$ & ReLU  & $\begin{array}{cc}
			   \tn{SubNet}\, (\varrho),\hspace{1pt}  &1\times (26\times26) \\
			    \tn{ReLU},\hspace{1pt}  &11\times(26\times26)
			\end{array}$ & $12\times(26\times 26)$ &  & yes   \\
			
			\midrule
			
			Conv-2: $12\times (3\times 3), \, 12$ & ReLU & $\begin{array}{cc}
			   \tn{SubNet}\, (\varrho),\hspace{1pt}  &1\times(24\times24) \\
			    \tn{ReLU},\hspace{1pt} & 11\times(24\times24)
			\end{array}$ & $1728$ (MaxPool \& Flatten) &  $0.25$ & yes  \\
			
			\midrule
			
			FC-1: $1728, \, 48$ & ReLU & $\begin{array}{cc}
			   \tn{SubNet}\, (\varrho),\hspace*{23pt}\, &  1\hspace*{23pt}\, \\
			    \tn{ReLU},\hspace*{23pt}\, & 47\hspace*{23pt}\,
			\end{array}$& $48$ & $0.5$  & yes   \\
				\midrule
			FC-2: $48, \, 10$ &  &    & $10$ (Softmax) &  & yes  \\
			
			\midrule
			
			output $\in \R^{10}$ &   \\
			
			\bottomrule% \bottomrule[1.2pt] 
		\end{tabular} 
	}%%% \resizebox
% 	\vskip -0.1in
\end{table}

Before presenting the numerical results, 
let us present the hyper-parameters for training two CNN architectures above. 
We use the cross-entropy loss function to evaluate the loss between the CNNs and the target classification function. The number of epochs and the batch size are set to $500$ and $128$, respectively.
We adopt 
RAdam \cite{Liu2020On}
% Adadelta \cite{DBLP:journals/corr/abs-1212-5701}
as the
optimization method and the weight decay of the optimizer is $0.0001$.
In epochs $5(i-1)+1$ to $5i$ for $i=1,2,\cdots,100$, the learning rate is $0.2\times0.002\times0.9^{i-1}$ for the parameters in $\varrho$ and $0.002\times0.9^{i-1}$ for all other parameters. 
% \blue{We remark that} 
All training (test) samples in the Fashion-MNIST dataset are standardized in our experiment, i.e., we rescale all training (test) samples to have a mean of $0$ and a standard deviation of $1$.
% In the settings above,  we repeat the experiment $12$ times and adopt the average of $12$ test accuracies as the target test accuracy for each epoch.
In the settings above,  we repeat the experiment $18$ times and discard $3$ top-performing and $3$ bottom-performing trials by using the average of test accuracy in the last 100 epochs as the performance criterion. For each epoch, we adopt the average of test accuracies in the rest $12$ trials as the target test accuracy.

Next, let us present the experiment results to compare the numerical performances of  CNN1 and CNN2.
The test accuracy comparison of CNN1 and CNN2 is summarized in Table~\ref{tab:accuracy:comparison:fmnist}. 

\begin{table}[htbp!]   
% 	\vskip 0.15in
\def\arraystretch{1.001} 
	\caption{Test accuracy comparison.} 
	\label{tab:accuracy:comparison:fmnist}
	\vskip 0.075in
	\centering  
	\resizebox{0.88\textwidth}{!}{ 
		\begin{tabular}{ccccccccc} 
			\toprule% \toprule[1.2pt]  
			 & training time &  largest accuracy & average of largest 100 accuracies & average accuracy in last 100 epochs \\
			\midrule
		\rowcolor{mygray}
		CNN1 & $\approx 5802$ s & 0.925290 & 0.924796 & 0.924447 \\			
			\midrule
			 CNN2 & $\approx 7217$ s & 0.926620  & 0.926287 &  0.926032 \\
			\bottomrule% \bottomrule[1.2pt] 
		\end{tabular} 
	}%%% \resizebox
% 	\vskip -0.1in
\end{table} 

For each of CNN1 and CNN2, we present the training time,
the largest test accuracy,  the average of the largest $100$ test accuracies, and the average of test accuracies in the last 100 epochs. 
For an intuitive comparison, we also provide illustrations of the test accuracy over epochs for CNN1 and CNN2 in Figure~\ref{fig:accuracy:comparison:fmnist}.
As we can see from Table~\ref{tab:accuracy:comparison:fmnist} and Figure~\ref{fig:accuracy:comparison:fmnist}, CNN2 performs better than CNN1 though slightly more training time and 10 more parameters are required.  This numerically
shows that the NestNet is significantly more expressive than the standard network.
% As discussed in Section~\ref{sec:further:interpretation}, the test  accuracy (error) is controlled by three errors: approximation, generalization, and optimization errors.
% A good approximation error cannot guarantee a good test accuracy.
% % Observe that the approximation error is bounded by the distance between the target function and the numerical solution.
% However, a good test accuracy numerically implies that the three errors are all well controlled. 

\begin{figure}[htbp!]
%  	\vskip 0.1in
	\centering
	\begin{subfigure}[b]{0.48\textwidth}
		\centering            \includegraphics[width=0.925\textwidth]{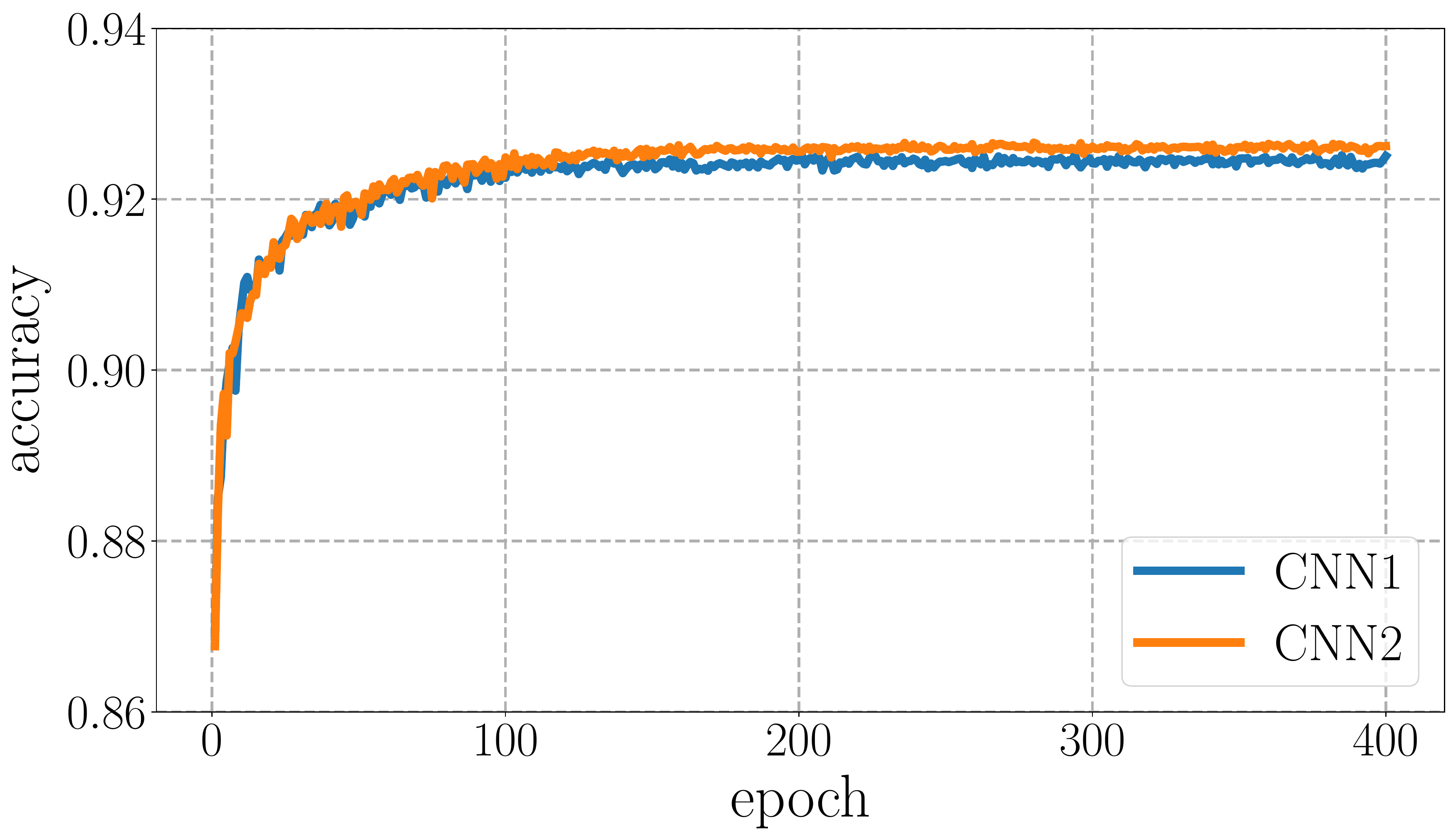}
% 		\vspace*{1pt}
		\subcaption{Epochs 1-400.}
	\end{subfigure}
		\hspace{3pt}
	\begin{subfigure}[b]{0.48\textwidth}
		\centering           \includegraphics[width=0.925\textwidth]{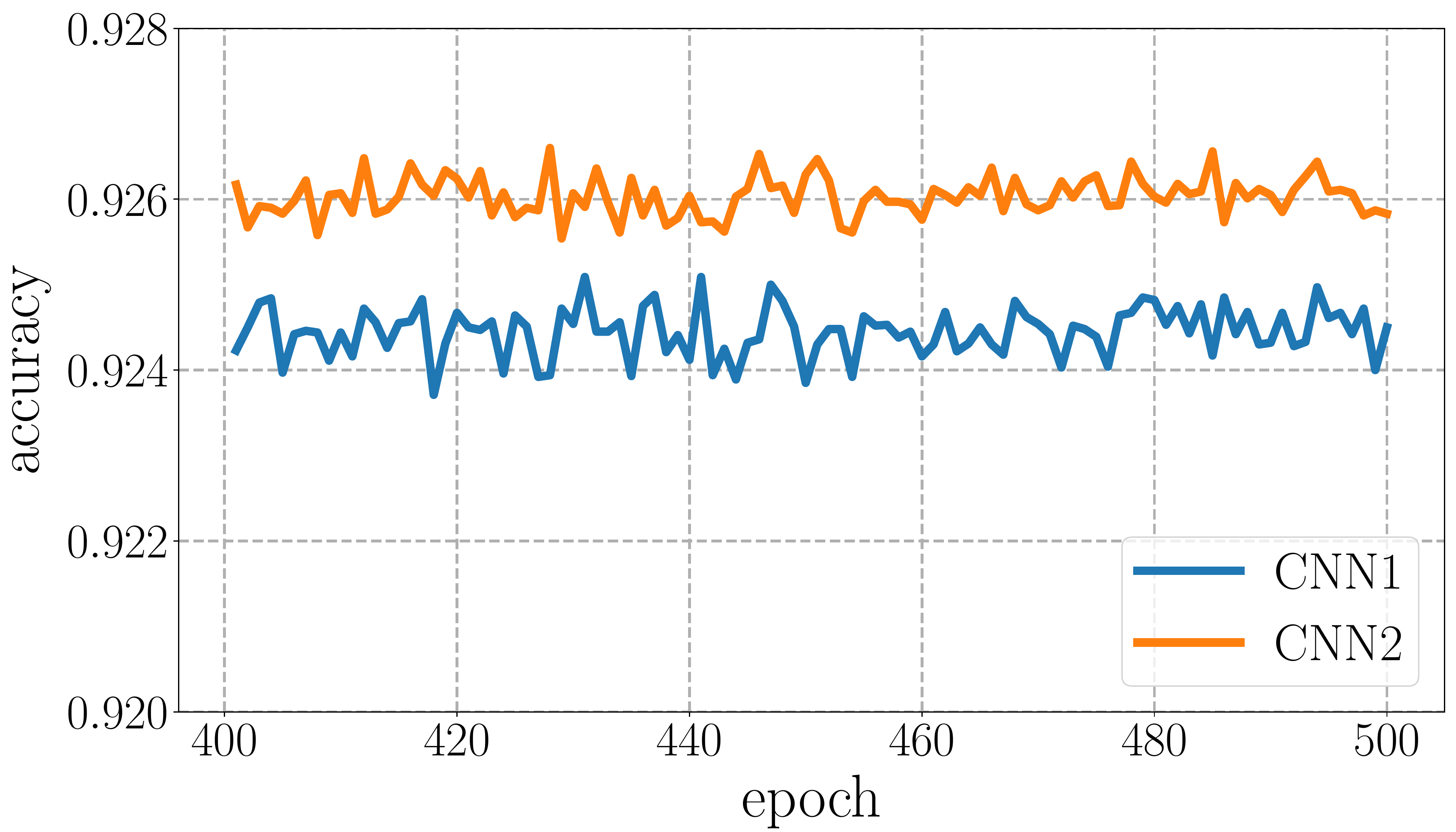}
% 		\vspace*{1pt}
		\subcaption{Epochs 401-500.}
	\end{subfigure}
	\caption{Test accuracy over epochs.}
	\label{fig:accuracy:comparison:fmnist}
% 	\vskip -0.2in
\end{figure}

%%%%%%%%%%%%%%%%%%%%%%%%%%%%%%%%%%%%%%
%%%%%%%% conclusion
\section{Conclusion}
\label{sec:conclusion}

% This paper introduces one more dimension, called height, besides two common dimensions width and depth in the characterization of neural network size.	It is proved that three-dimensional networks with, height, width, and depth varying have much better approximation power than standard networks, i.e., two-dimensional ones with only width and depth varying.
This paper proposes a three-dimensional neural network architecture by introducing one more dimension called height beyond width and depth. 
We show by construction that  neural networks with three-dimensional architectures are significantly more expressive than
the ones with two-dimensional architectures.
% In particular, we prove by construction that height-$s$ ReLU NestNets with $\mathcal{O}(n)$ parameters can approximate $1$-Lipschitz
% continuous functions on $[0,1]^d$ with an error $\mathcal{O}(n^{-(s+1)/d})$, which is much better than the optimal error $\mathcal{O}(n^{-2/d})$ achieved by standard ReLU networks with $\mathcal{O}(n)$ parameters. 
% Furthermore, we extend our result to generic continuous functions. 
% Finally, we conduct a simple experiment to show the numerical advantages of the super-approximation power of ReLU NestNets. 
We use simple numerical examples to show the advantages of the super-approximation power of ReLU NestNets, which is regarded as a proof of possibility.
% This is an example for .
% Our experiment results 
% There still is a big gap between our theoretical results and the real-world applications.
It would be of great interest to further explore the numerical performance of NestNets to bridge our theoretical results to  applications. We believe that NestNets can be further developed and  applied to real-world applications.
% This  indicates NestNets can be developed to  adapt to real applications.

We remark that our analysis is limited to  the ReLU activation function and the (H\"older) continuous function space. It would be interesting to generalize our results to other activation functions (e.g., tanh and sigmoid functions) and other function spaces (e.g, Lebesgue and Sobolev spaces). 
% Besides, it would be interesting to  explore the numerical performance of NestNets and apply it to real-world applications.

%Another important direction is to extend our results to the $L^\infty$-norm on the whole domain without the trifling region. 
%Besides, if identity maps are allowed in the construction of neural networks as in the residual networks \cite{7780459}, the size of networks in our construction can be further optimized. Finally, the proposed analysis could be generalized to other function spaces with explicit formulas to characterize the approximation error. These will be left as future work. 

\ifacknowledgment
\section*{Acknowledgments}
Z.~Shen was supported by 
Distinguished Professorship of National University of Singapore.
% Tan Chin Tuan Centennial Professorship.
H.~Yang 
was partially supported by the US National Science Foundation under award DMS-2244988, DMS-2206333, and the Office of Naval Research Award N00014-23-1-2007.
% was partially supported by the US National Science Foundation under award DMS-2244988, DMS-2206333, and the Office of Naval Research Young Investigator Award.
% Z.~Shen is supported by 
% Distinguished Professorship of National University of Singapore.
% % Tan Chin Tuan Centennial Professorship.  
% H.~Yang was partially supported by the US National Science Foundation under award DMS-2244988, DMS-2206333, and the Office of Naval Research Young Investigator Award.

\fi

%%%%%%%%%%%%%%%%%%%%%%%%%%%%%%%%%%%%%%%%
%%%%%%%%%%%%%%%% references	
%\cleardoublepage
%\section*{References}

%	\medskip

{
	\small
\bibliographystyle{plain}		
\bibliography{references}

%	[1] Alexander, J.A.\ \& Mozer, M.C.\ (1995) Template-based algorithms for
%	connectionist rule extraction. In G.\ Tesauro, D.S.\ Touretzky and T.K.\ Leen
%	(eds.), {\it Advances in Neural Information Processing Systems 7},
%	pp.\ 609--616. Cambridge, MA: MIT Press.
%	
%	
%	[2] Bower, J.M.\ \& Beeman, D.\ (1995) {\it The Book of GENESIS: Exploring
%		Realistic Neural Models with the GEneral NEural SImulation System.}  New York:
%	TELOS/Springer--Verlag.
%	
%	
%	[3] Hasselmo, M.E., Schnell, E.\ \& Barkai, E.\ (1995) Dynamics of learning and
%	recall at excitatory recurrent synapses and cholinergic modulation in rat
%	hippocampal region CA3. {\it Journal of Neuroscience} {\bf 15}(7):5249-5262.
}

%%%%%%%%%%%%%%%%%%%%%%%%%%%%%%%%%%%%%%%%%%%%%%%%%%%%%%%%%%%%
%%%%%%%%%%%%%%%% checklist
\ifchecklist
\section*{Checklist}

%	
%	%%% BEGIN INSTRUCTIONS %%%
%	The checklist follows the references.  Please
%	read the checklist guidelines carefully for information on how to answer these
%	questions.  For each question, change the default \answerTODO{} to \answerYes{},
%	\answerNo{}, or \answerNA{}.  You are strongly encouraged to include a {\bf
%		justification to your answer}, either by referencing the appropriate section of
%	your paper or providing a brief inline description.  For example:
%	\begin{itemize}
%		\item Did you include the license to the code and datasets? \answerYes{See Section~\ref{gen_inst}.}
%		\item Did you include the license to the code and datasets? \answerNo{The code and the data are proprietary.}
%		\item Did you include the license to the code and datasets? \answerNA{}
%	\end{itemize}
%	Please do not modify the questions and only use the provided macros for your
%	answers.  Note that the Checklist section does not count towards the page
%	limit.  In your paper, please delete this instructions block and only keep the
%	Checklist section heading above along with the questions/answers below.
%	%%% END INSTRUCTIONS %%%

\begin{enumerate}

	\item For all authors...
	\begin{enumerate}
		\item Do the main claims made in the abstract and introduction accurately reflect the paper's contributions and scope?
		\answerYes{}
		\item Did you describe the limitations of your work?
		\answerYes{See Sections~\ref{sec:intro} and \ref{sec:conclusion}.}
		\item Did you discuss any potential negative societal impacts of your work?
		\answerNA{Our work
			is theoretical and we do not see any potential negative societal impacts.}
		\item Have you read the ethics review guidelines and ensured that your paper conforms to them?
		\answerYes{}
	\end{enumerate}

	\item If you are including theoretical results...
	\begin{enumerate}
		\item Did you state the full set of assumptions of all theoretical results?
		\answerYes{}
		\item Did you include complete proofs of all theoretical results?
		\answerYes{See the appendix.}
	\end{enumerate}

	\item If you ran experiments...
	\begin{enumerate}
		\item Did you include the code, data, and instructions needed to reproduce the main experimental results (either in the supplemental material or as a URL)?
		\answerNA{}
		\item Did you specify all the training details (e.g., data splits, hyper-parameters, how they were chosen)?
		\answerNA{}
		\item Did you report error bars (e.g., with respect to the random seed after running experiments multiple times)?
		\answerNA{}
		\item Did you include the total amount of compute and the type of resources used (e.g., type of GPUs, internal cluster, or cloud provider)?
		\answerNA{}
	\end{enumerate}

	\item If you are using existing assets (e.g., code, data, models) or curating/releasing new assets...
	\begin{enumerate}
		\item If your work uses existing assets, did you cite the creators?
		\answerNA{}
		\item Did you mention the license of the assets?
		\answerNA{}
		\item Did you include any new assets either in the supplemental material or as a URL?
		\answerNA{}
		\item Did you discuss whether and how consent was obtained from people whose data you're using/curating?
		\answerNA{}
		\item Did you discuss whether the data you are using/curating contains personally identifiable information or offensive content?
		\answerNA{}
	\end{enumerate}

	\item If you used crowdsourcing or conducted research with human subjects...
	\begin{enumerate}
		\item Did you include the full text of instructions given to participants and screenshots, if applicable?
		\answerNA{}
		\item Did you describe any potential participant risks, with links to Institutional Review Board (IRB) approvals, if applicable?
		\answerNA{}
		\item Did you include the estimated hourly wage paid to participants and the total amount spent on participant compensation?
		\answerNA{}
	\end{enumerate}

\end{enumerate}
\fi

%%%%%%%%%%%%5
\iftableofcontents
\cleardoublepage
\renewcommand{\contentsname}{Contents of main article and appendix}
\tableofcontents
\fi

%%%%%%%%%%%%%%%%%%%%%%%%%%%%%%%%%%%%%%%%%%%%%%%%%%%
%%%%%%%%%%%%%%%%%%%%%%%%%%%%%%%%%%%%%%%%%%%%%%%%%%%%%%%%%%%%
%%%%%%%%%%% appendix
\ifappendix
\cleardoublepage
\appendix

%%%%%%%%%%%%%%%%%%%%%%%%%%%%%%%%%%%%%
%%%%%%%%%%%%%%%%%%%%%%%%%%%%%%%%%%
\section{Proof of main theorem}
\label{sec:proof:main:thm}

In this section, we will prove the main theorem, Theorem~\ref{thm:main}, based on an auxiliary theorem, Theorem~\ref{thm:main:gap}, which will be proved in Section~\ref{sec:proof:thm:main:gap}. Notations throughout this paper are summarized in Section~\ref{sec:notation}. 

%%%%%%%%%%%%%%%%%%%%%%%%%%%%%%%%%%%%%%%%
\subsection{Notations}
\label{sec:notation}

Let us summarize all basic notations used in this paper as follows.
\begin{itemize}
	\item Let $\R$, $\Q$, and $\Z$ denote the set of real numbers, rational numbers, and integers, respectively.
	
	\item Let $\N$ and $\N^+$ denote the set of natural numbers and positive natural numbers, respectively.  That is,
	$\N^+=\{1,2,3,\cdots\}$ and $\N=\N^+\bigcup\{0\}$.
	
	\item For any $x\in \R$, let $\lfloor x\rfloor:=\max \{n: n\le x,\ n\in \Z\}$ and $\lceil x\rceil:=\min \{n: n\ge x,\ n\in \Z\}$.
	
	%$\one_{S}=\one_{\{x\in S\}}=\one_{x\in S}=\one_{S}(x)$
	\item Let $\one_{S}$ be the indicator (characteristic) function of a set $S$, i.e., $\one_{S}$ is equal to $1$ on $S$ and $0$ outside $S$.
	
	%\item Let $|S|$ denote the size of a set $S$, i.e., the number of all elements in $S$.
	
	\item The set difference of two sets $A$ and $B$ is denoted by $A\backslash B:=\{x:x\in A,\ x\notin B\}$. 
	
	\item Matrices are denoted by bold uppercase letters. 
	For instance,  $\bm{A}\in\mathbb{R}^{m\times n}$ is a real matrix of size $m\times n$ and $\bm{A}^T$ denotes the transpose of $\bm{A}$.  
	%Correspondingly, $\bm{A}(i,j)$ is the $(i,j)$-th entry of $\bm{A}$; $\bm{A}(:,j)$ is the $j$-th column of $\bm{A}$; $\bm{A}(i,:)$ is the $i$-th row of $\bm{A}$. 
	Vectors are denoted as bold lowercase letters. 
	For example, $\bm{v}=[v_1,\cdots,v_d]^T=\scalebox{0.85}{$\left[\begin{array}{c}
		v_1  \\
		\vdots \\
		v_d
	\end{array}\right]$}\in \R^d$ is a column vector. 
% 	%consisting of numbers $\{v_i\}_i$
% 	%with $\bm{v}(i)=v_i$ being the $i$-th element. 
% 	Besides, ``['' and ``]''  are used to  partition matrices (vectors) into blocks, e.g., $\bmA=\left[\begin{smallmatrix}\bmA_{11}&\bmA_{12}\\ \bmA_{21}&\bmA_{22}\end{smallmatrix}\right]$.

	\item For any $p\in [1,\infty)$, the $p$-norm (or $\ell^p$-norm) of a vector $\bmx=[x_1,x_2,\cdots,x_d]^T\in\R^d$ is defined by 
	\begin{equation*}
		\|\bmx\|_p=\|\bmx\|_{\ell^p}\coloneqq \big(|x_1|^p+|x_2|^p+\cdots+|x_d|^p\big)^{1/p}.
	\end{equation*}
	In the case of $p=\infty$, \begin{equation*}
		\|\bmx\|_{\infty}=\|\bmx\|_{\ell^\infty}\coloneqq \max\big\{|x_i|: i=1,2,\cdots,d\big\}.
	\end{equation*}
	
	\item By convention, $\sum_{j=n_1}^{n_2} a_j=0$
%	 and $\prod_{j=n_1}^{n_2} a_j=1$ 
	if $n_1>n_2$, no matter what $a_j$ is for each $j$.
	
	\item 
	Given any $K\in \N^+$ and $\delta\in (0, \tfrac{1}{K})$, define a trifling region  $\Omega([0,1]^d,K,\delta)$ of $[0,1]^d$ as 
	\begin{equation}
		\label{eq:triflingRegionDef}
		\Omega([0,1]^d,K,\delta)\coloneqq\bigcup_{j=1}^{d} \bigg\{\bmx=[x_1,x_2,\cdots,x_d]^T\black{\in [0,1]^d}: x_j\in \bigcup_{k=1}^{K-1}\big(\tfrac{k}{K}-\delta,\tfrac{k}{K}\big)\bigg\}.
	\end{equation}
	In particular, $\Omega([0,1]^d,K,\delta)=\emptyset$ if $K=1$. See Figure~\ref{fig:region} for two examples of trifling regions.
	
	\begin{figure}[htbp!]       
		\centering
		\begin{minipage}{0.805\textwidth}
			\centering
			\begin{subfigure}[b]{0.33\textwidth}
				\centering            
				\includegraphics[width=0.999\textwidth]{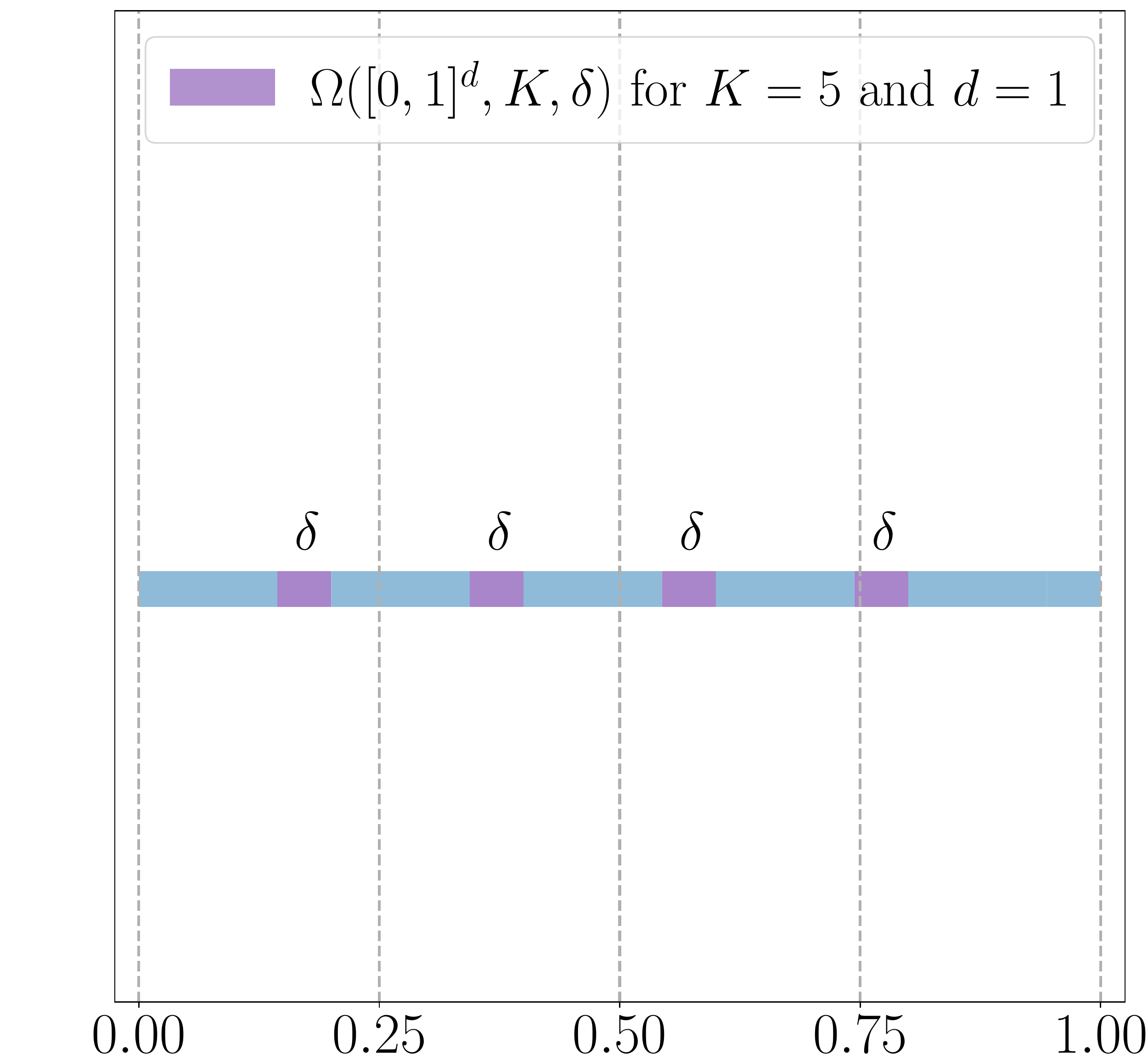}
				\subcaption{}
			\end{subfigure}
			\begin{minipage}{0.07\textwidth}
				\,
			\end{minipage}
			\begin{subfigure}[b]{0.33\textwidth}
				\centering            \includegraphics[width=0.999\textwidth]{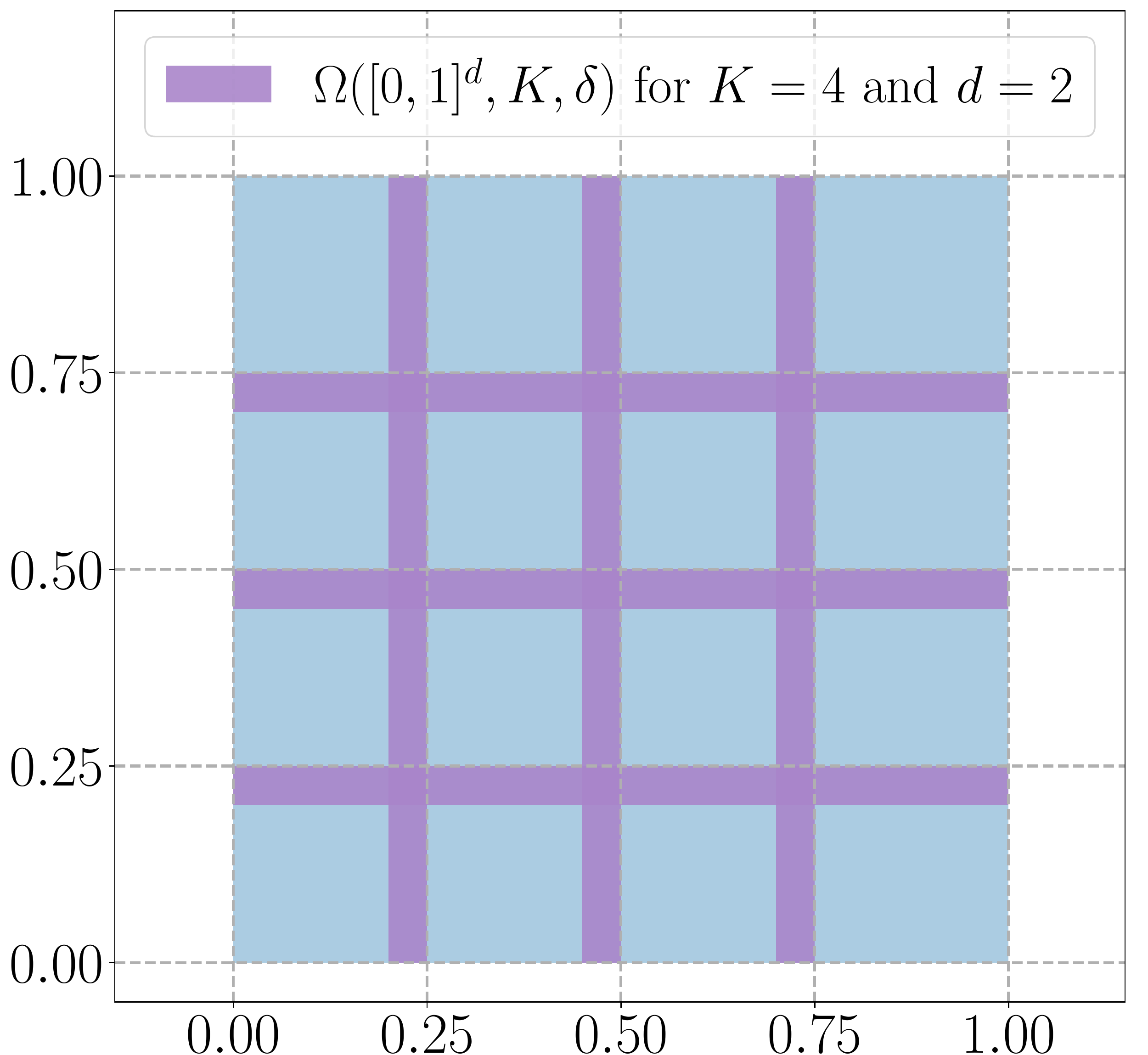}
				\subcaption{}
			\end{subfigure}
		\end{minipage}
		\caption{Two examples of trifling regions. (a)  $K=5,d=1$. (b) $K=4,d=2$.}
		\label{fig:region}
	\end{figure}
	
	%\item Let $\holder{\lambda}{\alpha}$ denote the space of  H{\"o}lder continuous functions on $[0,1]^d$ of order $\alpha\in (0,1]$ with  a H\"older constant $\lambda>0$. 
	
	\item For a continuous piecewise linear function $f(x)$, the $x$ values where the slope changes are typically called \textbf{breakpoints}. 
	
%	\item Let $\cpl(\R,n)$ denote the space that consists of all continuous piecewise linear functions with at most $n$ breakpoints on $\R$.
%	
	
	\item Let $\sigma:\R\to \R$ denote the rectified linear unit (ReLU), i.e. $\sigma(x)=\max\{0,x\}$ for any $x\in\R$. With a slight abuse of notation, we define $\sigma:\R^d\to \R^d$ as $\sigma(\bmx)=\left[
	\begin{array}{c}
		{\max\{0,x_1\}}  \\
		{\vdots} \\
		{\max\{0,x_d\}}
	\end{array}\right]$ for any $\bmx=[x_1,\cdots,x_d]^T\in \R^d$.
	
	\item Let $\nestnet_{s}\{n\}$ for $n,s\in \N^+$  denote the set of functions realized by 
	height-$s$ ReLU NestNets with as most $n$ parameters.

	\item A function $\phi$ realized by a ReLU network can be briefly described as follows:
	\begin{equation*}
		\begin{aligned}
			\bm{x}=\widetilde{\bm{h}}_0 
			\myto{2.2}^{\bm{W}_0,\ \bm{b}_0}_{\bmcalL_0} \bm{h}_1
			\myto{1.015}^{\sigma} \tilde{\bm{h}}_1 \ \cdots\ \myto{2.7}^{\bm{W}_{L-1},\ \bm{b}_{L-1}}_{\bmcalL_{L-1}} \bm{h}_L
			\myto{1.015}^{\sigma} \tilde{\bm{h}}_L
			\myto{2.2}^{\bm{W}_{L},\ \bm{b}_{L}}_{\bmcalL_L} \bm{h}_{L+1}=\phi(\bm{x}),
		\end{aligned}
	\end{equation*}
	where $\bm{W}_i\in \R^{N_{i+1}\times N_{i}}$ and $\bm{b}_i\in \R^{N_{i+1}}$ are the weight matrix and the bias vector in the $i$-th affine linear transformation $\bmcalL_i$, respectively, i.e., 
	\[\bm{h}_{i+1} =\bm{W}_i\cdot \tilde{\bm{h}}_{i} + \bm{b}_i\eqqcolon \bmcalL_i(\tilde{\bm{h}}_{i})\quad \tn{for $i=0,1,\cdots,L$,}\]  
	and
	\[
	\tilde{\bm{h}}_i=\sigma(\bm{h}_i)\quad \tn{for $i=1,2,\cdots,L$.}
	\]
	In particular, $\phi$ can be represented in a form of function compositions as follows
	\begin{equation*}
		\phi =\bmcalL_L\circ\sigma\circ
% 		\bmcalL_{L-1}\circ \sigma\circ
		\ \cdots \  \circ 
% 		\sigma\circ
		\bmcalL_1\circ\sigma\circ\bmcalL_0,
	\end{equation*}
	which has been illustrated in Figure~\ref{fig:ReLUeg}.
	
	\begin{figure}[htbp!]     
		\centering
		\centering            \includegraphics[width=0.72\textwidth]{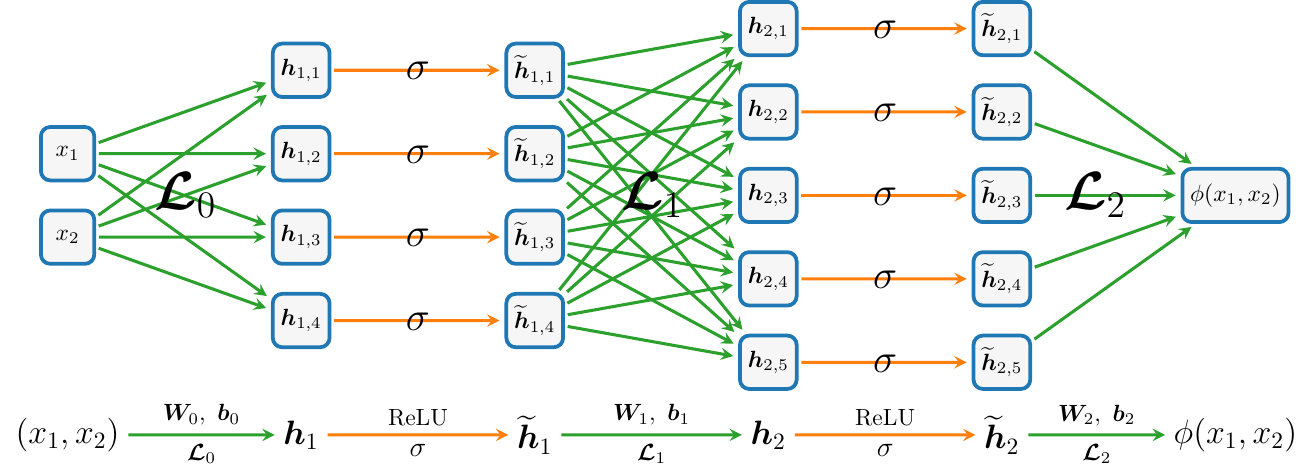}
		\caption{An example of a ReLU network of width $5$ and depth $2$. }
		\label{fig:ReLUeg}
	\end{figure}

	\item The expression ``a network of width $N$ and depth $L$'' means
	\begin{itemize}
		\item The number of neurons in each  \textbf{hidden} layer of this network (architecture) is no more than $N$.
		\item The number of \textbf{hidden} layers of this network (architecture) is no more than $L$.
	\end{itemize} 
\end{itemize}

\subsection{Detailed proof of Theorem~\ref{thm:main}}
\label{sec:proof:main}

The key point of proving Theorem~\ref{thm:main} is to construct a piecewise constant function to approximate the target continuous function. However, ReLU NestNets are unable to approximate piecewise constant functions well the continuity of ReLU NestNets. Thus, we introduce the trifling region  $\Omega([0,1]^d,K,\delta)$, defined in Equation~\eqref{eq:triflingRegionDef}, and use ReLU NestNets to implement piecewise constant functions outside the trifling region.
To simplify the proof of Theorem~\ref{thm:main}, we introduce an auxiliary theorem, Theorem~\ref{thm:main:gap} below.
It can be regarded as a weaker variant of Theorem~\ref{thm:main}, ignoring the approximation in the trifling region. 
\begin{theorem}
	\label{thm:main:gap}
	Given  a continuous function $f\in C([0,1]^d)$, for any $n,s\in \N^+$,
	there exists $\phi\in \nestnet_{s}\big\{355d^2(s+7)^2(2n+1)\big\}$
	%	a function $\phi$ realized by a  height-$s$ ReLU NestNet with at most $3633d^2(s+2)^4(2n+1)$
	%	parameters
	such that $ \|\phi\|_{L^\infty(\R^d)}\le |f(\bmzero)|+ \omega_f(\sqrt{d})$ and 
	\begin{equation*}
		|\phi(\bmx)-f(\bmx)|\le 6\sqrt{d}\,\omega_f\big(n^{-(s+1)/d}\big)\quad \tn{for any $\bmx\in [0,1]^d\backslash\Omega([0,1]^d,K,\delta)$},
	\end{equation*}
	where $K=\lfloor n^{(s+1)/d}\rfloor$ and $\delta$ is an arbitrary number in $(0,\tfrac{1}{3K}]$.
\end{theorem}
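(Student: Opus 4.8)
The plan is to follow the three-step scheme outlined in Section~\ref{sec:main:results}, constructing $\phi=\phi_2\circ\bmPhi_1$ where $\bmPhi_1$ quantizes each input to a cube index and $\phi_2$ sends indices to approximate target values. Since Theorem~\ref{thm:main:gap} only asks for control outside the trifling region $\Omega([0,1]^d,K,\delta)$, I would never have to interpolate across breakpoints, so both building blocks may be taken as exact staircase-type constructions away from that region. Concretely, set $K=\lfloor n^{(s+1)/d}\rfloor$, partition $[0,1]^d\backslash\Omega$ into the cubes $Q_\bmbeta$ indexed by $\bmbeta\in\{0,\dots,K-1\}^d$, and pick the representative $\bmx_\bmbeta=\bmbeta/K\in Q_\bmbeta$. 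The error then splits as $|\phi-f|\le\scrE_1+\scrE_2$ on $[0,1]^d\backslash\Omega$, where $\scrE_2$ bounds $|f(\bmx_\bmbeta)-f(\bmx)|$ and $\scrE_1$ bounds the fitting error $|\phi_2(\bmbeta)-f(\bmx_\bmbeta)|$.

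Step~2 is the heart of the argument. First I would isolate the one-dimensional statement: a height-$s$ ReLU NestNet (an element of $\nestnet_s\{\calO(m)\}$) can realize a $\lfloor m x\rfloor$-type staircase with $\calO(m^{s+1})$ steps, exact outside the one-dimensional trifling region. This is proved by induction on $s$. The base case $s=1$ is the classical fact underlying the $\calO(n^{-2/d})$ rate, namely that a standard ReLU network with $\calO(m)$ parameters produces a staircase with $\calO(m^2)$ steps. For the inductive step I would take a height-$(s-1)$ staircase $\varrho$ with $\calO(m^s)$ steps and use it as the shared sub-network activation of a shallow outer network, gaining one further factor of $m$, e.g.\ by letting the outer ReLU-activated layers resolve the coarsest $m$ levels and then feeding the resulting remainder into $\varrho$ to resolve the remaining $m^s$ levels. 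Setting $m=\calO(n^{1/d})$ gives $\calO(m^{s+1})=\calO(K)$ steps per coordinate. The decisive economy is that in the NestNet parameter count the sub-network activation $\varrho$ is paid for only once, even though $\bmPhi_1(\bmx)=[\phi_1(x_1),\dots,\phi_1(x_d)]^T$ reuses it across all $d$ coordinates and all neurons; this is exactly what keeps the total at $\calO(n)$.

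For Step~3 I would flatten the multi-index by the affine bijection $\bmbeta\mapsto\sum_{j=1}^d\beta_j K^{j-1}\in\{0,\dots,K^d-1\}$ and then realize a one-dimensional value-assignment NestNet $\phi_3$ with $\phi_3\big(\sum_j\beta_j K^{j-1}\big)\approx f(\bmx_\bmbeta)$. Since only the $K^d=\calO(n^{s+1})$ integer inputs matter, I would quantize the values $f(\bmx_\bmbeta)$ to a grid of width $\calO(\omega_f(\sqrt d/K))$ and store the whole quantized table inside a handful of real parameters via a packed expansion, recovering the requested entry by the same nested bit-extraction mechanism that powers Step~2; a height-$s$ NestNet with $\calO(n)$ parameters has enough extraction depth to read off one of $\calO(n^{s+1})$ stored values. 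Composing with a ReLU clipping onto $[f(\bmzero)-\omega_f(\sqrt d),\,f(\bmzero)+\omega_f(\sqrt d)]$ yields both the stated $L^\infty$ bound and $\scrE_1=\calO(\omega_f(\sqrt d/K))$.

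Finally I would assemble the pieces: $\scrE_2\le\omega_f(\sqrt d/K)$ because $\|\bmx-\bmx_\bmbeta\|_2\le\sqrt d/K$ on each cube, and with $K=\lfloor n^{(s+1)/d}\rfloor\ge\tfrac12 n^{(s+1)/d}$ the monotonicity and quasi-subadditivity of $\omega_f$ convert $\omega_f(\sqrt d/K)$ into $\calO(\sqrt d\,\omega_f(n^{-(s+1)/d}))$, which after bookkeeping falls within the claimed $6\sqrt d\,\omega_f(n^{-(s+1)/d})$. The parameter budget follows by adding the shared costs of the staircase and extraction sub-networks to the outer affine maps and tracking how each recursion level inflates the constant, which is the source of the factor $355 d^2(s+7)^2$. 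I expect the main obstacle to be precisely this recursive bookkeeping: making the inductive construction in Step~2 rigorous so that one genuinely gains a factor $m$ per unit of height while keeping the sub-network shared and confining all near-breakpoint error to the trifling region, and then propagating the parameter and error constants cleanly through all $s$ levels of nesting.
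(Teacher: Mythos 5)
Your Steps 1, 2 and 4 track the paper's own proof of Theorem~\ref{thm:main:gap}: the same cube decomposition, the same quantization map $\bmPhi_1$ built from a one-dimensional staircase constructed by induction on height with a shared sub-network activation, and the same error split $\scrE_1+\scrE_2$ with the same constant bookkeeping. (Your staircase count $\calO(m^{s+1})$ steps per height level is much weaker than what the paper's Lemmas actually give, namely $2^{n^s}$ steps via base-$2^{n^s}$ expansions, but it is still sufficient here, so that is not an issue.) The genuine gap is in your Step 3. The nested bit-extraction mechanism does not perform arbitrary table lookup: it returns prefix sums $\sum_{\ell\le k}\theta_\ell$ of stored binary digits. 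To fit $K^d\approx n^{s+1}$ values within a parameter budget that is \emph{linear} in $n$, the values must therefore be encoded incrementally, i.e., one needs the fitted sequence to satisfy $|y_j-y_{j-1}|\le\varepsilon$, so that after quantization consecutive differences lie in $\{-1,0,1\}$ and cost $\calO(1)$ bits each; this slow-variation hypothesis is exactly what the paper's Proposition~\ref{prop:point:fitting} requires. Storing each quantized value directly, as your ``packed expansion'' suggests, costs $\calO(\log K)$ bits per entry (the quantized range has $\calO(K)$ levels when $f$ is Lipschitz), hence $\Theta(n^{s+1}\log n)$ bits in total, which exceeds the $\calO(n^{s+1})$-bit capacity of a height-$s$ extraction network with $\calO(n)$ parameters and would force a superlinear parameter count, contradicting the claimed bound $355d^2(s+7)^2(2n+1)$.

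Moreover, your lexicographic flattening $\bmbeta\mapsto\sum_{j}\beta_jK^{j-1}$ destroys the very slow-variation property that incremental encoding needs: whenever a coordinate wraps from $K-1$ back to $0$, the consecutive integers $j$ and $j+1$ correspond to representatives $\bmx_\bmbeta$ and $\bmx_{\bmbeta'}$ that are order-one apart in $[0,1]^d$, so $|f(\bmx_\bmbeta)-f(\bmx_{\bmbeta'})|$ can be as large as $\omega_f(1)$ rather than $\calO\big(\omega_f(\sqrt d/K)\big)$. This wraparound problem is precisely what Step 3.2 of the paper's proof is designed to solve: the paper keeps an affine flattening $\psi_1$ but inserts the auxiliary breakpoint set $\calA_2$ ($K$ extra breakpoints in each gap between index groups) and defines the continuous piecewise linear bridge $g$ to be linear across each gap, so that every jump of size up to $2\omega_f(\sqrt d)$ is spread over $K+1$ steps, each of size at most $\omega_f(\sqrt{d})/K\le\omega_f(\sqrt d/K)$; only after this does Proposition~\ref{prop:point:fitting} apply, to the $2K^d\le(2n)^{s+1}$ breakpoint values. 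Your proposal contains neither this device nor any substitute for it (a boustrophedon ordering would restore adjacency of consecutive indices, but it is not affine and would need its own network realization), so as written the point-fitting step fails and the theorem's linear parameter bound cannot be reached.
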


The proof of Theorem~\ref{thm:main:gap} can be found in Section~\ref{sec:proof:thm:main:gap}. 
By assuming Theorem~\ref{thm:main:gap} is true, we can easily prove Theorem~\ref{thm:main} for the case $p\in[1,\infty)$. 
To prove Theorem~\ref{thm:main} for the case  $p=\infty$, we need to control the approximation error in the trifling region. 
To this intent, we introduce a theorem to handle the approximation inside the trifling region.

\begin{theorem}[Lemma~$3.11$ of \cite{shijun:thesis} or Lemma~$3.4$ of \cite{shijun:3}]
	\label{thm:gap:handling}
	Given any $\varepsilon>0$, $K\in \N^+$, and $\delta\in (0, \tfrac{1}{3K}]$,
	assume $f\in C([0,1]^d)$ and $g:\R^d\to \R$ is a general function with
	\begin{equation*}
		|g(\bmx)-f(\bmx)|\le \varepsilon\quad \tn{for any $\bmx\in [0,1]^d\backslash \Omega([0,1]^d,K,\delta)$.}
	\end{equation*}
	Then
	\begin{equation*}
		|\phi(\bmx)-f(\bmx)|\le \varepsilon+d\cdot\omega_f(\delta)\quad \tn{for any $\bmx\in [0,1]^d$,}
	\end{equation*}
	where $\phi\coloneqq  \phi_d$ is defined by induction through $\phi_0\coloneqq g$ and
	\begin{equation*}
		%	\label{eq:phiInduction}
		\phi_{i+1}(\bmx)\coloneqq  \middleValue\big(\phi_{i}(\bmx-\delta\bme_{i+1}),\phi_{i}(\bmx),\phi_{i}(\bmx+\delta\bme_{i+1})\big)\quad \tn{for $i=0,1,\cdots,d-1$,}
	\end{equation*}
	where $\{\bme_i\}_{i=1}^d$ is the standard basis in $\mathbb{R}^d$ and $\middleValue(\cdot,\cdot,\cdot)$ is the function returning the middle value of three inputs. 	 
\end{theorem}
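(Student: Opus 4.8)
The plan is to prove the statement by induction on the coordinate index $i$, showing that each intermediate function $\phi_i$ already achieves the bound $\varepsilon+i\,\omega_f(\delta)$ outside a \emph{reduced} trifling region that only constrains the last $d-i$ coordinates. Concretely, for $i=0,1,\dots,d$ I would set
\[
\Omega_i\coloneqq \bigcup_{j=i+1}^{d}\Big\{\bmx\in[0,1]^d: x_j\in \bigcup_{k=1}^{K-1}\big(\tfrac{k}{K}-\delta,\tfrac{k}{K}\big)\Big\},
\]
so that $\Omega_0=\Omega([0,1]^d,K,\delta)$ and $\Omega_d=\emptyset$, and aim to establish the inductive claim
\[
|\phi_i(\bmx)-f(\bmx)|\le \varepsilon+i\,\omega_f(\delta)\qquad\tn{for all }\bmx\in[0,1]^d\setminus\Omega_i.
\]
The base case $i=0$ is exactly the hypothesis on $g=\phi_0$, and the case $i=d$ is the desired conclusion since $\Omega_d=\emptyset$. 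The engine of the induction is the elementary robustness of the middle-value function: if at least two of three real numbers lie in an interval $[L,U]$, then so does their median $\middleValue(\cdot,\cdot,\cdot)$.

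The key geometric fact I would isolate first is that, for any $\bmx$ and any direction $i+1$, at most one of the three shifted points $\bmx-\delta\bme_{i+1}$, $\bmx$, $\bmx+\delta\bme_{i+1}$ has its $(i+1)$-th coordinate inside a forbidden interval $(\tfrac{k}{K}-\delta,\tfrac{k}{K})$. This is precisely where $\delta\le\tfrac{1}{3K}$ enters: the three relevant values $x_{i+1}-\delta,\,x_{i+1},\,x_{i+1}+\delta$ are spaced exactly $\delta$ apart, each forbidden interval is open of length $\delta$, and consecutive forbidden intervals are separated by a gap $\tfrac{1}{K}-\delta\ge 2\delta$. Two points at distance $\delta$ cannot both lie in one open interval of length $\delta$; and if the two outer points lay in consecutive forbidden intervals their span $2\delta$ would have to exceed $\tfrac1K-\delta\ge 2\delta$, a contradiction. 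These two one-line inequalities give the claim.

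For the inductive step I would fix $\bmx\in[0,1]^d\setminus\Omega_{i+1}$ and examine the three points above, which differ from $\bmx$ only in coordinate $i+1$ and hence still have good coordinates $i+2,\dots,d$. By the geometric fact at least two of them have a good $(i+1)$-th coordinate, and together with a boundary check — when $x_{i+1}$ is within $\delta$ of $0$ or $1$ the outward shift leaves $[0,1]^d$, but the remaining two points are then both interior and good, again thanks to $\delta\le\tfrac1{3K}$ — one obtains at least two of the three points lying in $[0,1]^d\setminus\Omega_i$. For each such good point $\bmy$ the inductive hypothesis gives $|\phi_i(\bmy)-f(\bmy)|\le\varepsilon+i\,\omega_f(\delta)$, while $\|\bmy-\bmx\|_2\le\delta$ gives $|f(\bmy)-f(\bmx)|\le\omega_f(\delta)$, so $\phi_i(\bmy)$ lies within $\varepsilon+(i+1)\,\omega_f(\delta)$ of $f(\bmx)$. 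Since this holds for at least two of the three arguments of the median defining $\phi_{i+1}(\bmx)$, the median-robustness lemma closes the step.

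The conceptual core, and the step I expect to demand the most care, is this interplay between the geometric count and the boundary behavior: one must verify that after discarding the single possibly-bad point and, near $\partial[0,1]^d$, the single out-of-range point, there remain at least \emph{two} admissible points for the median to act on. Once the spacing inequalities forced by $\delta\le\tfrac{1}{3K}$ are pinned down, everything else is routine — the median inequality and the modulus-of-continuity estimate contributing one additive $\omega_f(\delta)$ at each of the $d$ stages, yielding the final $\varepsilon+d\,\omega_f(\delta)$.
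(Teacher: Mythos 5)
Your proposal is correct. Note that the paper does not actually prove this statement itself --- it is imported verbatim from the authors' earlier work (Lemma~3.11 of \cite{shijun:thesis} / Lemma~3.4 of \cite{shijun:3}) --- but your argument is a sound, self-contained proof and follows essentially the same route as the proof in those references: induction on the coordinate index with the relaxed region $\Omega_i$ that frees the first $i$ coordinates, the counting fact that $\delta\le\tfrac{1}{3K}$ forces at most one of the three $\delta$-shifted points to have a bad $(i+1)$-th coordinate (with the boundary cases near $0$ and $1$ handled exactly as you describe), and the robustness of $\middleValue(\cdot,\cdot,\cdot)$ to a single corrupted input, each stage contributing one additive $\omega_f(\delta)$.
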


Now, let we prove Theorem~\ref{thm:main} by assuming Theorem~\ref{thm:main:gap} is true, the proof of which can be found  in Section~\ref{sec:proof:thm:main:gap}.

\begin{proof}[Proof of Theorem~\ref{thm:main}]		
	We may assume $f$ is not a constant function since it is  a trivial case. Then $\omega_f(r)>0$ for any $r>0$. 
	Let us first consider the case $p\in [1,\infty)$. Set 
	$K=\lfloor n^{{(s+1)}/d}\rfloor$ and choose a sufficiently small $\delta\in (0,\tfrac{1}{3K}]$ such that
	\begin{equation*}
		\begin{split}
			Kd\delta\big( \black{2}|f(\bmzero)|+\black{2}\omega_f(\sqrt{d})\big)^p
			&=
			\lfloor n^{{(s+1)}/d}\rfloor
			d\delta \Big( 2|f(\bmzero)|+2\omega_f(\sqrt{d})\Big)^p\\
			&\le \Big(\omega_f(n^{-{(s+1)}/d})\Big)^p. 
		\end{split}
	\end{equation*}
	By Theorem~\ref{thm:main:gap}, there exists 
	\begin{equation*}
		\begin{split}
				\phi\in \nestnet_{s}\big\{355d^2(s+7)^2(2n+1)\big\}
				&\subseteq
				\nestnet_{s}\big\{355d^2(s+7)^2\cdot 2(n+1)\big\}\\
				&\subseteq \nestnet_{s}\big\{10^3 d^2(s+7)^2 (n+1)\big\}
		\end{split}
	\end{equation*}
	%	a function $\phi$ realized by a ReLU NestNet with at most
	%	$3633d^2(s+2)^4(2n+1)$ parameters 
	such that 
	\black{$\|\phi\|_{L^\infty(\R^d)}\le |f(\bmzero)|+\omega_f(\sqrt{d})$} and 
	\begin{equation*}
		|\phi(\bmx)-f(\bmx)|\le 6\sqrt{d}\,\omega_f\big(n^{-{(s+1)}/d}\big)\quad \tn{for any $\bmx\in [0,1]^d\backslash\Omega([0,1]^d,K,\delta)$}.
	\end{equation*}
	Since $\|f\|_{L^\infty([0,1]^d)}\le |f(\bmzero)|+\omega_f(\sqrt{d})$ and  the Lebesgue measure of $\Omega([0,1]^d,K,\delta)$ is bounded by $Kd\delta$,
	we have
	\begin{equation*}
		\begin{split}
			\|\phi-f\|_{L^p([0,1]^d)}^p
			&=\int_{\Omega([0,1]^d,K,\delta)}|		\phi(\bmx)-f(\bmx)|^p\tn{d}\bmx+\int_{[0,1]^d\backslash\Omega([0,1]^d,K,\delta)}|\phi(\bmx)-f(\bmx)|^p\tn{d}\bmx\\
			&\le Kd\delta\Big( 2|f(\bmzero)|+2\omega_f(\sqrt{d})\Big)^p+ \Big(6\sqrt{d}\,\omega_f(n^{-(s+1)/d})\Big)^p\\
			&\le \Big(\omega_f\big(n^{-(s+1)/d}\big)\Big)^p
			+ \Big(6\sqrt{d}\,\omega_f(n^{-(s+1)/d})\Big)^p
			\le \Big(7\sqrt{d}\,\omega_f(n^{-(s+1)/d})\Big)^p.
		\end{split}
	\end{equation*}
	Hence, we have $\|\phi-f\|_{L^p([0,1]^d)}\le 7\sqrt{d}\,\omega_f\big(n^{-(s+1)/d}\big)$.
	
	Next, let us discuss the case $p=\infty$. Set $K=\lfloor n^{(s+1)/d}\rfloor$ and choose a sufficiently small $\delta\in(0,\tfrac{1}{3K}]$ such that 
	\begin{equation*}
		d\cdot \omega_f(\delta)\le \omega_f\big(n^{-(s+1)/d}\big).
	\end{equation*}	
	By Theorem~\ref{thm:main:gap}, 
		\begin{equation*}
		\phi_0\in \nestnet_{s}\big\{355d^2(s+7)^2(2n+1)\big\}
	\end{equation*}
%	there exists a function $\phi_0$ realized  by a height-$s$  ReLU NestNet with at most $3633d^2(s+1)^4(2n+1)$ 
	such that
	\begin{equation*}
		|\phi_0(\bmx)-f(\bmx)|\le 6\sqrt{d}\,\omega_f\big(n^{-(s+1)/d}\big)\quad 	\tn{for any $\bmx\in [0,1]^d\backslash\Omega([0,1]^d,K,\delta)$}.
	\end{equation*}
	By Theorem~\ref{thm:gap:handling}  with $g=\phi_0$ and $\varepsilon=6\sqrt{d}\,\omega_f\big(n^{-(s+1)/d}\big)$ therein, we have
	\begin{equation*}
		|\phi(\bmx)-f(\bmx)|\le \varepsilon+d\cdot \omega_f(\delta)\le  7\sqrt{d}\,\omega_f\big(n^{-(s+1)/d}\big)\quad \tn{for any $\bmx\in [0,1]^d$},
	\end{equation*}
	where $\phi\coloneqq  \phi_d$ is defined by induction through 
	\begin{equation*}
		\phi_{i+1}(\bmx)\coloneqq  \middleValue\big(\phi_{i}(\bmx-\delta\bme_{i+1}),\phi_{i}(\bmx),\phi_{i}(\bmx+\delta\bme_{i+1})\big)\quad \tn{for $i=0,1,\cdots,d-1$,}
	\end{equation*}
	where $\{\bme_i\}_{i=1}^d$ is the standard basis in $\mathbb{R}^d$ and $\middleValue(\cdot,\cdot,\cdot)$ is the function returning the middle value of three inputs.

	It remains to estimate the number of parameters in the NestNet realizing $\phi=\phi_d$.
	By Lemma~$3.1$ of \cite{shijun:5}, $\middleValue(\cdot,\cdot,\cdot)$ can be realized by a ReLU network of width $14$ and depth $2$, and hence with at most $14\times(14+1)\times(2+1)=630$ parameters.
	
	By defining a vector-valued function $\bmPhi_0:\R^d\to \R^3$ as \[\bmPhi_0(\bmx)\coloneqq\big[\phi_{0}(\bmx-\delta\bme_{1}),\,\phi_{0}(\bmx),\,\phi_{0}(\bmx+\delta\bme_{1})\big]^T\quad \tn{for any $\bmx\in \R^d$,}\]
	we have $\bmPhi_0\in \nestnet_{s}\big\{3^2\big(355d^2(s+7)^2(2n+1)\big)\big\}$, implying
	\begin{equation*}
		\begin{split}
			\phi_1=\middleValue(\cdot,\cdot,\cdot)\circ\bmPhi_0
			&\in 
			\nestnet_{s}\Big\{630+3^2\Big(355d^2(s+7)^2(2n+1)\Big)\Big\}\\
			&\subseteq \nestnet_{s}\Big\{10\Big(355d^2(s+7)^2(2n+1)\Big)\Big\}.
		\end{split}
	\end{equation*} 
	Similarly, we have
	\begin{equation*}
		\begin{split}
			\phi=\phi_d
			\in \nestnet_{s}\Big\{10^d\Big(355d^2(s+7)^2(2n+1)\Big)\Big\}
			&\subseteq
			\nestnet_{s}\Big\{10^d\Big(355d^2(s+7)^2\cdot 2(n+1)\Big)\Big\}\\
			&\subseteq
			\nestnet_{s}\Big\{10^{d+3}d^2(s+7)^2 (n+1)\Big\}.
		\end{split}
	\end{equation*} 
	Thus, we finish the proof of Theorem~\ref{thm:main}.
	
	%	Recall that  
	%	by Lemma~\ref{lem:approxMid}.
	%	Therefore, $\phi_1=\min(\cdot,\cdot,\cdot)\circ\bmPhi_0$ can be implemented by a ReLU FNN of width $\max\{3N,14\}\le 3(N+4)$ and depth $L+2$.
	%	Similarly, $\phi=\phi_d$ can be implemented by a ReLU FNN of width $3^d(N+4)$ and depth $L+2d$.	
\end{proof}

%%%%%%%%%%%%%%%%%%%%%%%%%%%%%%%%%%%%%
%%%%%%%%%%%%%%%%%%%%%%%%%%%%%%%
\section{Proof of auxiliary theorem}
\label{sec:proof:thm:main:gap}

We will prove the auxiliary theorem, Theorem~\ref{thm:main:gap}, in this section. We first present the key ideas in Section~\ref{sec:ideas:thm:main:gap}. 
%Next, we discuss the construction of the final network in Section~\ref{sec:final:network:main:gap}.
Next, the detailed proof is presented in Section~\ref{sec:detaied:proof:main:gap}, based on two propositions in Section~\ref{sec:ideas:thm:main:gap}, 
the proofs of which can be found in Sections~\ref{sec:proof:prop:floor:approx} and \ref{sec:proof:prop:point:fitting}.

\subsection{Key ideas of proving Theorem~\ref{thm:main:gap}}
\label{sec:ideas:thm:main:gap}

%We will show that an almost piecewise constant function $\phi$ implemented by a ReLU network is enough to achieve the desired approximation rate in Theorem~\ref{thm:main}. 
Our goal is to construct an almost piecewise constant function realized by a ReLU NestNet
to approximate the target function $f\in C([0,1]^d)$ well.
The construction can be divided into three main steps.
\begin{enumerate}[1.]
	\item First, we divide $[0,1]^d$  into a union of ``important'' cubes $\{Q_\bmbeta\}_{\bmbeta\in \{0,1,\cdots,K-1\}^d}$ and the trifling  region $\Omega([0,1]^d,K,\delta)$, where $K=\calO(n^{(s+1)/d})$. Each $Q_\bmbeta$ is associated with a representative $\bm{x}_\bmbeta\in Q_\bmbeta$ for each vector index  $\bmbeta$.  See Figure~\ref{fig:Q+TR} for illustrations. 
	
	\item Next, we design a vector-valued function $\bmPhi_1(\bmx)$ to map the whole cube $Q_\bmbeta$ to its index $\bmbeta$ for each $\bmbeta$.
	Here, $\bmPhi_1$ can be defined/constructed via \[\bmPhi_1(\bmx)=\big[\phi_1(x_1),\,\phi_1(x_2),\,\cdots,\,\phi_1(x_d)\big]^T,\]
	where each one-dimensional function $\phi_1$  is a step function outside the trifling region and hence can be  realized by a ReLU NestNet. 
	
	\item  The aim of the final step is essentially to solve a point fitting problem. We will construct a function $\phi_2$ realized by a ReLU NestNet to map $\bmbeta$ approximately to $f(\bmx_\bmbeta)$ for each $\bmbeta$. Then we have 
	\begin{equation*}
		\phi_2\circ\bmPhi_1(\bmx)=\phi_2(\bmbeta)\approx f(\bmx_\bmbeta)\approx f(\bmx)
		\quad \tn{for any $\bmx\in Q_\bmbeta$ and each $\bmbeta$,}
	\end{equation*}
	 implying 
	\begin{equation*}
		\phi\coloneqq \phi_2\circ\bmPhi_1 \approx  f \quad \tn{on $[0,1]^d\backslash\Omega([0,1]^d,K,\delta)$}.
	\end{equation*} 
	We remark that, in the construction of $\phi_2$, we only need to care about the values of $\phi_2$ sampled inside the set $\{0,1,\cdots,K-1\}^d$, which is a key point to ease the design of a ReLU NestNet to realize $\phi_2$ as we shall see later. 
\end{enumerate}

\begin{figure}[htbp!]       
	\centering
	\includegraphics[width=0.75\textwidth]{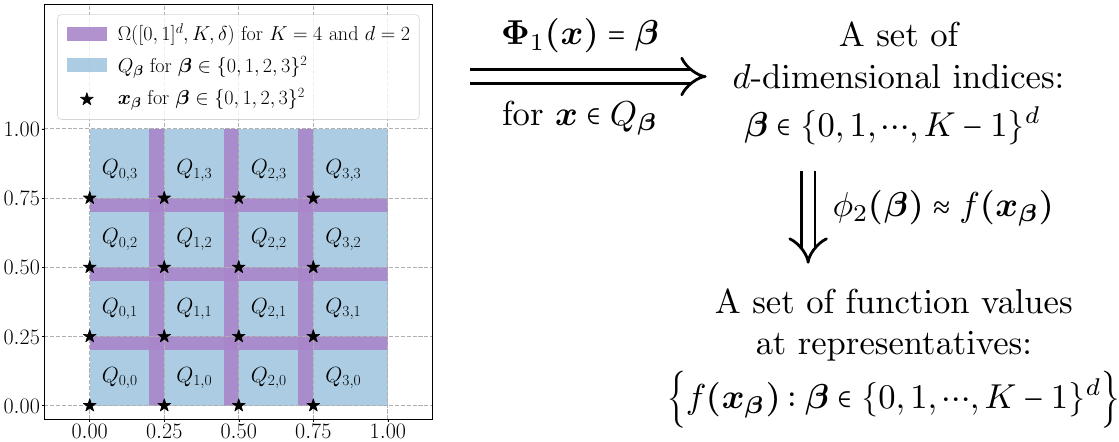}
	\caption{An illustration of the ideas of constructing the desired function $\phi=\phi_2\circ\bmPhi_1$. 
		Note that $\phi\approx f$ outside the trifling region
		  since $\phi(\bmx)=\phi_2\circ\bmPhi_1(\bmx)=\phi_2(\bmbeta)\approx f(\bmx_\bmbeta)\approx f(\bmx)$ for any $\bmx\in Q_\bmbeta$ and each $\bmbeta\in\{0,1,\cdots,K-1\}^d$.}
	\label{fig:idea}
\end{figure}

Observe that in Figure~\ref{fig:idea}, we have 
\begin{equation*}
	\phi(\bmx)=\phi_2\circ\bmPhi_1(\bmx)=\phi_2(\bmbeta)\mathop{\approx}\limits^{\scrE_1} f(\bmx_\bmbeta)\mathop{\approx}\limits^{\scrE_2} f(\bmx)
\end{equation*}
for any $\bmx\in Q_\bmbeta$ and each $\bmbeta\in \{0,1,\cdots,K-1\}^d$. That means $\phi-f$ is controlled by $\scrE_1+\scrE_2$ on $[0,1]^d\backslash \Omega([0,1]^d,K,\delta)$. 
Since $\|\bmx-\bmx_\bmbeta\|_2\le \sqrt{d}/K$ for any $\bmx\in Q_\bmbeta$ and each $\bmbeta$,  $\scrE_2$ is bounded by $\omega_f(\sqrt{d}/K)$. As we shall see later, $\scrE_1$ can be bounded by $\calO\big(\omega_f(\sqrt{d}/K)\big)$ by applying Proposition~\ref{prop:point:fitting}. Therefore, $\phi-f$ is controlled by $\calO\big(\omega_f(\sqrt{d}/K)\big)$ outside the trifling region, from which we deduce the desired approximation error since $K=\calO(n^{-(s+1)/d})$.

Finally, we introduce two propositions to simplify the constructions of $\bmPhi_1$ and $\phi_2$ mentioned above.
We first show how to  construct a ReLU network  to implement a one-dimensional step function $\phi_1$ in Proposition~\ref{prop:floor:approx} below. Then $\bmPhi_1$ can be defined via \[\bmPhi_1(\bmx)\coloneqq\big[\phi_1(x_1),\,\phi_1(x_2),\,\cdots,\,\phi_1(x_d)\big]^T\quad \tn{for any $\bmx=[x_1,x_2,\cdots,x_d]^T\in \R^d$.}\]

\begin{proposition}\label{prop:floor:approx}
	Given any $n,r\in \N^+$, $\delta\in (0,1)$, and $J\in\N^+$ with $J\le 2^{n^r}$, there exists
	$\phi\in \nestnet_{r}\big\{36(r+7)n\big\}$
%	  a function $\phi$ realized by a ReLU network with at most $256r^3n$ parameters 
	  such that  
	\begin{equation*}
		\phi(x)=\lfloor x\rfloor \quad \tn{for any $x\in \bigcup_{j=0}^{J-1}[j,\,j+1-\delta]$}
	\end{equation*}
	and 
	\begin{equation*}
		\phi(x)=J \quad \tn{for any $x\in [J,\,J+1]$.}
	\end{equation*}
\end{proposition}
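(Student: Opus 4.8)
The plan is to prove Proposition~\ref{prop:floor:approx} by induction on the height $r$, exploiting the parameter-sharing of NestNets: at each height level I raise the number of correctly represented ``steps'' from $J'$ to $(J')^n$, which turns $2^{n^{r-1}}$ into $(2^{n^{r-1}})^n=2^{n^r}$ and matches the stated bound. The size budget $36(r+7)n$ is respected because the sub-activation used at height $r-1$ is reused in every layer but counted only once.

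For the base case $r=1$, a height-$1$ NestNet is an ordinary ReLU network (its activations lie in $\bigcup_{i=0}^{0}\nestnet_i\{n_j\}=\{\tn{id}_\R,\sigma\}$), and I would realize $\lfloor x\rfloor$ on $\bigcup_{j=0}^{2^n-1}[j,j+1-\delta]$ by composing a constant-size ReLU floor gadget $n$ times (the standard bit-extraction construction). This has width $\calO(1)$ and depth $\calO(n)$, hence size well within $36\cdot 8\cdot n=288n$ parameters, and is independent of $\delta$.

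For the inductive step I assume the statement at height $r-1$. First I apply it with the same $n$ but a much smaller tolerance $\delta'$ to obtain a shared sub-activation $\psi\in\nestnet_{r-1}\{36(r+6)n\}$ realizing $\lfloor\cdot\rfloor$ exactly on $\bigcup_j[j,j+1-\delta']$ up to $J'=2^{n^{r-1}}$; crucially the size $36(r+6)n$ does not depend on $\delta'$, so $\delta'$ may be taken as small as needed. Then I build $\phi$ as a height-$r$ NestNet implementing base-$J'$ Horner peeling: for $x\in[0,(J')^n)$ one has $\lfloor x\rfloor=\sum_{i=0}^{n-1}a_i(J')^i$, and the digits are extracted from the top by $a_{n-t}=\psi\big(R_{t-1}/(J')^{n-t}\big)$, $R_t=R_{t-1}-a_{n-t}(J')^{n-t}$, starting from $R_0=x$, so that $\lfloor x\rfloor=x-R_n$. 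Each of the $n$ steps is one hidden layer of width $\calO(1)$ that applies $\psi$ to a single neuron (reading off a digit) while carrying $R_{t-1}$ through an identity activation. Since the same $\psi$ is reused in every layer, its parameters are counted once, and the $\calO(1)$-size affine maps over $n$ layers contribute at most $36n$ additional parameters, giving $36(r+6)n+36n=36(r+7)n$ in total. Finally I cap the output at $J$ via $J-\sigma\big(J-(x-R_n)\big)$ to enforce $\phi(x)=J$ on $[J,J+1]$, using $J\le(J')^n$ and that the (uncapped) floor is nondecreasing and $\ge J$ there.

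The \emph{main obstacle} is the $\delta$-region bookkeeping that makes the peeling \emph{exact} rather than merely approximate: a single wrong digit corrupts the remainder and hence every later digit, so there is no room for error accumulation. Concretely I must guarantee that whenever $x\in\bigcup_k[k,k+1-\delta]$, each scaled argument $R_{t-1}/(J')^{n-t}$ lands in the good set $\bigcup_j[j,j+1-\delta']$ of $\psi$. A short estimate gives $R_{t-1}/(J')^{n-t}\in\big[a_{n-t},\,a_{n-t}+1-\delta/(J')^{n-t}\big]$, so choosing $\delta'\le\delta/(J')^{n-1}$ forces $\psi$ to return the exact integer digit at every step and preserves the invariant $R_t\in[k_t,k_t+1-\delta]$ with $k_t=k\bmod(J')^{n-t}$. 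Verifying this invariant across all $n$ steps, together with the constant tracking needed to stay within the $36n$ per-level budget and the reuse of $\psi$ in the sense of Equation~\eqref{eq:number:of:parameters}, is where the real work lies; the remainder is routine affine algebra.
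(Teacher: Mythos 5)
Your core construction is essentially the paper's own route: an induction on the height $r$, with a base case realizing $\lfloor\cdot\rfloor$ on $\bigcup_{j=0}^{2^n-1}[j,j+1-\delta]$ by an $\calO(1)$-width, $\calO(n)$-depth ReLU bit-extraction network (the paper's Lemma~\ref{lem:floor:approx:base:case}), and an induction step that peels base-$J'$ digits from the top using the height-$(r-1)$ floor network as a \emph{shared} activation whose parameters are counted once (the paper's Lemma~\ref{lem:floor:approx:induction:step}, inside Lemma~\ref{lem:floor:approx:nestnet:s:order}). Your tolerance bookkeeping $\delta'\le \delta/(J')^{n-1}$ is correct and even slightly sharper than the paper's (which shrinks by $(J')^{n}$ per level); since the parameter count is independent of the tolerance, this part of your argument is sound.

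The genuine gap is the final capping step, i.e.\ the second requirement $\phi(x)=J$ on $[J,J+1]$. You set $\phi=\min\{x-R_n,\,J\}$ and justify it by asserting that the uncapped output is ``nondecreasing and $\ge J$'' on $[J,J+1]$. Neither property follows from your construction: the induction hypothesis controls $\psi$ only on its good set $\bigcup_j[j,j+1-\delta']$, and on the complementary slivers $\psi$ is merely some continuous function with no sign, size, or monotonicity guarantee. On $[J,J+1]$ --- on the sliver $(J+1-\delta,J+1]$ when $J\le 2^{n^r}-1$, and on essentially the whole interval when $J=2^{n^r}$ --- the scaled arguments $R_{t-1}/(J')^{n-t}$ can land in those bad slivers, so the extracted ``digits'' and hence $x-R_n$ are uncontrolled and may be strictly smaller than $J$; then $\min\{x-R_n,J\}<J$ and the conclusion fails. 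This is exactly the difficulty the paper's proof is built to avoid: it first takes $\phi_0$ from Lemma~\ref{lem:floor:approx:nestnet:s:order}, sets $\tildeM=\max_{x\in[J,J+1]}|\phi_0(x)|$ and $M=(\tildeM+J)/\delta$, and defines
\begin{equation*}
  \phi(x)=\min\Big\{\phi_0(x)+M\,\sigma\big(x-(J-\delta)\big),\ J\Big\},
\end{equation*}
where the steep ramp vanishes on $\bigcup_{j=0}^{J-1}[j,j+1-\delta]$ (so the floor values are untouched) but on $[J,J+1]$ dominates any possible undershoot of $\phi_0$, forcing the argument of the min to be $\ge J$ before truncation; this needs only boundedness of $\phi_0$ on the compact set $[J,J+1]$, which is automatic. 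To repair your proof you need this ramp (or a strengthened induction invariant explicitly bounding $\psi$ on its bad slivers, which you neither state nor prove). A further, minor, issue: your budget $36(r+6)n+36n=36(r+7)n$ is exhausted by the peeling layers alone, leaving no parameters for the capping gadget, so the per-level overhead must be accounted more carefully (the paper's outer network costs $12n-7$ per level, leaving ample slack).
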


%\begin{proposition}\label{prop:floor:approx}
%	Given any $n,r\in \N^+$, $\delta\in (0,1)$, and $J\le n^r$, there exist  a function $\phi$ realized by a ReLU network with at most $256r^3n$ parameters such that  
%	\begin{equation*}
%		\phi(x)=\lfloor x\rfloor \quad \tn{for any $x\in \bigcup_{j=0}^{J-2}[j,\,j+1-\delta]$}
%	\end{equation*}
%	and 
%	\begin{equation*}
%		\phi(x)=J-1 \quad \tn{for any $x\in [J-1,\,J]$.}
%	\end{equation*}
%\end{proposition}

The construction of $\phi_2$ is mainly based on Proposition~\ref{prop:point:fitting} below, whose proof relies on 
the bit extraction technique proposed in   \cite{Bartlett98almostlinear}. As we shall see later, some pre-processing is necessary for meeting the requirements of applying Proposition~\ref{prop:point:fitting} to construct $\phi_2$.

\begin{proposition}
	\label{prop:point:fitting}
	Given any $\varepsilon>0$ and $n,s\in \N^+$,
	assume   
	${y}_j\ge 0$ for $j=0,1,\cdots,J-1$ are samples  with $J\le n^{s+1}$ and
	\[|y_{j}-y_{j-1}|\le \varepsilon\quad \tn{for $j=1,2,\cdots,J-1.$}\] 
	Then there exists 
	$\phi\in \nestnet_{s}\big\{350(s+7)^2(n+1)\big\}$
%	a function $\phi$ realized by a height-$s$ ReLU NestNet with at most $3567(s+2)^4(n+1)$ parameters 
such that
	\begin{enumerate}[(i)]
		\item $|\phi(j)-{y}_j|\le \varepsilon$ for $j=0,1,\cdots,J-1$.
		\item $0\le \phi(x)\le  \max\{y_{j}:j=0,1,\cdots,J-1\}$ for any $x\in\R$.
	\end{enumerate}
\end{proposition}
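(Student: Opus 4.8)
The plan is to reduce the fitting problem to an integer prefix-sum problem tailor-made for the bit extraction technique of \cite{Bartlett98almostlinear}, and then to realize the extraction map efficiently inside a height-$s$ NestNet by reusing the high effective-depth primitives that already drive Proposition~\ref{prop:floor:approx}. First I would quantize the data. Since $y_j\ge 0$ and $|y_j-y_{j-1}|\le \varepsilon$, set $z_j\coloneqq \lfloor y_j/\varepsilon\rfloor\in\N$; then the $z_j$ are nonnegative integers with $z_j-z_{j-1}\in\{-1,0,1\}$ and $0\le y_j-\varepsilon z_j<\varepsilon$. Hence it suffices to build a NestNet $\psi$ with $\psi(j)=z_j$ for $j=0,\dots,J-1$, after which $\phi\coloneqq \min\bigl\{\max\{\varepsilon\,\psi,\,0\},\,M\bigr\}$ with $M\coloneqq\max_j y_j$ satisfies $|\phi(j)-y_j|\le\varepsilon$ (giving (i)) and is clamped into $[0,M]$ (giving (ii)) at the cost of two extra affine/ReLU layers. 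Writing $z_j=z_0+\sum_{i=1}^{j}b_i$ with $b_i\coloneqq z_i-z_{i-1}+1\in\{0,1,2\}$ turns the task into fitting the prefix sums of the ternary word $(b_1,\dots,b_{J-1})$, which is exactly a bit extraction problem.

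The data would be stored not in the shared sub-network activations but in the real coefficients of the affine maps $\bmcalL_i$, via base-$4$ (or binary) expansions: a single real parameter carries arbitrarily many symbols, so $\calO(n)$ such coefficients suffice to hold all $J-1\le n^{s+1}$ symbols. Given the input $j$, the extraction of $\sum_{i\le j}b_i$ requires reading digits at positions up to $\calO(n^{s+1})$, i.e.\ an effective depth far beyond what a standard network of $\calO(n)$ parameters can reach. This is precisely where the nested structure is used: the floor/step sub-networks supplied by Proposition~\ref{prop:floor:approx} provide the enormous effective depth (up to $2^{n^r}$ pieces) needed to isolate and sum the relevant digits, while being counted only once as shared activations.

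I would organize the construction inductively, going from a height-$(s-1)$ fitter for $n^{s}$ samples to a height-$s$ fitter for $n^{s+1}$ samples by decomposing the index in base $n$, $j=\sum_{i=0}^{s}c_i n^i$, and peeling off one more digit at the top level. Peeling a digit uses one floor sub-network (cost $\calO(sn)$ by Proposition~\ref{prop:floor:approx}) together with an $\calO(n)$-parameter selection/extraction block, so each of the $s$ levels contributes $\calO(sn)$ parameters; the telescoping sum $\sum_i(2i+c)\sim s^2$ is exactly what produces the quadratic factor $350(s+7)^2(n+1)$, in contrast with the linear $\calO(rn)$ of Proposition~\ref{prop:floor:approx}, which stores no data.

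The \textbf{main obstacle} is the bit extraction woven through the $s$ nesting levels: I must arrange that each level multiplies the number of fittable samples by $n$ while (a) keeping that level's parameter cost $\calO(sn)$, (b) correctly threading the partial prefix sum and the residual index through the shared nested activations, and (c) maintaining nonnegativity and the uniform bound needed for (ii) at every intermediate stage. Getting the constant to come out as $350(s+7)^2$, rather than something exponential in $s$, rests entirely on this careful recursive bookkeeping; by comparison the preprocessing reduction and the final clamping are routine.
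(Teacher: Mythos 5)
Your proposal takes essentially the same route as the paper's proof: quantize $y_j$ to $\lfloor y_j/\varepsilon\rfloor$, encode the unit increments as bounded digits stored in the real coefficients of shallow affine/CPwL layers, extract prefix sums by an induction over height in which each level peels one base-$n$ digit of the index using the floor networks of Proposition~\ref{prop:floor:approx} (with the telescoping $\sum_r (r+7)\sim s^2$ producing the $(s+7)^2$ factor), and clamp with $\min\{\sigma(\cdot),M\}$ at the end to obtain (ii). The only cosmetic difference is that you keep ternary increments $b_i\in\{0,1,2\}$, whereas the paper splits each increment as $c_{i,\ell}-d_{i,\ell}$ with $c_{i,\ell},d_{i,\ell}\in\{0,1\}$ and applies its binary bit-extraction lemma twice.
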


The proofs of these two propositions can be found in Sections~\ref{sec:proof:prop:floor:approx} and \ref{sec:proof:prop:point:fitting}.
We will give the detailed proof of Theorem~\ref{thm:main:gap} in Section~\ref{sec:detaied:proof:main:gap}.

%%%%%%%%%%%%%%%%%%%%%%%%%%%%%%%%%%%%%%%%%%%
%%%%%%%%%%%%%%%%%%%%%%%%%%%%%%%%%%%%%%%%
\subsection{Detailed proof of Theorem~\ref{thm:main:gap}}
\label{sec:detaied:proof:main:gap}

We essentially construct an almost piecewise constant function realized by a ReLU NestNet with at most $\calO(n)$ parameters to approximate $f$. \black{We may assume $f$ is not a constant function since it is a trivial case. Then $\omega_f(r)>0$ for any $r>0$.}
It is clear that $|f(\bm{x})-f(\bmzero)|\le \omega_f(\sqrt{d})$ for any ${\bm{x}}\in [0,1]^d$. By defining $\tildef\coloneqq  f-f(\bmzero)+\omega_f(\sqrt{d})$, we have
$\omega_\tildef(r)=\omega_f(r)$ for any $r\ge 0$ and  $0\le \tildef (\bm{x}) \le 2\omega_f(\sqrt{d})$ for any ${\bm{x}}\in [0,1]^d$. 

Set $K=\lfloor n^{(s+1)/d}\rfloor$ and let $\delta$ be an arbitrary number in $(0,\tfrac{1}{3K}]$.
The proof can be divided into four main steps as follows:
\begin{enumerate}
	\item %Normalize $f$ as $\tildef$, d
	Divide $[0,1]^d$ into a union of sub-cubes $\{Q_{\bm{\beta}}\}_{\bm{\beta}\in \{0,1,\cdots,K-1\}^d}$ and the trifling region $\Omega([0,1]^d,K,\delta)$, and denote $\bmx_\bmbeta$ as the vertex of $Q_\bmbeta$ with minimum $\|\cdot\|_1$ norm.
	
	\item Construct a sub-network based on Proposition~\ref{prop:floor:approx} to implement a vector function $\bmPhi_1$ projecting the whole cube $ Q_\bmbeta$ to the $d$-dimensional index $\bmbeta$ for each $\bmbeta$, i.e., $\bmPhi_1(\bmx)=\bmbeta$ for all $\bmx\in Q_\bmbeta$.
	
	\item Construct a sub-network to implement a  function $\phi_2$  mapping the index $\bmbeta$ approximately to $\tildef(\bmx_\bmbeta)$. This core step can be further divided into three sub-steps:	
	\begin{enumerate}
		\item[3.1.] Construct a sub-network to implement $\psi_1$ bijectively mapping the index set $\{0,1,\cdots,K-1\}^d$ to an auxiliary set $\mathcal{A}_1\subseteq \big\{\tfrac{j}{2K^d}:j=0,1,\cdots,2K^d\big\}$ defined later. See Figure~\ref{fig:g+A12} for an illustration.
		
		\item[3.2.] Determine a continuous piecewise linear function $g$ with a set of breakpoints $\calA_1\cup\calA_2\cup\{1\}$, where $\calA_2\in \big\{\tfrac{j}{2K^d}:j=0,1,\cdots,2K^d\big\}$ is a set defined later.  Moreover, $g$ should satisfy two conditions: 1) the values of $g$ at breakpoints in $\calA_1$ is given based on $\{\tildef(\bmx_\bmbeta)\}_\bmbeta$, i.e., $g\circ \psi_1(\bmbeta)=\tildef(\bmx_\bmbeta)$; 2) the values of $g$ at breakpoints in $\calA_2\cup\{1\}$ is defined to reduce the variation of $g$, which is necessary for applying Proposition~\ref{prop:point:fitting}.
		
		%		such that $\tildef$ and $g\circ \phi_2 \circ \bmPhi_1$ have the same value at the elements of $\{\tfrac{k}{K}:k=0,1,\cdots,K-1\}^d$, i.e. $\tildef\approx g\circ \phi_2\circ \bmPhi_1$;
		
		\item[3.3.] Apply Proposition~\ref{prop:point:fitting} to construct a sub-network to implement a function $\psi_2$ approximating $g$ well on $\calA_1\cup\calA_2\cup\{1\}$. Then the desired function $\phi_2$ is given by  $\phi_2=\psi_2\circ\psi_1$ satisfying $\phi_2(\bmbeta)=\psi_2\circ\psi_1(\bmbeta)\approx g\circ\psi_1(\bmbeta)=\tildef(\bmx_\bmbeta)$.
		%		based on Proposition~\ref{prop:point:fitting} to implement a function  $\psi_3$ mapping $\mathcal{A}_1$ approximately to $\big\{\tildef(\bmx):\bmx\in \{\tfrac{k}{K}:k=0,1,\cdots,K-1\}^d\big\}$ such that $g\approx \phi_3$ on $\mathcal{A}_1$;
	\end{enumerate}
	
	\item Construct the final  network to implement the desired function $\phi$ via $\phi= \phi_2 \circ \bmPhi_1 +f(\bmzero)-\omega_f(\sqrt{d})$. Then we have
	$\phi_2 \circ \bmPhi_1(\bmx)=\phi_2(\bmbeta)\approx \tildef(\bmx_\bmbeta)\approx \tildef(\bmx)$ for any $\bmx\in Q_\bmbeta$ and $\bmbeta\in \{0,1,\cdots,K-1\}^d$, implying $\phi(\bmx)= \phi_2 \circ \bmPhi_1(\bmx) +f(\bmzero)-\omega_f(\sqrt{d})
	\approx \tildef(\bmx) +f(\bmzero)-\omega_f(\sqrt{d})= f(\bmx)$.
\end{enumerate}

\begin{figure}[htbp!]
	\centering
	\includegraphics[width=0.95\textwidth]{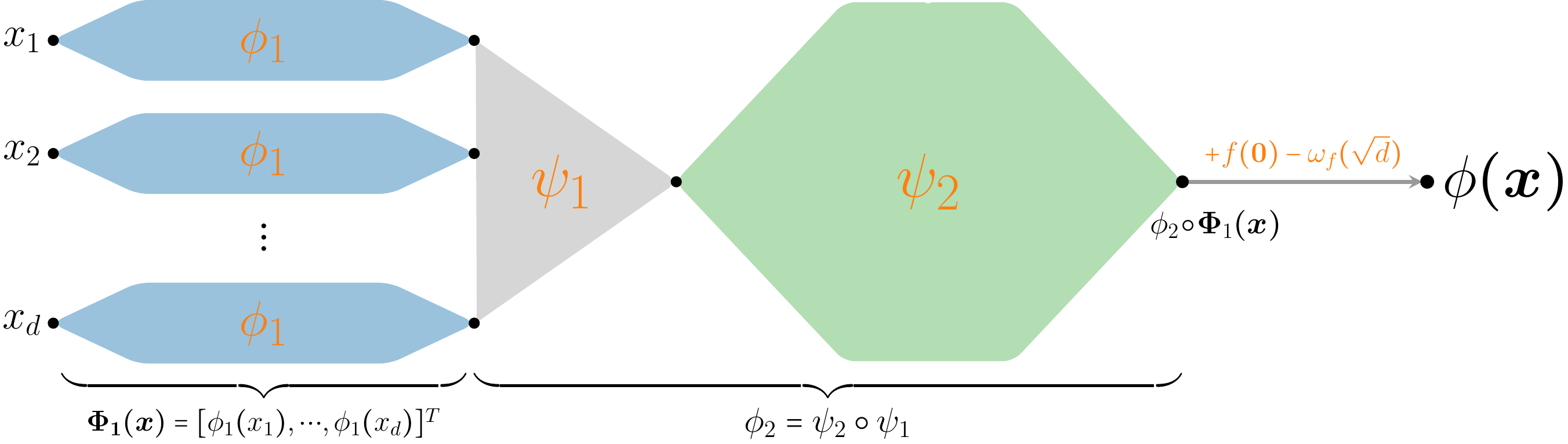}
	\caption{An illustration of the NestNet architecture realizing $\phi=\phi_2\circ\bmPhi_1+f(\bmzero)-\omega_f(\sqrt{d})$. Here, $\phi_1$ is implemented via Proposition~\ref{prop:floor:approx}; $\psi_1:\R^d\to \R$ is an affine linear function;  $\psi_2$ is implemented via Proposition~\ref{prop:point:fitting}.}
	\label{fig:final:net}
\end{figure}

See Figure~\ref{fig:final:net} for an illustration of the NestNet architecture realizing $\phi=\phi_2\circ\bmPhi_1+f(\bmzero)-\omega_f(\sqrt{d})$.   
The details of the steps mentioned above can be found below.

\mystep{1}{Divide $[0,1]^d$ into  $\{Q_{\bm{\beta}}\}_{\bm{\beta}\in \{0,1,\cdots,K-1\}^d}$ and  $\Omega([0,1]^d,K,\delta)$.}

Define  $\bmx_\bmbeta \coloneqq \bmbeta/K$ and 
\[
Q_{\bm{\beta}}\coloneqq\Big\{{\bm{x}}= [x_1,x_2,\cdots,x_d]^T\in [0,1]^d:x_i\in[\tfrac{\beta_i}{K},\tfrac{\beta_i+1}{K}-\delta\cdot \one_{\{\beta_i\le K-2\}}], \quad i=1,2,\cdots,d\Big\}
\]
for each $d$-dimensional index  ${\bm{\beta}}= [\beta_1,\beta_2,\cdots,\beta_d]^T\in \{0,1,\cdots,K-1\}^d$. Recall that $\Omega([0,1]^d,K,\delta)$ is the trifling region defined in Equation~\eqref{eq:triflingRegionDef}. Apparently, $\bmx_\bmbeta=\bmbeta/K$ is the vertex of $Q_\bmbeta$ with minimum $\|\cdot\|_1$ norm and 
\[[0,1]^d= \big(\cup_{\bm{\beta}\in \{0,1,\cdots,K-1\}^d}Q_{\bm{\beta}}\big)\bigcup \Omega([0,1]^d,K,\delta).\]
See Figure~\ref{fig:Q+TR} for illustrations.

\begin{figure}[htbp!]
	\centering
	\begin{minipage}{0.8\textwidth}
		\centering
		\begin{subfigure}[b]{0.435\textwidth}
			\centering
			\includegraphics[width=0.85\textwidth]{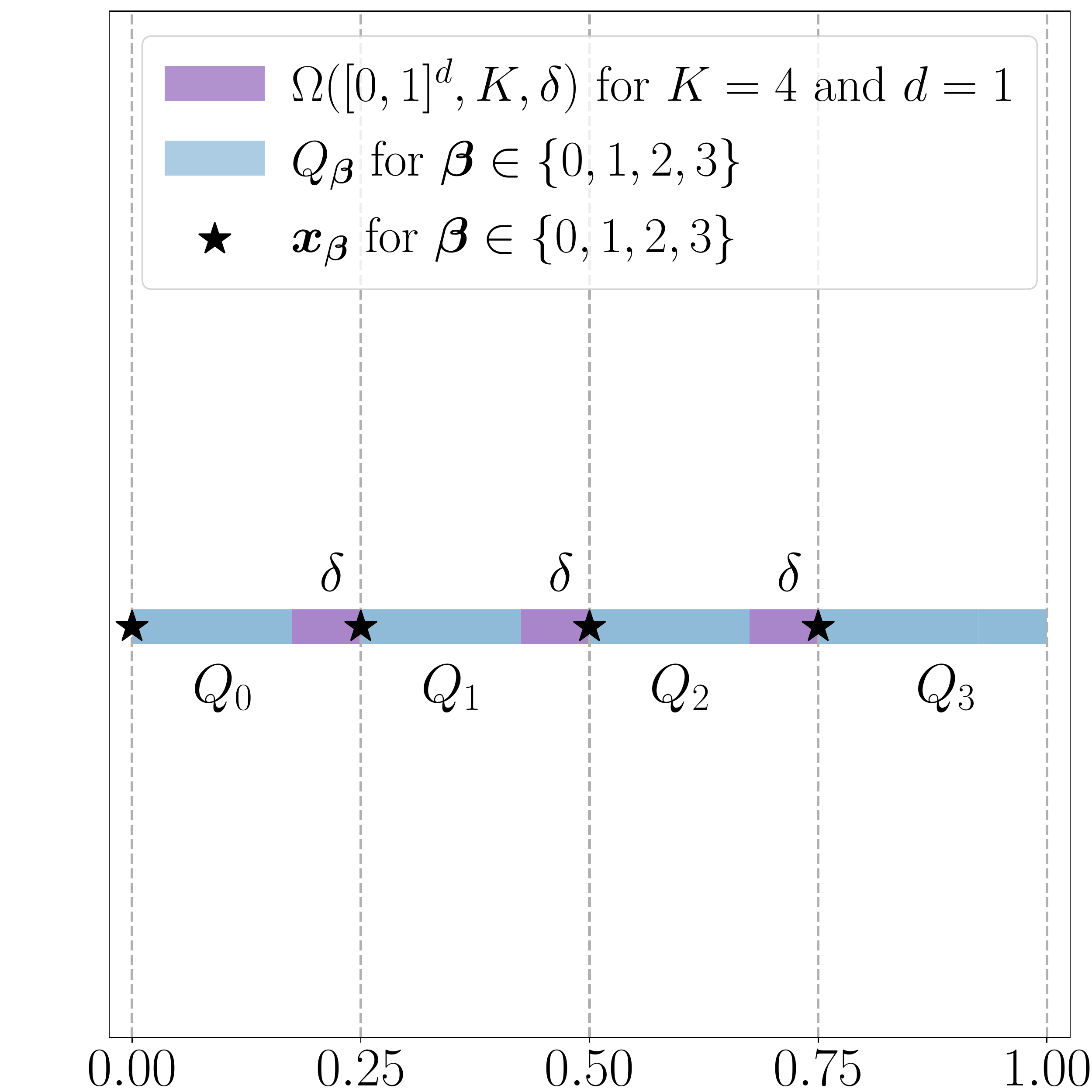}
			\subcaption{}
		\end{subfigure}
		\begin{minipage}{0.064\textwidth}
			\hspace{2pt}
		\end{minipage}
		\begin{subfigure}[b]{0.435\textwidth}
			\centering
			\includegraphics[width=0.85\textwidth]{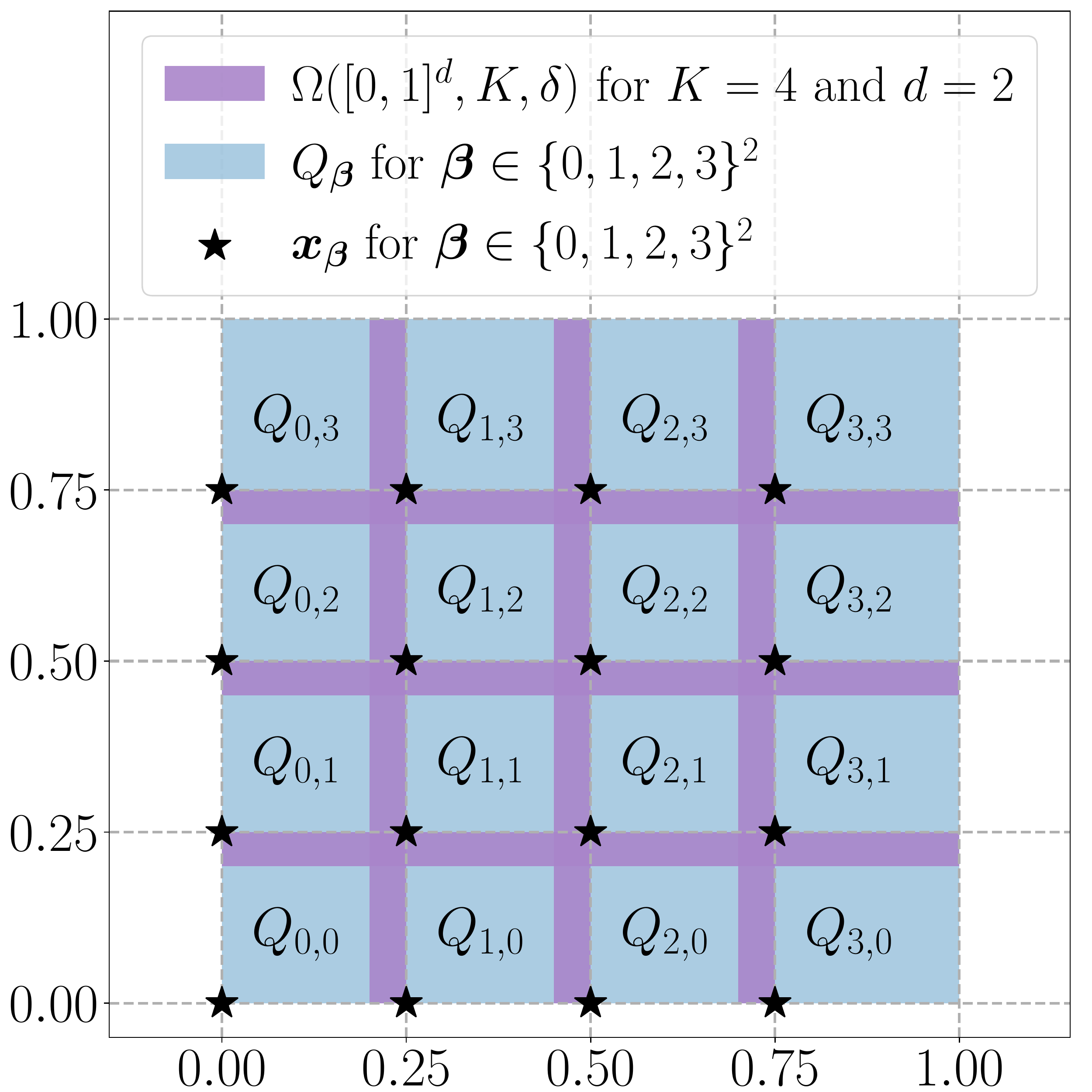}
			\subcaption{}
		\end{subfigure}
	\end{minipage}
	\caption{Illustrations of  $\Omega([0,1]^d,K,\delta)$,  $Q_\bmbeta$, and $\bmx_\bmbeta$ for $\bmbeta\in \{0,1,\cdots,K-1\}^d$. (a) $K=4$ and $d=1$. (b) $K=4$ and $d=2$. }
	\label{fig:Q+TR}
\end{figure}

\mystep{2}{Construct $\bmPhi_1$ mapping $\bmx\in Q_\bmbeta$ to  $\bmbeta$.}

Note that 
\begin{equation*}
	K-1=\lfloor n^{(s+1)/d}\rfloor-1\le  n^{s+1}\le
	\big(n^s\big)^2\le 4^{(n^s)}=  2^{2(n^{s})}
	\le 2^{(2n)^{s}}=2^{\tilden^{s}},
\end{equation*} 
where $\tilden=2n$.
By Proposition~\ref{prop:floor:approx} with $r=s$ and $J=K-1\le 2^{\tilden^{s}}=2^{\tilden^r}$ therein, there exists
\begin{equation*}
	\tildephi_1\in \nestnet_{s}\big\{36(s+7)\tilden\big\}
	=\nestnet_{s}\big\{36(s+7)(2n)\big\}
	=\nestnet_{s}\big\{72(s+7)n\big\}
\end{equation*}
%a function $\tildephi_1$ realized by a ReLU network with at most $256(s+1)^3n$ parameters
such that
\begin{equation*}
	\tildephi_1(x)=\lfloor x\rfloor \quad \tn{for any $x\in \bigcup_{k=0}^{K-2}[k,\,k+1-\tildedelta]$ with $\tildedelta=K\delta$}
\end{equation*}
and 
\begin{equation*}
	\tildephi_1(x)=K-1 \quad \tn{for any $x\in [K-1,\,K]$.}
\end{equation*}

Define $\phi_1(x)\coloneqq \tildephi_1(Kx)$ for any $x\in \R$. Then, we have $\phi_1\in \nestnet_{s}\big\{72(s+7)n\big\}$ and
\begin{equation*}
	\phi_1(x)=\black{k}\quad \tn{if $x\in [\tfrac{k}{K},\tfrac{k+1}{K}-\delta\cdot \one_{\{k\le K-2\}}]$\quad for $k=0,1,\cdots,K-1$.}
\end{equation*}    
It follows that $\phi_1(x_i)=\beta_i$ if $\bmx=[x_1,x_2,\cdots,x_d]^T\in Q_\bmbeta$ for each $\bmbeta=[\beta_1,\beta_2,\cdots,\beta_d]^T$.

By defining
\begin{equation*}
	\bmPhi_1(\bmx)\coloneqq \big[\phi_1(x_1),\,\phi_1(x_2),\,\cdots,\,\phi_1(x_d)\big]^T\quad \tn{for any } \bmx=[x_1,x_2,\cdots,x_d]^T\in \R^d,
\end{equation*}
we have 
\begin{equation}\label{eq:bmphi:output:beta}
	\bmPhi_1(\bmx)=\bmbeta \quad \tn{if} \ \bmx\in Q_\bmbeta\quad \tn{for each $\bmbeta\in \{0,1,\cdots,K-1\}^d$}.
\end{equation} 

%Moreover, $\bmPhi_1$ can be realized by a ReLU network with at most $d^2\big(256(s+2)^3n\big)=256d^2(s+2)^3n$ parameters.

\mystep{3}{Construct $\phi_2$ mapping $\bmbeta$ approximately to $\tildef(\bmx_\bmbeta)$.}

The construction of the sub-network implementing $\phi_2$ is essentially based on Proposition~\ref{prop:point:fitting}. 
To meet the requirements of applying Proposition~\ref{prop:point:fitting}, we first define two auxiliary sets $\calA_1$ and $\calA_2$ as 
\begin{equation*}
	\calA_1\coloneqq \Big\{\tfrac{i}{K^{d-1}}+\tfrac{k}{2K^d}:i=0,1,\cdots,K^{d-1}-1\tn{\quad and \quad} k=0,1,\cdots,K-1\Big\}
\end{equation*}
and 
\begin{equation*}
	\calA_2\coloneqq \Big\{\tfrac{i}{K^{d-1}}+\tfrac{K+k}{2K^d}:i=0,1,\cdots,K^{d-1}\black{-1}\tn{\quad and \quad}k=0,1,\cdots,K-1\Big\}.
\end{equation*}
Clearly, 
\begin{equation*}
	\calA_1\cup\calA_2\cup\{1\}=\big\{\tfrac{j}{2K^d}:j=0,1,\cdots,2K^d\big\}\quad \tn{and}\quad \calA_1\cap\calA_2=\emptyset.
\end{equation*} 
See Figure~\ref{fig:Q+TR} for an illustration of $\calA_1$ and $\calA_2$. Next, we further divide this step into three sub-steps.

\mystep{3.1}{Construct $\psi_1$ bijectively mapping  $\{0,1,\cdots,K-1\}^d$ to $\mathcal{A}_1$.}

Inspired by the binary representation, we define
\begin{equation}
	\psi_1(\bmx)\coloneqq \frac{x_d}{2K^d}+\sum_{i=1}^{d-1}\frac{x_i}{K^i}\quad \tn{for any $\bmx=[x_1,x_2,\cdots,x_d]^T\in \R^d$.}
\end{equation}
Then $\psi_1$ is a linear function bijectively mapping the index set $\{0,1,\cdots,K-1\}^d$ to
\begin{equation*}
	\begin{split}
		&\quad \Big\{\psi_1(\bmbeta):\bm{\beta}\in \{0,1,\cdots,K-1\}^d\Big\}=\bigg\{\tfrac{\beta_d}{2K^d}+\sum_{i=1}^{d-1}\tfrac{\beta_i}{K^i}:\bm{\beta}\in \{0,1,\cdots,K-1\}^d\bigg\}\\
		&=\Big\{\tfrac{i}{K^{d-1}}+\tfrac{k}{2K^d}:i=0,1,\cdots,K^{d-1}\black{-1}\tn{\quad and\quad } k=0,1,\cdots,K-1\Big\}=\calA_1.
	\end{split}
\end{equation*}

\mystep{3.2}{Construct $g$ to satisfy $g\circ\psi_1(\bmbeta)=\tildef(\bmx_\bmbeta)$ and to meet the requirements of applying Proposition~\ref{prop:point:fitting}.}

Let $g:[0,1]\to \R$ be a continuous piecewise linear function with a set of breakpoints 
\begin{equation*}
	\left\{\tfrac{j}{2K^d}: j=0,1,\cdots,2K^d\right\}=\calA_1\cup\calA_2\cup\{1\}.
\end{equation*} 
Moreover, the values of $g$ at these breakpoints are assigned as follows:
\begin{itemize}
	\item At the breakpoint $1$, let $g(1)=\tildef(\bm{1})$, where $\bm{1}=[1,1,\cdots,1]^T\in \R^d$.
		
	\item For the breakpoints in $\mathcal{A}_1=\big\{\psi_1(\bmbeta):\bm{\beta}\in \{0,1,\cdots,K-1\}^d\big\}$, we set
	\begin{equation}
		\label{eq:ftog}
		g\big(\psi_1(\bmbeta)\big)=\tildef(\bmx_\bmbeta)\quad \tn{for any  $\bm{\beta}\in \{0,1,\cdots,K-1\}^d$}.
	\end{equation} 
	
	\item 
	The values of $g$ at the breakpoints in $\mathcal{A}_2$ are assigned to reduce the variation of $g$, which is a requirement of applying Proposition~\ref{prop:point:fitting}. Recall that 
	\begin{equation*}
		\big\{\tfrac{i}{K^{d-1}}-\tfrac{K+1}{2K^d},\ \tfrac{i}{K^{d-1}}\big\}\subseteq\calA_1\cup\{1\}\quad \tn{for $i=1,2,\cdots,K^{d-1}$,}
	\end{equation*}
	implying the values of $g$ at $\tfrac{i}{K^{d-1}}-\tfrac{K+1}{2K^d}$ and $\tfrac{i}{K^{d-1}}$ have been assigned in the previous cases for. Thus, the values of $g$ at the breakpoints in $\calA_2$ can be successfully assigned by letting $g$ linear on each interval 
	$[\tfrac{i}{K^{d-1}}-\tfrac{K+1}{2K^d},\, \tfrac{i}{K^{d-1}}]$ for $i=1,2,\cdots,K^{d-1}$ since $\calA_2\subseteq \bigcup_{i=1}^{K^{d-1}}[\tfrac{i}{K^{d-1}}-\tfrac{K+1}{2K^d},\, \tfrac{i}{K^{d-1}}]$. See Figure~\ref{fig:g+A12} for an illustration.
\end{itemize}

\begin{figure}[htbp!]
	\centering
	\includegraphics[width=0.88\textwidth]{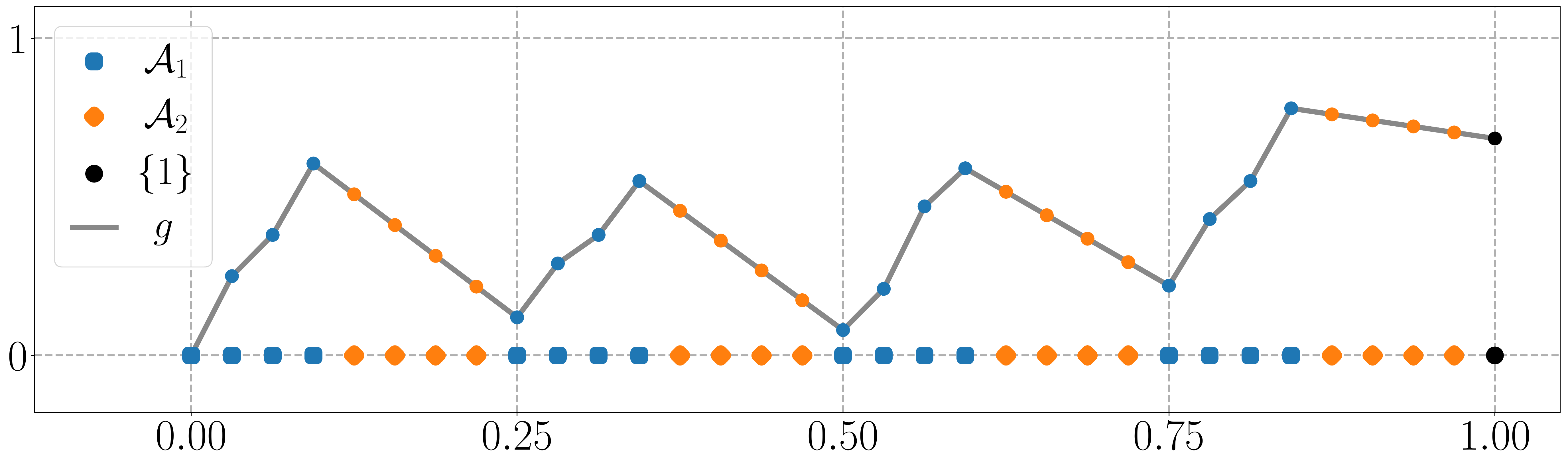}
	\caption{An illustration of $\mathcal{A}_1$, $\mathcal{A}_2$, $\{1\}$, and $g$ for $K=4$ and $d=2$.}
	\label{fig:g+A12}
\end{figure}

Apparently, such a function $g$ exists. See Figure~\ref{fig:g+A12} for an illustration of $g$.
It is easy to verify that
\begin{equation*}
	\label{eq:gErrorEstimation}
	\left|g(\tfrac{j}{2K^d})-g(\tfrac{j-1}{2K^d})\right|\le \max\Big\{\omega_\tildef(\tfrac{\sqrt{d}}{K}),\,
	\tfrac{\omega_\tildef({\sqrt{d}})}{K}\Big\}\le  \omega_\tildef(\tfrac{\sqrt{d}}{K})=\omega_f(\tfrac{\sqrt{d}}{K})
\end{equation*} 
for $j=1,2,\cdots,2K^d$. Moreover, we have
\begin{equation*}
	0\le g(\tfrac{j}{2K^d})\le 2\omega_f(\sqrt{d}) \quad \tn{for}\ j=0,1,\cdots,2K^d.
\end{equation*}

\mystep{3.3}{Construct $\psi_2$ approximating $g$ well on $\calA_1\cup\calA_2\cup\{1\}$.}

Observe that 
\begin{equation*}
	2K^d=2 \big(\lfloor n^{(s+1)/d}\rfloor \big)^d\le 2n^{s+1}\le (2n)^{s+1}=\tilden^{s+1},\quad \tn{where $\tilden=2n$}.
\end{equation*}
By Proposition~\ref{prop:point:fitting} with $y_j=g(\tfrac{j}{2K^2})$ and $\varepsilon=\omega_f(\tfrac{\sqrt{d}}{K})>0$ therein, there exists 
\begin{equation*}
	\tildepsi_2 \in \nestnet_{s}\Big\{350(s+7)^2(\tilden+1)\Big\}
	= \nestnet_{s}\Big\{350(s+7)^2(2n+1)\Big\}
\end{equation*}
%a function $\tildepsi_2$ realized by a height-$s$ NestNet with at most $3567(s+2)^4(\tilden+1)=3567(s+2)^4(2n+1)$ parameters
 such that
%\begin{equation*}
%	\begin{split}
	%		\tildepsi_2  \in \NNF(\NNinput=1\NNspace\NNwidth\le 16N+30\NNspace\NNdepth\le 6\lceil \sqrt{2}L\rceil+10\NNspace\NNoutput=1)
	%	\end{split}
%\end{equation*}
%such that
\begin{equation*}
	%\label{eq:phi1Minusg}
	|\tildepsi_2(j)-g(\tfrac{j}{2K^d})|\le \omega_f(\tfrac{\sqrt{d}}{K})\quad  \tn{for } j=0,1,\cdots,2K^d-1
\end{equation*}
and 
\begin{equation*}
	%\label{eq:phi3tUB}
	\begin{split}
		0\le \tildepsi_2(x) \le  \max\big\{g(\tfrac{j}{2K^d}):j=0,1,\cdots,2K^d-1\big\}\le 2\omega_f(\sqrt{d}) \quad \tn{for any $x\in\R$.}
	\end{split}
\end{equation*}

By defining $\psi_2(x)\coloneqq \tildepsi_2(2K^dx)$ for any $x\in \R$, we have 
%$\psi_2\in \NNF(\NNinput=1\NNspace\NNwidth\le 16N+30\NNspace\NNdepth\le 6\lceil \sqrt{2}L\rceil+10\NNspace\NNoutput=1)$,
\begin{equation}
	\label{eq:phi3tUB}
	\begin{split}
		0\le \psi_2(x)=\tildepsi_2(2K^dx) \le 2\omega_f(\sqrt{d}) \quad \tn{for any } x\in\R
	\end{split}
\end{equation}
and 
\begin{equation}
	\label{eq:phi1Minusg}
	|\psi_2(\tfrac{j}{2K^d})-g(\tfrac{j}{2K^d})|=|\tildepsi_2(j)-g(\tfrac{j}{2K^d})|\le \omega_f(\tfrac{\sqrt{d}}{K}) \quad \tn{for $ j=0,1,\cdots,2K^d-1.$}
\end{equation}

%\mystep{3.4}{Construct $\phi_2$ mapping $\bmbeta$ approximately to $\tildef(\bmx_\bmbeta)$.}
Let us end Step $3$ by defining the desired function $\phi_2$ as 
$\phi_2\coloneqq \psi_2\circ \psi_1$. 
%Note that $\psi_1:\R^d\to\R $ is an affine linear map,  and hence 
Recall that $\psi_1(\bmbeta)=\calA_1\subseteq \big\{\tfrac{j}{2K^d}:j=0,1,\cdots,2K^d-1\big\}$. 
Then, by Equations~\eqref{eq:ftog} and \eqref{eq:phi1Minusg},   we have
\begin{equation}
	\label{eq:phi2-tildef}
	\begin{split}
		\big|\phi_2(\bmbeta)-\tildef(\bmx_\bmbeta)\big|%&=\left|\psi_2\big(\sigma(\psi_1(\bmbeta))\big)-g(\psi_1(\bmbeta))\right|\\
		=\Big|\psi_2(\psi_1(\bmbeta))-g(\psi_1(\bmbeta))\Big|\le \omega_f(\tfrac{\sqrt{d}}{K})
	\end{split}
\end{equation}
for any $\bmbeta\in\{0,1,\cdots,K-1\}^d$.
Moreover, by Equation~\eqref{eq:phi3tUB} and $\phi_2= \psi_2\circ \psi_1$, we have  
\begin{equation}
	\label{eq:phi2tUB}
	\begin{split}
		0\le \phi_2(\bmx)=\psi_2\big(\psi(\bmx)\big)\le 2\omega_f(\sqrt{d}) \quad \tn{for any } \bmx\in\R^d.
	\end{split}
\end{equation}

\mystep{4}{Construct the final network to implement the desired function $\phi$.}

Define $\phi\coloneqq \phi_2\circ\bmPhi_1+f(\bmzero)-\omega_f(\sqrt{d})$. By Equation~\eqref{eq:phi2tUB}, we have
\begin{equation*}
	0\le \phi_2\circ\bmPhi_1(\bmx)\le 2\omega_f(\sqrt{d})
\end{equation*}
for any $\bmx\in\R^d$, implying
\begin{equation*}
	f(\bmzero)-\omega_f(\sqrt{d})\le \phi(\bmx)=\phi_2\circ\bmPhi_1(\bmx)+f(\bmzero)-\omega_f(\sqrt{d})\le f(\bmzero)+\omega_f(\sqrt{d}).
\end{equation*}
It follows that
$ \|\phi\|_{L^\infty(\R^d)}\le |f(\bmzero)|+ \omega_f(\sqrt{d})$. 

Next, let us estimate the approximation error.
Recall that $f=\tildef+f(\bmzero)-\omega_f(\sqrt{d})$ and $\phi=\phi_2\circ\bmPhi_1+f(\bmzero)-\omega_f(\sqrt{d})$. By Equations~\eqref{eq:bmphi:output:beta} and \eqref{eq:phi2-tildef}, for any $\bmx\in Q_\bmbeta$ and $\bmbeta\in \{0,1,\cdots,K-1\}^d$, we have
\begin{equation*}
	\begin{split}
		|f(\bmx)-\phi(\bmx)|
		&=\big|\tildef(\bmx)-\phi_2\circ\bmPhi_1(\bmx)\big|=|\tildef(\bmx)-\phi_2(\bmbeta)|\\
		&\le |\tildef(\bmx)-\tildef(\bmx_\bmbeta)|+|\tildef(\bmx_\bmbeta)-\phi_2(\bmbeta)|\\
		&\le \omega_f(\tfrac{\sqrt{d}}{K})+\omega_f(\tfrac{\sqrt{d}}{K})\le 2\omega_f\big(2\sqrt{d}\,n^{-(s+1)/d}\big),
	\end{split}
\end{equation*}
where the last inequality comes from the fact 
\begin{equation*}
	\begin{split}
		K= \lfloor n^{(s+1)/d}\rfloor
		&\ge n^{(s+1)/d}/2 \quad\tn{for $n\in\N^+$.}
	\end{split}
\end{equation*}
Recall the fact $\omega_f(j\cdot r)\le j\cdot\omega_f(r)$ for any $j\in\N^+$ and $r\in [0,\infty)$. Therefore, for any $\bmx\in \bigcup_{\bm{\beta}\in \{0,1,\cdots,K-1\}^d} Q_\bmbeta\black{=} [0,1]^d\backslash \Omega([0,1]^d,K,\delta)$, we have
\begin{equation*}
	\begin{split}
		|\phi(\bmx)-f(\bmx)|
		\le 2\omega_f\Big(2\sqrt{d}\,n^{-(s+1)/d}\Big)
		&\le 2\Big\lceil 2\sqrt{d}\Big\rceil \omega_f\big(n^{-(s+1)/d}\big)\\
		&\le 6\sqrt{d}\,\omega_f\big(n^{-(s+1)/d}\big). 
	\end{split}
\end{equation*}

\begin{figure}[htbp!]
	\centering
	\includegraphics[width=0.86\textwidth]{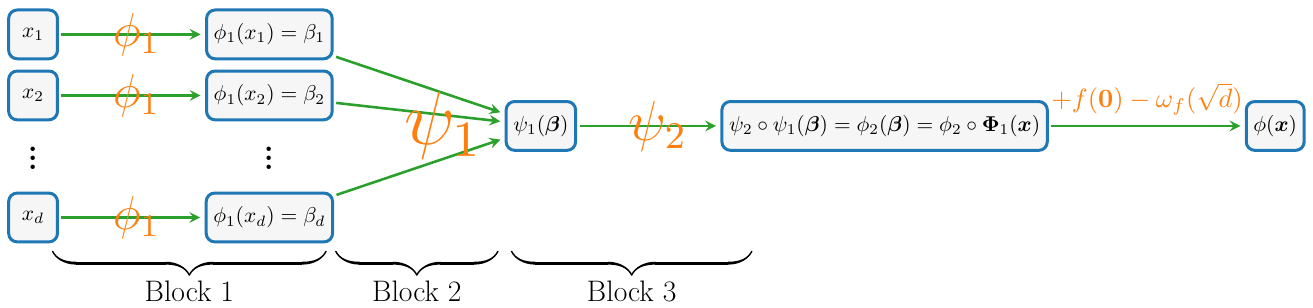}
	\caption{An illustration of the final NestNet realizing $\phi=\phi_2\circ\bmPhi_1+f(\bmzero)-\omega_f(\sqrt{d})$ for $\bmx=[x_1,x_2,\cdots,x_d]^T\in Q_\bmbeta$ for each $\bmbeta\in\{0,1,\cdots,K-1\}^d$.}
	\label{fig:phi:from:phi:1:psi:1:2}
\end{figure}

It remains to estimate the number of parameters in the NestNet realizing $\phi$, which is shown in Figure~\ref{fig:phi:from:phi:1:psi:1:2}. 
Recall that $\phi_1\in \nestnet_{s}\big\{72(s+7)n\big\}$,
$\psi_1$ is an affine linear map,
and $\psi_2\in \nestnet_{s}\big\{350(s+7)^2(2n+1)\big\}$.
%Block $1$ has at most $d^2\big(256(s+1)^3n\big)=256d^2(s+1)^3n$ parameters; Block $2$ has at most $d+1$ parameters; and Block $3$ has at most $3531(s+1)^4(2n+1)$ parameters. 
Therefore, $\phi=\phi_2\circ\bmPhi_1+f(\bmzero)-\omega_f(\sqrt{d})$ can be realized by a height-$s$ NestNet with at most
\begin{equation*}
	\underbrace{d^2\big(72(s+7)n\big)}_{\tn{Block 1}}
	\   +\  
	\underbrace{(d+1)}_{\tn{Block 2}}
	\   +\ 
	\underbrace{350(s+7)^2(2n+1)}_{\tn{Block 3}}
	\  +\ 1
	\   \le \
	355d^2(s+7)^2(2n+1)
\end{equation*}
parameters, which means we finish the proof of Theorem~\ref{thm:main:gap}.

%%%%%%%%%%%%%%%%%%%%%%%%%%%%%%%%%%%%%
%%%%%%%%%%%%%%%%%%%%%%%%%%%%%%%
\section{Proof of Proposition~\ref{prop:floor:approx} }
\label{sec:proof:prop:floor:approx}

%To simplify the proof of Proposition~\ref{prop:floor:approx}, we introduce the following lemma.

The key point of proving Proposition~\ref{prop:floor:approx} is 
the composition architecture of neural networks.
%the bit extraction technique proposed in \cite{Bartlett98almostlinear}.
%To simplify the proof of Proposition~\ref{prop:point:fitting},
To simplify the proof,
%of Proposition~\ref{prop:floor:approx},
we first establish several lemmas for proving Proposition~\ref{prop:floor:approx} 
%and give their proofs 
in Section~\ref{sec:lemmas:for:floor:approx}. 
Next, we present the detailed proof of Proposition~\ref{prop:floor:approx} in Section~\ref{sec:detailed:proof:floor:approx} based on the lemmas established in Section~\ref{sec:lemmas:for:floor:approx}.

\subsection{Lemmas for proving Proposition~\ref{prop:floor:approx}}
\label{sec:lemmas:for:floor:approx}

\begin{lemma}\label{lem:floor:approx:nestnet:s:order}
	Given any $n,r\in \N^+$ and $\delta\in \big(0,\,\tfrac{1}{C({r,n})}\big)$ with $C({r,n})=\prod_{i=1}^{r}2^{n^{i}}$,
	there exists $\phi \in \nestnet_{r}\big\{(12r+68)n\big\}$
	%	a function $\phi$ realized by a $(\sigma,\varrho_1,\varrho_2)$-activated network  with $298(s+1)^3(n+1)$ parameters 
	such that
	\begin{equation*}%\label{eq:floor:approx:n:power:r}
		\phi(x)=\lfloor x\rfloor \quad \tn{for any $x\in \bigcup_{\ell=0}^{2^{n^{r}}-1}\big[\ell,\,\ell+1-C({r,n})\cdot\delta\big]$.}
	\end{equation*}
\end{lemma}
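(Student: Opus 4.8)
My plan is to prove Lemma~\ref{lem:floor:approx:nestnet:s:order} by induction on the height $r$, exploiting the parameter-sharing built into the NestNet definition so that each increase in height costs only $\calO(n)$ additional parameters. The governing identity is $2^{n^r}=(2^{n^{r-1}})^n$: writing $M\coloneqq 2^{n^{r-1}}$, a height-$r$ floor with $2^{n^r}=M^n$ steps should be assembled from a height-$(r-1)$ floor with $M$ steps by reading off the $n$ digits of the base-$M$ expansion of $\lfloor x\rfloor$ in a Horner-type chain. On the parameter side this produces the recursion $P(r)=P(r-1)+12n$ with $P(1)=80n$, whose solution is exactly $P(r)=(12r+68)n$.

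For the base case $r=1$ a height-$1$ NestNet is an ordinary ReLU network, so I would build the $2^{n}$-step staircase directly by $n$ rounds of base-$2$ (bit) extraction. Maintaining a short state $(t_j,\,a_j)$ — the running remainder and the accumulated value — each round reads one bit of $\lfloor x\rfloor$ as a single steep ReLU ramp (a step that is exact outside a narrow gap), subtracts $2^{\,n-1-j}$ times that bit from $t_j$, and adds the same amount to $a_j$. This is a non-branching chain of depth $\calO(n)$ and bounded width, hence $\le 80n$ parameters, realizing $\lfloor x\rfloor$ for every $x$ away from the excluded neighborhoods of the integers.

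For the inductive step $r\ge 2$ I would assume $\phi_{r-1}\in\nestnet_{r-1}\{(12(r-1)+68)n\}$ realizes $\lfloor\cdot\rfloor$ with $M$ steps on its good set, and construct $\phi_r$ by the same chain, now in base $M$: at round $j$ ($j=0,\dots,n-1$) I feed the rescaled remainder $t_j/M^{\,n-1-j}\in[0,M)$ into $\phi_{r-1}$ to obtain the digit $d_{n-1-j}=\lfloor t_j/M^{\,n-1-j}\rfloor$, then set $t_{j+1}=t_j-M^{\,n-1-j}d_{n-1-j}$ and accumulate $M^{\,n-1-j}d_{n-1-j}$. Crucially, the single sub-network $\phi_{r-1}$ serves as the activation in every round, so its parameters are counted once; the carried state passes through identity activations (which lie in $\nestnet_0\{0\}$ and cost nothing), and the $\calO(n)$ affine maps doing the rescaling, subtraction and accumulation contribute at most $12n$ further parameters by a direct count. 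This gives $\phi_r\in\nestnet_r\{(12(r-1)+68)n+12n\}=\nestnet_r\{(12r+68)n\}$.

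The parameter count is the routine part; the real work, which I expect to be the main obstacle, is the bookkeeping of the excluded (gap) region. Each call to $\phi_{r-1}$ is exact only outside gaps of width $C(r-1,n)\delta$ about the integers of its domain, and under the rescaling by $M^{\,n-1-j}$ these become excluded strips of width $M^{\,n-1-j}C(r-1,n)\delta$ about multiples of $M^{\,n-1-j}$ in the $x$-variable. I would show that if $x\in[\ell,\ell+1-C(r,n)\delta]$ for an integer $\ell$, then every rescaled remainder $t_j/M^{\,n-1-j}$ stays clear of all these strips, so all $n$ rounds are exact and the chain outputs $\lfloor x\rfloor=\ell$. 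The choice $C(r,n)=C(r-1,n)\cdot 2^{n^r}=C(r-1,n)M^n$ is precisely what makes the margin $C(r,n)\delta<1$ dominate the union of the $n$ scaled gaps (the largest scaling being $M^{\,n-1}$), which is why the constant $C(r,n)=\prod_{i=1}^r 2^{n^i}$ appears. Verifying this inclusion of good sets, together with the invariant $t_{j}\in[0,M^{\,n-j})$ that keeps each input to $\phi_{r-1}$ inside $[0,M)$, completes the induction.
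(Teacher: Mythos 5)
Your proposal follows essentially the same route as the paper's proof: induction on the height $r$, with the base case built from a depth-$\calO(n)$ chain of steep ReLU ramps costing $80n$ parameters, and the induction step realized as a base-$M$ ($M=2^{n^{r-1}}$) Horner-type digit-extraction chain in which the height-$(r-1)$ floor network is reused as a single shared activation so that the parameter recursion is $P(r)=P(r-1)+\calO(n)$ and the gap constant scales as $C(r,n)=C(r-1,n)\cdot 2^{n^r}$. Your bookkeeping of the excluded region (the margin $C(r,n)\delta$ must dominate the largest rescaled gap $M^{n-1}C(r-1,n)\delta$, with the invariant keeping each input to $\phi_{r-1}$ in $[0,M)$) is precisely the verification the paper carries out in Lemmas~\ref{lem:floor:approx:base:case} and \ref{lem:floor:approx:induction:step}.
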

We will prove Lemma~\ref{lem:floor:approx:nestnet:s:order} by induction. To simplify the proof, we introduce two lemmas for the base case and the induction step.

First, we introduce the following lemma for the base case of proving Lemma~\ref{lem:floor:approx:nestnet:s:order}.
\begin{lemma}\label{lem:floor:approx:base:case}
	Given any $n\in \N^+$ and $\delta\in (0,1)$, 
	there exists 
	a function $\phi$ realized by a ReLU network  of width $4$ and depth $4n-1$
	such that
	\begin{equation*}
		\phi(x)=\lfloor x\rfloor \quad \tn{for any $x\in \bigcup_{\ell=0}^{2^{n}-1}[\ell,\,\ell+1-\delta]$.}
	\end{equation*}
\end{lemma}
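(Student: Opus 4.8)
The plan is to realize the staircase $\phi(x)=\lfloor x\rfloor$ on the good set $G_n\coloneqq\bigcup_{\ell=0}^{2^n-1}[\ell,\ell+1-\delta]$ by extracting the binary digits of $\lfloor x\rfloor$ one at a time, from the most significant bit downward, with a fixed‑width ReLU gadget per digit. The elementary building block is a single sharp step located at a point $c$,
\[
	s_c(x)\coloneqq \sigma\!\Big(\tfrac{x-c+\delta}{\delta}\Big)-\sigma\!\Big(\tfrac{x-c}{\delta}\Big),
\]
which equals $0$ for $x\le c-\delta$, equals $1$ for $x\ge c$, and rises linearly on the small interval $(c-\delta,c)$. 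The key observation I would exploit is that when $c$ is an integer the only non‑flat interval of $s_c$ lies exactly inside one of the forbidden gaps $(c-\delta,c)$ of $G_n$, so $s_c$ takes an \emph{exact} value in $\{0,1\}$ at every point of $G_n$.

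Then I would iterate this step‑and‑subtract. Set $u_0\coloneqq x$ and $a_0\coloneqq 0$, and for $k=1,\dots,n$ define $b_k\coloneqq s_{2^{n-k}}(u_{k-1})$, $u_k\coloneqq u_{k-1}-2^{n-k}b_k$, and $a_k\coloneqq a_{k-1}+2^{n-k}b_k$. The claim is the loop invariant: for every $x$ lying in a good interval, $b_k\in\{0,1\}$ holds exactly, $u_k$ lies in a good interval of $[0,2^{n-k})$ with the \emph{same} margin $\delta$, and $a_k$ equals the integer formed by the top $k$ binary digits of $\lfloor x\rfloor$. I would prove this by induction on $k$: the forbidden gap $(2^{n-k}-\delta,\,2^{n-k})$ of $s_{2^{n-k}}$ sits strictly between two consecutive good intervals and is therefore disjoint from all of them, so $b_k$ is exactly $0$ or $1$; the exact subtraction $u_k=u_{k-1}-2^{n-k}b_k$ then reproduces a good interval of the halved problem with the full margin $\delta$ intact. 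After $n$ iterations $a_n=\lfloor x\rfloor$ on $G_n$, and the network outputs $a_n$.

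For the size bounds I would carry out each iteration in a single ReLU layer of width $4$: the four neurons hold $\sigma(u_{k-1})$ (the remainder, which is nonnegative so the ReLU is transparent), the two neurons realizing $s_{2^{n-k}}$, and $\sigma(a_{k-1})$ (the accumulator, also nonnegative). Since the next remainder $u_k$, the next accumulator $a_k$, and the two step preactivations of the following layer are all affine functions of these four activations, one threads both the remainder and the partial sum through the same four channels without any extra width. This produces $n$ hidden layers, which together with the final output affine map stays within the claimed depth $4n-1$ for all $n\ge 1$.

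I expect the main obstacle to be twofold. First, the width‑$4$ packing: one must check carefully that evaluating the current step, updating the remainder, and updating the accumulator are \emph{simultaneously} expressible as affine maps of only four ReLU activations, which relies on the nonnegativity of $u_k$ and $a_k$ to obtain free identity channels. Second, and more conceptually, the exactness of each extracted bit on $G_n$ is what guarantees that the $\delta$‑margin survives all $n$ compositions with no shrinkage; this is precisely the feature that lets the base case avoid the $C(r,n)$‑type margin blow‑up appearing in the nested Lemma~\ref{lem:floor:approx:nestnet:s:order}, and I would state it as the crux of the induction.
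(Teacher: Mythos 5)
Your proposal is correct and follows essentially the same route as the paper's proof: both extract the binary digits of $\lfloor x\rfloor$ from the most significant bit downward using a two-neuron ReLU step that is exact off the $\delta$-gaps, and both thread the residual and the accumulated partial sum through width-$4$ layers using nonnegativity to make ReLU act as the identity. The only cosmetic difference is that the paper rescales the residual by $2^{j}$ and applies one fixed step $\phi_0$ at location $1$ with transition width $2^{-n}\delta$, whereas you keep the residual unscaled and move the step location to $2^{n-k}$ with transition width $\delta$; these are equivalent up to an affine change of variables, and your depth count of $n$ hidden layers sits comfortably inside the paper's bound of $4n-1$.
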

\begin{proof}%[Proof of Lemma~\ref{lem:floor:approx:base:case}]
%	By setting $m=2^{n^s}$, we have $m^n=\big(2^{n^s}\big)^n=2^{(n^s)n}=2^{n^{s+1}}$ and
%\begin{equation}\label{eq:floor:approx:m}
%	g(x)=\lfloor x\rfloor \quad \tn{for any $x\in \bigcup_{\ell=0}^{m-1}[\ell,\,\ell+1-\delta]$.}
%\end{equation}

Set $\tildedelta=2^{-n}\delta$ and define
\begin{equation*}
	\phi_0(x)\coloneqq \frac{\sigma(x-1+\tildedelta)-\sigma(x-1)}{\tildedelta}\quad \tn{for $x\in \R$.}
\end{equation*}
Clearly, $\phi_0$ can be realized by a one-hidden-layer ReLU network of width $2$. Moreover, we have
\begin{equation*}
	\phi_0(x)=\frac{\sigma(x-1+\tildedelta)-\sigma(x-1)}{\tildedelta}=\frac{0-0}{\tildedelta}=0\quad \tn{if $x\in [0,1-\tildedelta]$}
\end{equation*}
and
\begin{equation*}
	\phi_0(x)=\frac{\sigma(x-1+\tildedelta)-\sigma(x-1)}{\tildedelta}=\frac{(x-1+\tildedelta)-(x-1)}{\tildedelta}=1\quad \tn{if $x\in [1,2-\tildedelta]$}.
\end{equation*}

By  fixing
\begin{equation*}
	x\in
%	 \bigcup_{\ell=0}^{2^{n^{s+1}}-1}
%[\ell,\,\ell+1-2^{n^{s+1}}\delta]=
	  \bigcup_{\ell=0}^{2^n-1}[\ell,\,\ell+1-\delta]
	  =\bigcup_{\ell=0}^{2^n-1}[\ell,\,\ell+1-2^n\tildedelta],
\end{equation*}
we have $\lfloor x\rfloor \in \{0,1,\cdots,2^n-1\}$, implying that $\lfloor x\rfloor$ can be represented as
\begin{equation*}
	\lfloor x\rfloor = \sum_{i=0}^{n-1}z_i 2^i\quad \tn{for $z_0,z_1,\cdots,z_{n-1}\in \{0,1\}$.}
\end{equation*}
Then, for $j=0,1,\cdots,n-1$, we have $\sum_{i=0}^{j} z_i2^i+1\le z_j2^j+\sum_{i=0}^{j-1}2^i+1 \le    z_j2^j+2^j$, implying
\begin{equation*}
	\begin{split}
		\tfrac{x-\sum_{i=j+1}^{n-1} z_i2^i}{2^j}
		\in \Big[\tfrac{\lfloor x\rfloor-\sum_{i=j+1}^{n-1} z_i2^i}{2^j},\,\tfrac{\lfloor x\rfloor+1-2^n\tildedelta-\sum_{i=j+1}^{n-1} z_i2^i}{2^j}\Big]
		&=\Big[\tfrac{\sum_{i=0}^{j} z_i2^i}{2^j},\,\tfrac{\sum_{i=0}^{j} z_i2^i+1-2^n\tildedelta}{2^j}\Big]\\
		&\subseteq\Big[\tfrac{ z_j2^j}{2^j},\,\tfrac{ z_j2^j+2^j-2^n\tildedelta}{2^j}\Big]
		\subseteq [z_j,\, z_j+1-\tildedelta].
	\end{split}
\end{equation*}
It follows that
\begin{equation*}
	\phi_0\Big(\tfrac{x-\sum_{i=j+1}^{n-1} z_i2^i}{2^j}\Big)
	=z_j\quad \tn{for $j=0,1,\cdots,n-1$}.
\end{equation*}
Therefore, the desired function $\phi$ can be realized by the network in Figure~\ref{fig:floor:from:z:i:2:i}.

\begin{figure}[htbp!]
	\centering
	\includegraphics[width=0.985\textwidth]{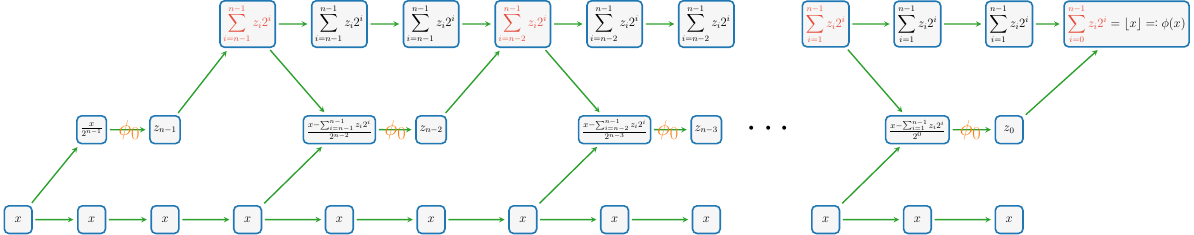}
	\caption{An illustration of the NestNet realizing $\phi$. Here,  $\phi_0$ represent an one-hidden-layer ReLU network of width $2$.}
	\label{fig:floor:from:z:i:2:i}
\end{figure}

Clearly, 
\begin{equation*}
	\phi(x)=\lfloor x\rfloor\quad \tn{for any 
	$x\in
	\bigcup_{\ell=0}^{2^n-1}[\ell,\,\ell+1-\delta].$}
\end{equation*}
Moreover, $\phi$ can be realized by a ReLU network of width $1+2+1=4$ and depth $(1+1+1)+(1+1+1+1)(n-1)=4n-1$. Hence, we finish the proof of Lemma~\ref{lem:floor:approx:base:case}.
\end{proof}

Next, we introduce the following lemma for the induction step of proving Lemma~\ref{lem:floor:approx:nestnet:s:order}.
\begin{lemma}\label{lem:floor:approx:induction:step}
	Given any $n,s,\hatn\in \N^+$ and $\delta\in \big(0,\,\tfrac{1}{2^{n^{s+1}}}\big)$, if $g \in \nestnet_{s}\{\hatn\}$ satisfying
	\begin{equation*}
			g(x)=\lfloor x\rfloor \quad \tn{for any $x\in \bigcup_{\ell=0}^{2^{n^s}-1}[\ell,\,\ell+1-\delta]$.}
	\end{equation*}
	Then there exists $\phi \in \nestnet_{s+1}\big\{\hatn+12n-7\big\}$
	%	a function $\phi$ realized by a $(\sigma,\varrho_1,\varrho_2)$-activated network  with $298(s+1)^3(n+1)$ parameters 
	such that
	\begin{equation*}
		\phi(x)=\lfloor x\rfloor \quad \tn{for any $x\in \bigcup_{\ell=0}^{2^{n^{s+1}}-1}[\ell,\,\ell+1-2^{n^{s+1}}\delta]$.}
	\end{equation*}
\end{lemma}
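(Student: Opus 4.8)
The plan is to realize $\phi$ by \emph{base-$M$ digit extraction}, where $M\coloneqq 2^{n^s}$. Since $2^{n^{s+1}}=M^n$, the given map $g$ — which computes $\lfloor\cdot\rfloor$ correctly on the $M$ ``steps'' $\bigcup_{\ell=0}^{M-1}[\ell,\ell+1-\delta]$, and hence outputs a base-$M$ digit — will serve as the single activation function that I stack $n$ times to reach $M^n$ steps. First I would fix $x$ in the target good set $\bigcup_{\ell=0}^{M^n-1}[\ell,\ell+1-M^n\delta]$, so that $\lfloor x\rfloor\in\{0,1,\cdots,M^n-1\}$ admits a base-$M$ expansion $\lfloor x\rfloor=\sum_{i=0}^{n-1}a_iM^i$ with $a_i\in\{0,1,\cdots,M-1\}$ and $x-\lfloor x\rfloor\in[0,1-M^n\delta]$.

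The key identity to establish is
\begin{equation*}
	a_j=g\Big(\tfrac{x-\sum_{i=j+1}^{n-1}a_iM^i}{M^j}\Big)\quad\tn{for $j=0,1,\cdots,n-1$,}
\end{equation*}
which I would prove by showing the argument lies in the good set of $g$. Using $x-\lfloor x\rfloor\le 1-M^n\delta$ together with the geometric bound $\sum_{i=0}^{j-1}a_iM^i\le M^j-1$,
\begin{equation*}
	\tfrac{x-\sum_{i=j+1}^{n-1}a_iM^i}{M^j}=a_j+\tfrac{(x-\lfloor x\rfloor)+\sum_{i=0}^{j-1}a_iM^i}{M^j}\in\big[a_j,\,a_j+1-M^{n-j}\delta\big]\subseteq\big[a_j,\,a_j+1-\delta\big],
\end{equation*}
since $M^{n-j}\ge1$ for $j\le n-1$; as $a_j\in\{0,1,\cdots,M-1\}$, this interval sits inside $\bigcup_{\ell=0}^{M-1}[\ell,\ell+1-\delta]$, where $g$ equals $\lfloor\cdot\rfloor=a_j$. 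This computation is the crux: it is exactly the accumulation of the gap across the $j$ low-order digits that forces the enlarged gap $M^n\delta=2^{n^{s+1}}\delta$ in the conclusion.

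I would then construct $\phi$ as a chain of $n$ blocks that extract $a_{n-1},a_{n-2},\cdots,a_0$ in turn. Carrying a running remainder $R_j\coloneqq x-\sum_{i=j+1}^{n-1}a_iM^i$ and accumulator $A_j\coloneqq\sum_{i=j+1}^{n-1}a_iM^i$ (both nonnegative, hence passed through by identity/ReLU neurons at zero parameter cost), each block applies the single shared activation $g$ to $R_j/M^j$ to produce $a_j$, and an affine map performs the update $(R_{j-1},A_{j-1})=(R_j-a_jM^j,\,A_j+a_jM^j)$; the final affine read-out returns $A_{-1}=\lfloor x\rfloor$. Because every block reuses the \emph{same} $g\in\nestnet_s\{\hatn\}$, the recursive definition of $\nestnet$ counts its $\hatn$ parameters only once, and since $g$ has height $s$, the resulting $\phi$ has height $s+1$.

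Finally I would tally the affine parameters. Only the maps $\bmcalL_0,\cdots,\bmcalL_n$ contribute beyond $g$; counting each via $\#\bmcalL=(d_1+1)d_2$ over the constant-width chain yields $12n-7$, so $\phi\in\nestnet_{s+1}\{\hatn+12n-7\}$, as claimed. The main obstacle is the bookkeeping rather than the analysis: one must arrange the construction so that $g$ genuinely appears as a single shared height-$s$ activation (preserving the one-time $\hatn$ count and landing the height at $s+1$) while squeezing the remainder/accumulator plumbing into exactly $12n-7$ affine parameters; the interval arithmetic above, though delicate in its constant, is otherwise routine.
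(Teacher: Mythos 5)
Your proposal is correct and follows essentially the same route as the paper's proof: base-$2^{n^s}$ digit extraction of $\lfloor x\rfloor$ via the shared activation $g$, the identical interval estimate showing each rescaled remainder lands in $[z_j,\,z_j+1-\delta]$, and a constant-width sequential chain of $n$ blocks whose affine maps account for exactly the $12n-7$ extra parameters. The only differences are cosmetic (your explicit remainder/accumulator bookkeeping versus the paper's figure-based network description).
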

\begin{proof}%[Proof of Lemma~\ref{lem:floor:approx:induction:step}]
	By setting $m=2^{n^s}$, we have $m^n=\big(2^{n^s}\big)^n=2^{(n^s)n}=2^{n^{s+1}}$ and
	\begin{equation}\label{eq:floor:approx:m}
		g(x)=\lfloor x\rfloor \quad \tn{for any $x\in \bigcup_{\ell=0}^{m-1}[\ell,\,\ell+1-\delta]$.}
	\end{equation}
	By fixing
	\begin{equation*}
		x\in \bigcup_{\ell=0}^{2^{n^{s+1}}-1}[\ell,\,\ell+1-2^{n^{s+1}}\delta]= \bigcup_{\ell=0}^{m^n-1}[\ell,\,\ell+1-m^n\delta],
	\end{equation*}
	we have $\lfloor x\rfloor \in \{0,1,\cdots,m^n-1\}$, implying that $\lfloor x\rfloor$ can be represented as
	\begin{equation*}
		\lfloor x\rfloor = \sum_{i=0}^{n-1}z_i m^i\quad \tn{for $z_0,z_1,\cdots,z_{n-1}\in \{0,1,\cdots,m-1\}$.}
	\end{equation*}
Then, for $j=0,1,\cdots,n-1$, we have 
\begin{equation*}
	\sum_{i=0}^{j} z_im^i+1\le z_jm^j+ \sum_{i=0}^{j-1}(m-1)m^i+1= z_jm^j+m^j,
\end{equation*} 
implying
\begin{equation*}
	\begin{split}
		\tfrac{x-\sum_{i=j+1}^{n-1} z_im^i}{m^j}
		&\in \Big[\tfrac{\lfloor x\rfloor-\sum_{i=j+1}^{n-1} z_im^i}{m^j},\,\tfrac{\lfloor x\rfloor+1-m^n\delta-\sum_{i=j+1}^{n-1} z_im^i}{m^j}\Big]\\
		&=\Big[\tfrac{\sum_{i=0}^{j} z_im^i}{m^j},\,\tfrac{\sum_{i=0}^{j} z_im^i+1-m^n\delta}{m^j}\Big]\\
		&\subseteq \Big[\tfrac{ z_jm^j}{m^j},\,\tfrac{ z_jm^j+m^j-m^n\delta}{m^j}\Big]
		\subseteq \big[z_j,\, z_j+1-\delta\big].
	\end{split}
\end{equation*}
It follows that
\begin{equation*}
	g\Big(\tfrac{x-\sum_{i=j+1}^{n-1} z_im^i}{m^j}\Big)
	=z_j\quad \tn{for $j=0,1,\cdots,n-1$}.
\end{equation*}
Therefore, the desired function $\phi$ can be realized by the network in Figure~\ref{fig:floor:from:z:i:m:i}.

\begin{figure}[htbp!]
	\centering
	\includegraphics[width=0.985\textwidth]{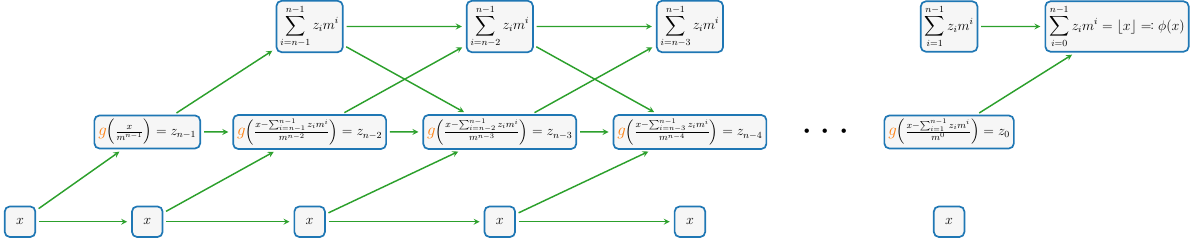}
	\caption{An illustration of the NestNet realizing $\phi$. Here,  $g$ is regarded as an activation function.}
	\label{fig:floor:from:z:i:m:i}
\end{figure}

 Clearly, 
\begin{equation*}
	\phi(x)=\lfloor x\rfloor\quad 
	\tn{for any $
	x\in
	\bigcup_{\ell=0}^{m^n-1}[\ell,\,\ell+1-m^n\delta]
	= \bigcup_{\ell=0}^{2^{n^{s+1}}-1}[\ell,\,\ell+1-2^{n^{s+1}}\delta].$}
\end{equation*}
Moreover, the fact $g\in \nestnet_{s}\{\hatn\}$ implies that $\phi$ can be realized by a height-$(s+1)$ NestNet with at most
\begin{equation*}
	\underbrace{(1+1)2+(2+1)3+ (3+1)3(n-2)+ (3+1)}_{\tn{outer network}}
	\   +\ 
	\underbrace{\hatn}_{g}
	\  = \ 
	\hatn + 12n-7
\end{equation*}
parameters. Hence, we finish the proof of Lemma~\ref{lem:floor:approx:induction:step}.
\end{proof}

With Lemmas~\ref{lem:floor:approx:base:case} and \ref{lem:floor:approx:induction:step} in hand, we are ready to prove Lemma~\ref{lem:floor:approx:nestnet:s:order}.
\begin{proof}[Proof of Lemma~\ref{lem:floor:approx:nestnet:s:order}]
	We will use the mathematical induction to prove Lemma~\ref{lem:floor:approx:nestnet:s:order}.
	First, we consider the base case $r=1$. By Lemma~\ref{lem:floor:approx:base:case}, there exists 
	a function $\phi$ realized by a ReLU network  of width $4$ and depth $4n-1$
	such that
	\begin{equation*}
		\phi(x)=\lfloor x\rfloor \quad \tn{for any $x\in \bigcup_{\ell=0}^{2^{n}-1}[\ell,\,\ell+1-\delta]\subseteq \bigcup_{\ell=0}^{2^{n}-1}[\ell,\,\ell+1-C(r,n)\cdot\delta]$ with $r=1$.}
	\end{equation*}
	Moreover, the network realizing $\phi$ has at most
	$(4+1)4\big((4n-1)+1\big)=80n$ parameters, implying $\phi\in \nestnet_{1}\{80n\}\subseteq \nestnet_{1}\{(12r+68)n\}$ for $r=1$. Thus, the base case $r=1$ is proved.
	
	Next, assume Lemma~\ref{lem:floor:approx:nestnet:s:order} holds for $r=s\in\N^+$. We need to show it is also true for $r=s+1$.
	By the induction hypothesis, 
	there exists $g \in \nestnet_{s}\big\{(12s+68)n\big\}$
	%	a function $\phi$ realized by a $(\sigma,\varrho_1,\varrho_2)$-activated network  with $298(s+1)^3(n+1)$ parameters 
	such that
	\begin{equation*}
		g(x)=\lfloor x\rfloor \quad \tn{for any $x\in \bigcup_{\ell=0}^{2^{n^{s}}-1}[\ell,\,\ell+1-C(s,n)\cdot\delta]$.}
	\end{equation*}	
	By Lemma~\ref{lem:floor:approx:induction:step} with $\hatn=(12s+68)n$ therein  and  setting $\hatdelta=C(s,n)\cdot\delta$, 
	there exists 
	\begin{equation*}
		\phi \in \nestnet_{s+1}\big\{\hatn+12n-7\big\}\subseteq \nestnet_{s+1}\big\{(12s+68)n+12n-7\big\}
		\subseteq \nestnet_{s+1}\Big\{\big(12(s+1)+68\big)n\Big\}
	\end{equation*}
	%	a function $\phi$ realized by a $(\sigma,\varrho_1,\varrho_2)$-activated network  with $298(s+1)^3(n+1)$ parameters 
	such that
	\begin{equation*}
		\phi(x)=\lfloor x\rfloor \quad \tn{for any $x\in \bigcup_{\ell=0}^{2^{n^{s+1}}-1}[\ell,\,\ell+1-2^{n^{s+1}}\hatdelta\,]$.}
	\end{equation*}
Observe that
\begin{equation*}
	2^{n^{s+1}}\hatdelta
	=2^{n^{s+1}}C(s,n)\cdot\delta
	=
	2^{n^{s+1}}\Big(\prod_{i=1}^{s}2^{n^{i}}\Big)\cdot\delta
	=\Big(\prod_{i=1}^{s+1}2^{n^{i}}\Big)\cdot\delta =C(s+1,n)\cdot\delta.
\end{equation*}
It follows that
\begin{equation*}
	\phi(x)=\lfloor x\rfloor \quad \tn{for any $x\in \bigcup_{\ell=0}^{2^{n^{s+1}}-1}[\ell,\,\ell+1-C(s+1,n)\cdot\delta]$.}
\end{equation*}
Thus, Lemma~\ref{lem:floor:approx:nestnet:s:order} is proved for the case $r=s+1$,
which means we finish the induction step. Hence, by the principle of induction, we complete the proof of Lemma~\ref{lem:floor:approx:nestnet:s:order}.
\end{proof}

\subsection{Detailed proof of Proposition~\ref{prop:floor:approx}}
\label{sec:detailed:proof:floor:approx}

%The proof of Lemma~\ref{lem:floor:approx:exp} is placed later in the section.
%Now, let us prove Proposition~\ref{prop:floor:approx} by assuming Lemma~\ref{lem:floor:approx:exp} is true.
%\begin{proof}[Proof of Proposition~\ref{prop:floor:approx}]
%	By Lemma~\ref{lem:floor:approx:exp} with $v=r\lceil \log_2 (n+1)\rceil \in \N^+$ therein, there exists a function $\phi_0$ realized by a ReLU network of width $v+1$ and depth $4v-3$ such that 
Set $C(r,n)=\prod_{i=1}^{r}2^{n^{i}}$ and $\tildedelta=\tfrac{\delta}{C(r,n)}\in \big(0,\, \tfrac{1}{C(r,n)}\big)$.
By Lemma~\ref{lem:floor:approx:nestnet:s:order}, there exists $\phi_0\in\nestnet_{r}\big\{(12r+68)n\big\}$ such that
	\begin{equation*}
		\phi_0(x)=\lfloor x\rfloor \quad \tn{for any $x\in \bigcup_{\ell=0}^{2^{n^r}-1}[\ell,\,\ell+1-C(r,n)\cdot\tildedelta]=\bigcup_{\ell=0}^{2^{n^r}-1}[\ell,\,\ell+1-\delta]$.}
	\end{equation*}

	It follows from $J\le 2^{n^r}$ that
	\begin{equation*}
		\phi_0(x)=\lfloor x\rfloor \quad \tn{for any $x\in \bigcup_{j=0}^{J-1}[j,\,j+1-\delta]$.}
	\end{equation*}
	Set
	\begin{equation*}
		\tildeM= \max_{x\in [J,J+1]} |\phi_0(x)| \quad \tn{and}\quad M=\frac{\tildeM+J}{\delta}.
	\end{equation*}
	Then, for any $x\in [J,\,J+1]$, we have
	\begin{equation*}
		\begin{split}
			\phi_0(x)+M\sigma\big(x-(J-\delta)\big)
			\ge -\tildeM+ M\delta=-\tildeM+(\tildeM+J)=J,
		\end{split}
	\end{equation*}
	implying
	\begin{equation*}
		\min\Big\{\phi_0(x)+M\sigma\big(x-(J-\delta)\big),\, J\Big\}=J.
	\end{equation*}
	Moreover, for any $x\in \bigcup_{j=0}^{J-1}[j,\,j+1-\delta]$,  we have $\sigma\big(x-(J-\delta)\big)=0$, implying
	\begin{equation*}
		\min\Big\{\phi_0(x)+M\sigma\big(x-(J-\delta)\big),\, J\Big\}=\min\Big\{\phi_0(x),\, J\Big\}=\min\Big\{\lfloor x\rfloor,\, J\Big\}=\lfloor x\rfloor.
	\end{equation*}
	
	Therefore, by defining 
	\begin{equation*}
		\phi(x)\coloneqq \min\Big\{\phi_0(x)+M\sigma\big(x-(J-\delta)\big),\, J\Big\} \quad \tn{for any $x\in \bigcup_{j=0}^{J}\big[j,\,j+1-\delta\cdot\one_{\{j\le J-1\}}\big]$,}
	\end{equation*}
	we have
	\begin{equation*}
		\phi(x)=\lfloor x\rfloor \quad \tn{for any $x\in \bigcup_{j=0}^{J-1}[j,\,j+1-\delta]$}
	\end{equation*}
	and 
	\begin{equation*}
		\phi(x)=J \quad \tn{for any $x\in [J,\,J+1]$.}
	\end{equation*}
	
	\begin{figure}[htbp!]       
		\centering         \includegraphics[width=0.8\textwidth]{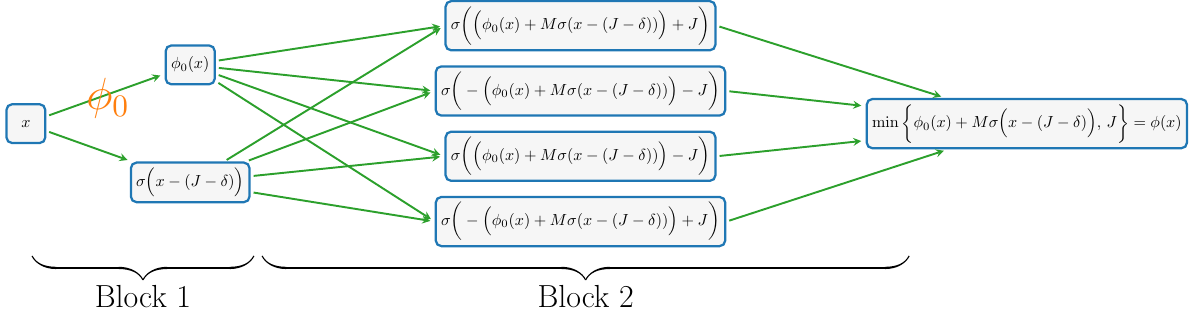}
		\caption{An illustration of the network realizing $\phi$ for any $x\in\bigcup_{j=0}^{J}\big[j,\,j+1-\delta\cdot\one_{\{j\le J-1\}}\big]$ based on the fact
			$\min\{a,b\}=\tfrac{1}{2}\big(\sigma(a+b)-\sigma(-a-b)-\sigma(a-b)-\sigma(-a+b)\big)$. }
		\label{fig:phi:from:phi0:min}
	\end{figure}
	
	Moreover, $\phi$ can be realized by the network in Figure~\ref{fig:phi:from:phi0:min}. The fact $\phi_0\in \nestnet_{r}\big\{(12r+68)n\big\}$ implies that  $\phi$ can be realized by a height-$r$ NestNet with at most
	\begin{equation*}
		\underbrace{3\Big((12r+68)n\Big)}_{\tn{Block 1}}
		\  +\ 
		\underbrace{(2+1)4+(4+1)}_{\tn{Block 2}}
		\   \le \ 36(r+7)n
	\end{equation*}
parameters.
	So we finish the proof of Proposition~\ref{prop:floor:approx}.
\section{Proof of Proposition~\ref{prop:point:fitting}}
\label{sec:proof:prop:point:fitting}

The key idea of proving Proposition~\ref{prop:point:fitting} is the bit extraction technique proposed in \cite{Bartlett98almostlinear}.
%To simplify the proof of Proposition~\ref{prop:point:fitting},
First, we establish several lemmas for proving Proposition~\ref{prop:point:fitting} and give their proofs in Section~\ref{sec:lemmas:for:point:fitting} except for Lemma~\ref{lem:bit:extraction}, the proof of which is placed in  Section~\ref{sec:proof:bit:extraction} since it is complicated.
Next, we present the detailed proof of Proposition~\ref{prop:point:fitting} in Section~\ref{sec:detailed:proof:point:fitting} based on the lemmas established in Section~\ref{sec:lemmas:for:point:fitting}.

\subsection{Lemmas for proving Proposition~\ref{prop:point:fitting}}
\label{sec:lemmas:for:point:fitting}

To simplify the proof of Proposition~\ref{prop:point:fitting},
we establish several lemmas as the intermediate step.
We first establish a lemma to show that any continuous piecewise linear functions on $\R$ can be realized by one-hidden-layer ReLU networks.
% in the following lemma.
\begin{lemma}\label{lem:cpl(p)}
	Given any $p\in \N^+$, any continuous piecewise linear function on $\R$ with at most $p$ breakpoints can be realized by a one-hidden-layer ReLU network of width $p+1$.
\end{lemma}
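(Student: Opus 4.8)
The plan is to exhibit $f$ directly as an affine combination of ReLU units, using one unit per breakpoint plus a single extra unit to encode the slope on the leftmost piece. Write the breakpoints of $f$ as $b_1<b_2<\cdots<b_p$ (if $f$ has strictly fewer than $p$ breakpoints I would simply use correspondingly fewer hidden neurons, so the width bound becomes only easier to meet), and let $s_0,s_1,\dots,s_p$ denote the slopes of $f$ on the successive intervals $(-\infty,b_1]$, $[b_1,b_2],\dots,[b_p,\infty)$.

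The one point that needs care is how to reproduce the global affine behaviour without spending two neurons on it. A naive representation
\begin{equation*}
f(x)=s_0x+c+\sum_{i=1}^{p}(s_i-s_{i-1})\,\sigma(x-b_i)
\end{equation*}
contains the linear term $s_0x$, which is not a single ReLU and would cost two extra units to synthesize, yielding width $p+2$. \textbf{The trick} is to absorb this term into a single leftward-oriented unit $\sigma(b_1-x)$ whose kink coincides with the first breakpoint. Accordingly I would define
\begin{equation*}
\phi(x)\coloneqq f(b_1)-s_0\,\sigma(b_1-x)+s_1\,\sigma(x-b_1)+\sum_{i=2}^{p}(s_i-s_{i-1})\,\sigma(x-b_i).
\end{equation*}
This is realized by a one-hidden-layer ReLU network whose hidden neurons compute $\sigma(b_1-x)$, $\sigma(x-b_1)$, and $\sigma(x-b_i)$ for $i=2,\dots,p$, i.e. exactly $p+1$ neurons, with the output affine map supplying the coefficients and the output bias equal to $f(b_1)$.

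It then remains to check $\phi=f$, which I would do by the standard ``equal at one point plus equal slope on every piece'' argument, since both functions are continuous and piecewise linear. For $x<b_1$ only $\sigma(b_1-x)$ is active and contributes slope $(-s_0)\cdot(-1)=s_0$; for $b_1<x<b_2$ only $\sigma(x-b_1)$ is active and the slope is $s_1$; and for $b_i<x<b_{i+1}$ with $i\ge 2$ the slopes telescope to $s_1+\sum_{j=2}^{i}(s_j-s_{j-1})=s_i$, matching $f$ on each interval (and equal to $s_p$ for $x>b_p$). Finally $\phi(b_1)=f(b_1)$ because every ReLU term vanishes at $b_1$. Hence $\phi$ and $f$ agree at a point and have identical derivatives away from the breakpoints, so $\phi\equiv f$ on all of $\R$.

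The construction is elementary, so there is no substantial obstacle; the only step requiring a moment's thought is the leftward unit $\sigma(b_1-x)$, which lets the single ``free'' neuron carry the leftmost slope and thereby keeps the hidden width at exactly $p+1$ rather than $p+2$.
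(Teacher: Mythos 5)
Your proof is correct and is essentially the same construction as the paper's: your explicit formula $f(b_1)-s_0\,\sigma(b_1-x)+s_1\,\sigma(x-b_1)+\sum_{i=2}^{p}(s_i-s_{i-1})\,\sigma(x-b_i)$ is exactly what the paper's inductive argument produces when unrolled (the paper peels off the largest breakpoint one at a time, with the base case supplying the same backward unit $\sigma(x_0-x)$ that you use to carry the leftmost slope). The only difference is presentational: induction versus a closed-form sum verified by matching slopes piece by piece.
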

\begin{proof}%[Proof of Lemma~\ref{lem:cpl(p)}]
	We will use the mathematical induction to prove Lemma~\ref{lem:cpl(p)}. 
	First, we consider the base case $p=1$.  Suppose $f:\R\to\R$ is a continuous piecewise linear function on $
	\R$ with  at most $p=1$ breakpoints. Then there exist $a_1,a_2,x_0\in\R$ such that
	\begin{equation*}
		f(x)=
		\begin{cases}
			a_1(x-x_0)+f(x_0) &\tn{if \  }  x\ge x_0\\
			a_2(x_0-x)+f(x_0) &\tn{if \  }  x< x_0.\\
		\end{cases}
	\end{equation*}
	Thus, $f(x)=a_1\sigma(x-x_0)+a_2\sigma(x_0-x)+f(x_0)$ for any $x\in \R$, implying $f$ can be realized by a one-hidden-layer ReLU network of width $2=p+1$ for $p=1$.
	Hence, Lemma~\ref{lem:cpl(p)} is proved for the case $p=1$.
	
	Now, assume Lemma~\ref{lem:cpl(p)} holds for $p=k\in \N^+$, we would like to show it is also  true for $p=k+1$.
	Suppose $f:\R\to\R$ is a continuous piecewise linear function on with  at most $k+1$ breakpoints. We may assume the biggest breakpoint of $f$ is $x_0$ since it is trivial for the case that $f$ has no breakpoint. Denote the slopes of the linear pieces left and right next to $x_0$ by $a_1$ and $a_2$, respectively.
	Define 
	\[\tildef(x)\coloneqq f(x)- (a_2-a_1)\sigma(x-x_0)\quad \tn{ for any $x\in\R$.}\]
	Then $\tildef$ 
	has at most $k$ breakpoints.
	By the induction hypothesis, $\tildef$ can be realized by a one-hidden-layer ReLU network of width $k+1$.
	Thus, there exist   $w_{0,j},b_{0,j},w_{1,j},b_1$ for $j=1,2,\cdots,k+1$  such that
	\begin{equation*}
		\tildef(x)=\sum_{j=1}^{k+1} w_{1,j}\sigma(w_{0,j}x+b_{0,j})+b_1\quad \tn{for any $x\in\R$.}
	\end{equation*}
	Therefore, for any $x\in\R$, we have
	\begin{equation*}
		f(x)=(a_2-a_1)\sigma(x-x_0)+\tildef(x)
		= 	(a_2-a_1)\sigma(x-x_0)
		+\sum_{j=1}^{k+1} w_{1,j}\sigma(w_{0,j}x+b_{0,j})+b_1,
	\end{equation*}
	implying $f$ can be realized by a one-hidden-layer ReLU network of width $k+2=(k+1)+1=p+1$ for $p=k+1$.
	Thus, we finish the induction process. Therefore, by the principle of induction, we complete the proof of Lemma~\ref{lem:cpl(p)}.
\end{proof}

Next, we establish a lemma to extract the sum of $n^{s}$ bits via a height-$s$ NestNet with $\calO(n)$ parameters.
\begin{lemma}
	\label{lem:bit:extraction}
	Given any $n,s\in \N^+$,
	there exists
	$\phi\in \nestnet_{s}\big\{57(s+7)^2(n+1)\big\}$
%	 a function  $\phi$ realized by a height-$s$ ReLU NestNet with at most $587(s+2)^4(n+1)$ parameters 
	 such that: For any $\theta_1,\theta_2,\cdots,\theta_{n^{s}}\in \{0,1\}$, we have
	\begin{equation}\label{eq:bit:extraction:s+1}
		\phi \big(k+\bin 0.\theta_1\theta_2\cdots\theta_{n^{s}}\big)=\sum_{\ell=1}^{k}\theta_\ell \quad \tn{for $k=0,1,\cdots,n^{s}$.}
	\end{equation}
\end{lemma}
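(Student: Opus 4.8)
The plan is to prove Lemma~\ref{lem:bit:extraction} by induction on the height $s$, exploiting the factorization $n^{s}=n\cdot n^{s-1}$ together with the parameter-sharing that is intrinsic to the NestNet architecture.

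\textbf{Base case $s=1$.} Here I must extract $\sum_{\ell=1}^{k}\theta_\ell$ from the single real input $x=k+\bin 0.\theta_1\cdots\theta_n$ for every $k\in\{0,1,\dots,n\}$, using an ordinary (height-$1$) ReLU network with $\calO(n)$ parameters. First I would recover the counter $k=\lfloor x\rfloor$ and the fractional tail $0.\theta_1\cdots\theta_n$, and then run the classical bit-extraction loop: iterate $n$ times, at each iteration peel off the current leading bit of the tail, add it to a running accumulator exactly when the counter is still positive, decrement the counter, and shift the tail one bit to the left. Each iteration is a fixed-width continuous piecewise-linear map, realizable by Lemma~\ref{lem:cpl(p)}, so the whole loop is a width-$\calO(1)$, depth-$\calO(n)$ network, i.e.\ $\calO(n)$ parameters, landing inside $\nestnet_{1}\{57\cdot 8^2(n+1)\}$.

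\textbf{Inductive step.} Assume a height-$(s-1)$ NestNet $\phi_{s-1}\in\nestnet_{s-1}\{57(s+6)^2(n+1)\}$ realizes the partial-sum map of Lemma~\ref{lem:bit:extraction} for $n^{s-1}$ bits. To treat $n^{s}$ bits I would group $\theta_1,\dots,\theta_{n^{s}}$ into $n$ consecutive blocks of $n^{s-1}$ bits, and build an outer network performing $n$ super-steps while using $\phi_{s-1}$ as a single shared activation function. Maintaining a remaining-count register $c$ (initialized to $k=\lfloor x\rfloor$) and the remaining tail, at super-step $j$ the outer network would: (i) isolate the binary fraction $y_j$ formed by the current block's $n^{s-1}$ bits; (ii) set $m_j=\min\{c,n^{s-1}\}=c-\sigma(c-n^{s-1})$ and update $c\leftarrow\sigma(c-n^{s-1})$, both ReLU-cheap; (iii) call $\phi_{s-1}(m_j+y_j)$, which by hypothesis returns the sum of the first $m_j$ bits of block $j$; and (iv) add this to an accumulator and advance the tail past the processed block. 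After $n$ super-steps exactly the first $k\le n^{s}$ bits have been summed, so the accumulator equals $\sum_{\ell=1}^{k}\theta_\ell$. Because $\phi_{s-1}$ is reused as the activation and counted only once, the total is (outer affine maps)$\,+\,57(s+6)^2(n+1)$; it then suffices to bound the outer affine maps by the increment $57(s+7)^2(n+1)-57(s+6)^2(n+1)=57(2s+13)(n+1)=\calO(sn)$, which closes the induction at $57(s+7)^2(n+1)$.

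\textbf{Main obstacle.} The genuinely delicate part — which I expect to defer to the separate detailed section — is the stable ReLU realization of steps (i) and (iv): isolating exactly $n^{s-1}$ bits of the tail and shifting past a block amount to multiplying by the astronomically large $2^{n^{s-1}}$ and taking a fractional part, operations that are discontinuous in general and where rounding errors could compound over the $n$ super-steps. The key leverage is that every admissible input has the rigid form $(\text{integer})+0.\theta_1\theta_2\cdots$, so I would build a safety margin $\delta$ (in the spirit of Proposition~\ref{prop:floor:approx}) into the block-isolation and floor gadgets, ensuring that $\phi_{s-1}$ is only ever evaluated well away from its discontinuities. Verifying this robustness — in particular that feeding a tail slightly longer than $n^{s-1}$ bits into $\phi_{s-1}$ does not corrupt the extracted block sum, and that no error accumulates across super-steps — is the crux of the argument.
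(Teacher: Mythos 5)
Your proposal takes essentially the same route as the paper's proof: induction on the height $s$, with the base case given by classical ReLU bit extraction, and the inductive step grouping the $n^{s}$ bits into $n$ blocks of $n^{s-1}$ bits, reusing the height-$(s-1)$ extractor as a single shared activation function, with exact floor gadgets built from Proposition~\ref{prop:floor:approx} (valid off a $\delta$-margin, which your ``rigid form of the input'' observation correctly justifies) performing your steps (i) and (iv), and with $\min\{\sigma(k-im),\,n^{s-1}\}$ counters playing exactly the role of your running register $c$ and $m_j$. The paper's Lemma~\ref{lem:bit:extraction:induction:step} implements precisely your super-step loop, and its per-height parameter increment $\hatn+114(r+7)(n+1)$ matches your $\calO(sn)$ budget, so your accounting closes the induction in the same way.
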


The proof of Lemma~\ref{lem:bit:extraction} is complicated and hence is placed in Section~\ref{sec:proof:bit:extraction}. 
Then, based on Lemma~\ref{lem:bit:extraction}, we establish a new lemma, Lemma~\ref{lem:bit:extraction:from:indices} below, which is a key intermediate conclusion to prove Proposition~\ref{prop:point:fitting}.
\begin{lemma}
	\label{lem:bit:extraction:from:indices}
	Given any $n,s\in\N^+$ and $\theta_{i,\ell}\in \{0,1\}$ for $i =0,1,\cdots,n-1$ and $\ell=0,1,\cdots,m-1$, where $m=n^{s}$, there exists  
	$\phi\in \nestnet_{s}\big\{58(s+7)^2(n+1)\big\}$
%	a function $\phi$ realized by
%	a height-$s$ ReLU NestNet  with at most $588(s+2)^4(n+1)$ parameters
	such that
	\[\phi(j)=\sum_{\ell=0}^{ k }\theta_{i,\ell} \quad \tn{
		for $j=0,1,\cdots,nm-1$,}\]
	where $(i,k)$ is the unique index pair satisfying $j=im+k$ with $i\in \{0,1,\cdots,n-1\}$ and $k\in \{0,1,\cdots,m-1\}$.
\end{lemma}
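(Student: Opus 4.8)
The plan is to reduce Lemma~\ref{lem:bit:extraction:from:indices} to the single‑block extraction result, Lemma~\ref{lem:bit:extraction}, by prepending an inexpensive continuous piecewise linear preprocessing map that locates the block index $i$, re‑centers the within‑block position $k$, and loads the binary fraction of block $i$. Concretely, for each $i\in\{0,\cdots,n-1\}$ I would set $y_i\coloneqq \bin 0.\theta_{i,0}\theta_{i,1}\cdots\theta_{i,m-1}=\sum_{\ell=0}^{m-1}\theta_{i,\ell}2^{-(\ell+1)}$, let $\psi\in\nestnet_{s}\{57(s+7)^2(n+1)\}$ be the map furnished by Lemma~\ref{lem:bit:extraction}, build a map $P\colon\R\to\R$ with $P(j)=(k+1)+y_i$ whenever $j=im+k$, and define $\phi\coloneqq\psi\circ P$. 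All dependence on the data $\{\theta_{i,\ell}\}$ is thereby pushed into $P$, while $\psi$ stays a fixed universal extractor.

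The reindexing is the heart of the correctness argument. Writing $\tau_\ell\coloneqq\theta_{i,\ell-1}$ for $\ell=1,\cdots,m$ gives $\bin 0.\tau_1\cdots\tau_m=y_i$, and, since $k+1\in\{1,\cdots,m\}=\{1,\cdots,n^{s}\}$, Lemma~\ref{lem:bit:extraction} yields
\[
\psi\big((k+1)+y_i\big)=\sum_{\ell=1}^{k+1}\tau_\ell=\sum_{\ell=1}^{k+1}\theta_{i,\ell-1}=\sum_{\ell=0}^{k}\theta_{i,\ell},
\]
which is exactly the value required at $j=im+k$. The key point is that $\psi$ is a single function valid for every bit string of length $n^{s}$, so feeding it the block‑dependent fraction $y_i$ automatically performs the correct partial summation within block $i$; the reset of the sum at block boundaries is handled entirely by the fact that $P$ supplies a \emph{fresh} fraction $y_i$ and a position $k+1$ restarted from $1$ on each block.

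Next I would realize $P$ cheaply. The crucial observation is that on each block $P(j)=j+1+(y_i-im)$ is affine in $j$ with slope $1$; hence, as a function of $j$, the map $P$ is continuous piecewise linear with only $\calO(n)$ breakpoints (two per block boundary), the intercept jumping by $y_{i+1}-y_i-m$ at each transition. Because the $m=n^{s}$ integers inside a single block all lie on one affine piece, the number of pieces is $\calO(n)$ rather than $\calO(n^{s+1})$, which is what makes the whole construction affordable. By Lemma~\ref{lem:cpl(p)}, such a $P$ is realized by a one‑hidden‑layer ReLU network of width $\calO(n)$, hence with $\calO(n)$ parameters, and the constants are small enough that this count is at most $(s+7)^2(n+1)$.

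The final step is the parameter accounting of the composition $\phi=\psi\circ P$, which is where the main care is needed. Naively merging the last affine map of $P$ (of input width $\calO(n)$) with the first affine map of $\psi$ would cost the \emph{product} of the two widths and could overflow the tight margin between the target $58(s+7)^2(n+1)$ and the $57(s+7)^2(n+1)$ already spent on $\psi$. To sidestep this I would interpose the identity activation $\tn{id}_\R\in\nestnet_0\{0\}$, writing $\phi=\psi\circ\tn{id}_\R\circ P$; since the identity costs no parameters and keeps the two affine maps separate, the parameter count becomes additive, giving $\phi\in\nestnet_{s}\{57(s+7)^2(n+1)+\calO(n)\}\subseteq\nestnet_{s}\{58(s+7)^2(n+1)\}$. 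The main obstacle is thus not correctness (a clean reindexing) but the parameter bookkeeping: one must ensure that the preprocessing map is simultaneously exact on all $nm=n^{s+1}$ integer inputs and cheap enough to compose with the height‑$s$ extractor without blowing the $58$‑versus‑$57$ budget, and it is precisely the slope‑$1$ observation together with the identity‑activation trick that keeps the extra cost linear in $n$.
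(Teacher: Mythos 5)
Your proposal is correct, and at its core it follows the same route as the paper: both proofs reduce the lemma to the universal extractor of Lemma~\ref{lem:bit:extraction} by composing it with a cheap, data-dependent preprocessing map that sends $j=im+k$ to $(k+1)+\bin 0.\theta_{i,0}\theta_{i,1}\cdots\theta_{i,m-1}$, and both use the same shift-by-one reindexing to convert the lemma's sums $\sum_{\ell=1}^{\kappa}\tau_\ell$ into the required $\sum_{\ell=0}^{k}\theta_{i,\ell}$. The difference is in how the preprocessing is realized. The paper builds it in stages: a floor-type CPwL function applied to $j/m$ extracts the index pair $(i,k)$ (the map $\bmPhi_1$, about $16n+14$ parameters), then a CPwL lookup $\tildephi_2$ with $n$ breakpoints sends $i\mapsto z_i=\bin 0.\theta_{i,0}\cdots\theta_{i,m-1}$ and an affine layer forms $k+1+z_i$ (about $4n+9$ more parameters). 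You instead observe that the target scalar map $j\mapsto k+1+z_i$ is affine with slope $1$ on each block, hence is itself a single CPwL function of $j$ with only $2(n-1)$ breakpoints, realizable by Lemma~\ref{lem:cpl(p)} with one hidden layer of width $\calO(n)$ and roughly $6n$ parameters. This collapses the paper's three-component pipeline into one map and avoids explicit index extraction entirely --- a modest but genuine streamlining; both versions fit comfortably inside the margin between $57(s+7)^2(n+1)$ and $58(s+7)^2(n+1)$. Your explicit insertion of the identity activation $\tn{id}_\R\in\nestnet_{0}\{0\}$ to keep the parameter count of the composition additive is also sound, and it makes rigorous a point the paper leaves implicit: its additive accounting for $\phi_3\circ\phi_2\circ\bmPhi_1$ tacitly relies on the same device, since merging the adjacent affine maps of two composed networks would otherwise cost the product, not the sum, of the neighboring widths.
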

\begin{proof}
	We first construct a network to extract the unique index pair $(i,k)$  from $j\in \{0,1,\cdots,nm-1\}$ with the following condition
	\begin{equation*}
		j=im+k \quad \tn{with $i\in \{0,1,\cdots,n-1\}$ and $k\in \{0,1,\cdots,m-1\}$}.
	\end{equation*}
	There exists  a continuous piecewise linear function $\phi_1$ with $2n$ breakpoints such that
	\begin{equation*}
		\phi_1(x)=\lfloor x\rfloor \quad \tn{for any $x\in \bigcup_{\ell=0}^{n-1}[\ell,\,\ell+1-\delta]$ with $\delta=\tfrac{1}{2m}$.}
	\end{equation*}
	By Lemma~\ref{lem:cpl(p)}, $\phi_1$ can be realized by a one-hidden-layer ReLU network of width $2n+1$. Moreover,
	for any $j\in \{0,1,\cdots,nm-1\}$, we have 
	\begin{equation*}
		\phi_1(\tfrac{j}{m})=\lfloor \tfrac{j}{m}\rfloor = i\quad \tn{and} \quad j-m\phi_1(\tfrac{j}{m})=j-mi=k,
	\end{equation*}
	where $(i,k)$ is the unique index pair satisfying $j=im+k$ with $i\in \{0,1,\cdots,n-1\}$ and $k\in \{0,1,\cdots,m-1\}$. By defining
	\begin{equation*}
		\bmPhi_1(x)\coloneqq \left[
		\begin{array}{c}
			\phi_1(\tfrac{x}{m})\\
			x-m\phi_1(\tfrac{x}{m})
		\end{array}
		\right]\quad \tn{for any $x\ge 0$,}
	\end{equation*}
	we have 
	\begin{equation*}
		\bmPhi_1(j)=\left[
		\begin{array}{c}
			\phi_1(\tfrac{j}{m})\\
			j-m\phi_1(\tfrac{j}{m})
		\end{array}
		\right]=\left[
		\begin{array}{c}
			i\\
			k
		\end{array}
		\right] \quad \tn{for $j=0,1,\cdots,nm-1$,}
	\end{equation*}
	where $(i,k)$ is the unique index pair satisfying $j=im+k$ with $i\in \{0,1,\cdots,n-1\}$ and $k\in \{0,1,\cdots,m-1\}$.
	Moreover, $\bmPhi_1$ can be realized by a one-hidden-layer ReLU network of width $2(2n+1)+1=4n+3$. Hence, the network realizing $\bmPhi_1$ has at most 
	$(1+1)(4n+3)+\big((4n+3)+1\big)2=16n+14$ parameters.

	Define 
	\begin{equation*}
		z_i\coloneqq \bin    0.\theta_{i,0}\theta_{i,1}\cdots \theta_{i,m-1}\quad\tn{for $i=0,1,\cdots,n-1$.}
	\end{equation*} 	
	There exists  a continuous piecewise linear function $\tildephi_2$ with $n$ breakpoints such that
	\begin{equation*}
		\tildephi_2(i)=z_i \quad \tn{for $i=0,1,\cdots,n-1$.}
	\end{equation*}
	By Lemma~\ref{lem:cpl(p)}, $\tildephi_2$ can be realized by a one-hidden-layer ReLU network of width $n+1$.

	By Lemma~\ref{lem:bit:extraction}, 
	there exists 
	$\phi_3\in \nestnet_{s}\big\{57(s+7)^2(n+1)\big\}$
%	a function $\phi_3$ realized by a height-$s$ ReLU NestNet  with at most $587(s+2)^4(n+1)$ parameters 
	such that: For any $\xi_1,\xi_2,\cdots,\xi_{n^{s}}\in \{0,1\}$, we have
	\begin{equation*}
		\phi_3 \big(k+\bin 0.\xi_1\xi_2\cdots\xi_{n^{s}}\big)=\sum_{\ell=1}^{k}\xi_\ell \quad \tn{for $k=1,2,\cdots,n^{s}$.}
	\end{equation*}
	It follows from $m=n^{s}$ that,
	for any  $\xi_0,\xi_1,\cdots,\xi_{m-1}\in \{0,1\}$, we have
	\[\phi_3(k+\bin 0.\xi_0\xi_1\cdots \xi_{m-1})=\sum_{\ell=1}^{ k }\xi_{\ell-1}=\sum_{\ell=0}^{ k-1 }\xi_{\ell}\quad \tn{for  $ k =1,2,\cdots,m$,}\]
	implying 
	\[\phi_3(k+1+\bin 0.\xi_0\xi_1\cdots \xi_{m-1})=\sum_{\ell=0}^{ k }\xi_\ell\quad \tn{for  $ k =0,1,\cdots,m-1$.}\]
	Then, for $i=0,1,\cdots,n-1$ and   $ k =0,1,\cdots,m-1$, we have
	\begin{equation*}
		\phi_3\big(k+1+\tildephi_2(i)\big)
		=\phi_2(k+1+z_i)
		=\phi_3(k+1+\bin 0.\theta_{i,0}\theta_{i,1}\cdots \theta_{i,m-1})=\sum_{\ell=0}^{ k }\theta_{i,\ell}.
	\end{equation*}
	
	By defining 
	\begin{equation*}
		\phi_2(x,y)\coloneqq y+1+\tildephi_2(x)\quad \tn{for any $x,y\in [0,\infty)$}
	\end{equation*}
	and $\phi\coloneqq \phi_3\circ\phi_2\circ\bmPhi_1$,
	we have
	\begin{equation*}
		\begin{split}
			\phi(j)=\phi_3\circ\phi_2\circ\bmPhi_1(j)
			=\phi_3\circ\phi_2(i,k)=\phi_3\big(k+1+\tildephi_2(i)\big)=\sum_{\ell=0}^{ k }\theta_{i,\ell}
		\end{split}
	\end{equation*}
	for $j=0,1,\cdots,nm-1$,
	where $(i,k)$ is the unique index pair satisfying $j=im+k$ with $i\in \{0,1,\cdots,n-1\}$ and $k\in \{0,1,\cdots,m-1\}$.
	
	It remains to estimate the number of parameters in the NestNet realizing $\phi=\phi_3\circ\phi_2\circ\bmPhi_1$. Observe that $\phi_2$ can be realized by a one-hidden-layer ReLU network of width 
	$(n+1)+1=n+2$. Then, the network realizing $\phi_2$ has at most
	$(2+1)(n+2)+\big((n+2)+1\big)=4n+9$ parameters. Therefore, $\phi$ can be realized by a height-$s$ NestNet with at most
	\begin{equation*}
		\underbrace{(16n+14)}_{\bmPhi_1} + \underbrace{(4n+9)}_{\phi_2} +\underbrace{57(s+7)^2(n+1)}_{\phi_3}
		\le 58(s+7)^2(n+1)
	\end{equation*}
	parameters, which means we complete the proof of Lemma~\ref{lem:bit:extraction:from:indices}.
\end{proof}

\subsection{Detailed proof of Proposition~\ref{prop:point:fitting} }
\label{sec:detailed:proof:point:fitting}
%
%
%Let us prove Proposition~\ref{prop:point:fitting} by assuming Lemma~\ref{lem:bit:extraction} is true.
%\begin{proof}[Proof of Proposition~\ref{prop:point:fitting}]

We may assume $J=mn=n^{s+1}$ with $m=n^{s}$ since we can set $y_{J-1}=y_{J}=\cdots=y_{mn-1}$ if $J<mn$. 	
Define 
\begin{equation*}
	a_{j}\coloneqq \lfloor y_{j}/\varepsilon\rfloor 
	\quad \tn{for $j=0,1,\cdots,nm-1$.}
\end{equation*}
Our goal is to construct a function $\phi$ such that $\phi(j)=a_j\varepsilon$ for $j=0,1,\cdots,nm-1$.

For $i=0,1,\cdots,n-1$, we define
\begin{equation*}
	b_{i,\ell}=
	\begin{cases}
		0 & \tn{for}\  \ell=0\\
		a_{im+\ell}-a_{im+\ell-1} & \tn{for}\  \ell=1,2,\cdots,m-1.
	\end{cases}
\end{equation*}
Since $|y_{j}-y_{j -1}|\le \varepsilon$ for all $j$, we have
$|a_{j}-a_{j-1}|\le 1$. It follows that
$b_{i, \ell}\in \{-1,0,1\}$ for $i=0,1,\cdots,n-1$ and $\ell=0,1,\cdots,m-1$. Hence, there exist $c_{i, \ell }\in\{0,1\}$ and $d_{i, \ell }\in\{0,1\}$ such that 
\begin{equation*}
	b_{i, \ell}=c_{i, \ell}-d_{i, \ell}\quad \tn{ for $i=0,1,\cdots,n-1$ and $\ell=0,1,\cdots,m-1$.}
\end{equation*}

Since any $j\in\{0,1,\cdots,nm-1\}$ can be uniquely indexed as $j=im+k$ with $i\in \{0,1,\cdots,n-1\}$ and $k\in \{0,1,\cdots,m-1\}$, we have
\begin{equation*}
	\begin{split}
		a_{j}=a_{im+k}=a_{im}+\sum_{\ell=1}^{k }(a_{im+\ell}-a_{im+\ell-1})
		&=a_{im}+\sum_{\ell=1}^{ k }b_{i,\ell}
		=a_{im}+\sum_{\ell=0}^{ k }b_{i,\ell}\\
		&=a_{im}
		+\sum_{\ell=0}^{ k }c_{i,\ell}
		-\sum_{\ell=0}^{ k }d_{i,\ell}.
	\end{split}
\end{equation*}

There exists  a continuous piecewise linear function $\phi_1$ with $2n$ breakpoints such that
\begin{equation*}
	\phi_1(x)=a_{im} \quad \tn{for any $x\in [im,im+m-1]$ and $i=0,1,\cdots,n-1$.}
\end{equation*}
Then, we have
\begin{equation*}
	\phi_1(j)=a_{im} \quad \tn{for $j=0,1,\cdots,nm-1$,}
\end{equation*}
where $(i,k)$ is the unique index pair satisfying $j=im+k$ with $i\in \{0,1,\cdots,n-1\}$ and $k\in \{0,1,\cdots,m-1\}$.
By Lemma~\ref{lem:cpl(p)}, $\phi_1$ can be realized by a one-hidden-layer ReLU network of width $2n+1$. 

By Lemma~\ref{lem:bit:extraction:from:indices}, there exist
$\phi_2,\phi_3\in \nestnet_{s}\big\{58(s+7)^2(n+1)\big\}$
%two functions $\phi_2$ and $\phi_3$, each of which is realized by a height-$s$ ReLU NestNet at most $588(s+2)^4(n+1)$ parameters,
such that
\[\phi_2(j)=\sum_{\ell=0}^{ k }c_{i,\ell} \quad \tn{and}\quad \phi_3(j)=\sum_{\ell=0}^{ k }d_{i,\ell} \quad \tn{
	for $j=0,1,\cdots,nm-1$,}\]
where $(i,k)$ is the unique index pair satisfying $j=im+k$ with $i\in \{0,1,\cdots,n-1\}$ and $k\in \{0,1,\cdots,m-1\}$.

Hence, by indexing $j\in \{0,1,\cdots,nm-1\}$ as $j=im+k$ for $i=\{0,1,\cdots,n-1\}$ and $k\in\{0,1,\cdots,m-1\}$, we have  
\begin{equation*}
	%	\label{eq:returnaml}
	a_{j}=a_{im}
	+\sum_{\ell=0}^{ k }c_{i,\ell}
	-\sum_{\ell=0}^{ k }d_{i,\ell}
	=\phi_{1}(j)+\phi_{2}(j)-\phi_{3}(j).
\end{equation*}

By defining
\begin{equation*}
	\tildephi(x)\coloneqq \Big(\phi_1(x)+\phi_2(x)+\phi_3(x)\Big)\varepsilon\quad \tn{for any $x\in\R$,}
\end{equation*}
%we have
%\begin{equation*}
%	\begin{split}
	%		\big|\tildephi(j)-y_j\big|
	%		&=\Big|\big(\phi_1(j)+\phi_2(j)+\phi_3(j)\big)\varepsilon-y_j \Big|
	%		=\big|a_j\varepsilon-y_j\big|\\
	%		&=\big|\lfloor y_j/\varepsilon\rfloor\varepsilon-y_j\big|
	%		=\big|\lfloor y_j/\varepsilon\rfloor-y_j/\varepsilon\big|\varepsilon
	%		\le \varepsilon
	%	\end{split}
%\end{equation*}
%for $j=0,1,\cdots,nm-1$. Moreover,
we have $\tildephi(j)=a_j\varepsilon$ for $j=0,1,\cdots,nm-1$ and
$\tildephi$ can be realized by the height-$s$ NestNet in Figure~\ref{fig:phi:from:psi:tphi}.
% and the definition of $\tildephi$ on $(-\infty,0)$ is automatically given by the NestNet.

\begin{figure}[htbp!]
	\centering
	\includegraphics[width=0.78\textwidth]{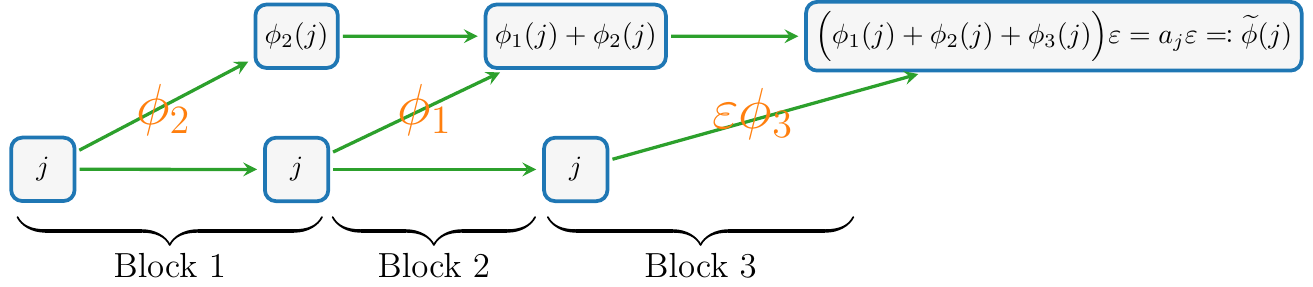}	
	\caption{An illustration of the NestNet realizing $\tildephi$ for $j=0,1,\cdots,J-1$. }
	\label{fig:phi:from:psi:tphi}
\end{figure}

In Figure~\ref{fig:phi:from:psi:tphi},
Block $1$ or $3$ has at most 
\begin{equation*}
	3\Big(58(s+7)^2(n+1)\Big)=174(s+7)^2(n+1)
\end{equation*}
parameters; 
Block $2$  is of width $(2n+1)+2=2n+3$ and depth $1$, and hence has at most
\begin{equation*}
	(2+1)(2n+3)+\big((2n+3)+1\big)2=10n+17
\end{equation*}
parameters.
Then, $\tildephi$ can be realized by a height-$s$ ReLU NestNet with at most
\begin{equation*}
	2\big(174(s+7)^2(n+1)\big)+10n+17=349(s+7)^2(n+1)
\end{equation*}
parameters.
Note that
$\tildephi$ may not be bounded. Thus, 
we define 
\begin{equation*}
	\psi(x)\coloneqq \min\big\{\sigma(x),\, M\big\}\quad \tn{for any $x\in\R$,}
	%	=\frac{\sigma(\sigma(x)+M)-\sigma(-\sigma(x)-M)-\sigma(\sigma(x)-M)-\sigma(-\sigma(x)+M)}{2}
\end{equation*}
where
\begin{equation*}
	M=\max\{ y_{j}: j=0,1,\cdots,nm-1\}.
\end{equation*}
Then, the desired function $\phi$ can be define via $\phi\coloneqq \psi\circ\tildephi$. Clearly, 
\begin{equation*}
	0\le \phi(x)\le M=\max\{ y_{j}: j=0,1,\cdots,J-1\} \quad \tn{for any $x\in \R$.}
\end{equation*}
It follows from $0\le a_j\varepsilon= \lfloor y_j/\varepsilon\rfloor \varepsilon\le y_j\le M$ for $j=0,1,\cdots,J-1$ that
\begin{equation*}
	\phi(j)=\psi\circ\tildephi(j)=\psi\big(a_j\varepsilon\big)
	=\min\big\{\sigma(a_j\varepsilon),\, M\big\}=a_j\varepsilon,
\end{equation*}
implying
\begin{equation*}
	\begin{split}
		\big|\phi(j)-y_j\big|
		=\big|a_j\varepsilon-y_j\big|
		=\Big|\lfloor y_j/\varepsilon\rfloor\varepsilon-y_j\Big|
		=\Big|\lfloor y_j/\varepsilon\rfloor-y_j/\varepsilon\Big|\varepsilon
		\le \varepsilon.
	\end{split}
\end{equation*}

It remains to show that $\phi$ can be realized by a height-$s$ ReLU NestNet with the desired size.
Clearly, $\psi$ can be realized by the network in Figure~\ref{fig:min:sigma:M}, which is of width $4$ and depth $2$.

\begin{figure}[htbp!]
	\centering
	\includegraphics[width=0.68\textwidth]{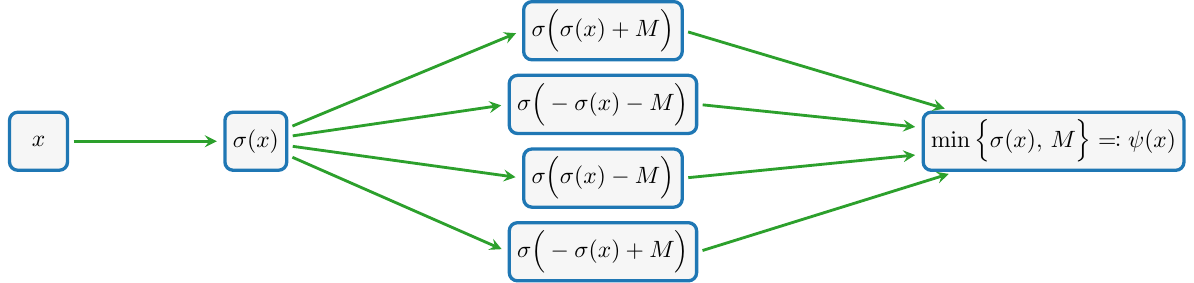}	
	\caption{An illustration of the network realizing $\psi$ based on the fact
		$\min\{a,b\}=\tfrac{1}{2}\big(\sigma(a+b)-\sigma(-a-b)-\sigma(a-b)-\sigma(-a+b)\big)$. }
	\label{fig:min:sigma:M}
\end{figure}

Therefore, $\phi$ can be realized by a height-$s$ ReLU NestNet with at most
\begin{equation*}
	349(s+7)^2(n+1)+ (4+1)4(2+1)\le 350(s+7)^2(n+1)
\end{equation*}
parameters.
Hence, we finish the proof of Proposition~\ref{prop:point:fitting}.
%\end{proof}

\subsection{Proof of Lemma~\ref{lem:bit:extraction} for Proposition~\ref{prop:point:fitting}}
\label{sec:proof:bit:extraction}

We will use the mathematical induction to prove Lemma~\ref{lem:bit:extraction}.
To this end, we introduce two lemmas  for the  base case and the
induction step. 

\begin{lemma}
	\label{lem:bit:extraction:base:case}
	Given any $n\in \N^+$,
	there exists a function $\phi$ realized by a  ReLU network with $128n+294$ parameters such that: For any $\theta_1,\theta_2,\cdots,\theta_{n}\in \{0,1\}$, we have
	\begin{equation}\label{eq:bit:extraction:s=0}
		\phi \big(k+\bin 0.\theta_1\theta_2\cdots\theta_{n}\big)=\sum_{\ell=1}^{k}\theta_\ell \quad \tn{for $k=0,1,\cdots,n$.}
	\end{equation}
\end{lemma}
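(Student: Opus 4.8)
The plan is to realize $\phi$ by a ``bit extraction'' network that processes the $n$ fractional bits one at a time, spending only a constant number of neurons per bit so that the total parameter count stays linear in $n$. Write the input as $x = k + \bin 0.\theta_1\theta_2\cdots\theta_n$ and use the elementary identity
\begin{equation*}
\sum_{\ell=1}^{k}\theta_\ell = \sum_{i=1}^{n}\theta_i\,\one_{\{i\le k\}},
\end{equation*}
so it suffices to add, for each $i$, the bit $\theta_i$ to a running sum precisely when $i\le k$. The construction therefore carries three quantities across the layers: the remaining fractional value $v_i = \bin 0.\theta_i\theta_{i+1}\cdots\theta_n$, the integer $k$ (used to form a gate), and the accumulated sum $s_i = \sum_{\ell=1}^{i-1}\theta_\ell\,\one_{\{\ell\le k\}}$, with $s_1=0$.

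First I would split $x$ into its integer part $k$ and fractional part $v_1 = \bin 0.\theta_1\cdots\theta_n$. Since the admissible inputs lie in $\bigcup_{m=0}^{n}[m,\,m+1-2^{-n}]$, the floor map $x\mapsto k=\lfloor x\rfloor$ agrees on this set with a continuous piecewise linear function of $\calO(n)$ breakpoints (flat of height $m$ on each $[m,\,m+1-2^{-n}]$, joined by steep ramps across the empty gaps $(m+1-2^{-n},\,m+1)$); by Lemma~\ref{lem:cpl(p)} this costs one hidden layer with $\calO(n)$ parameters, after which $v_1=x-k$ is affine. Next, for $i=1,\dots,n$ I would build one block of constant width implementing a single recursion step: extract the leading bit $\theta_i=\one_{\{v_i\ge 1/2\}}$, which is realizable exactly because the admissible points with $\theta_i=0$ satisfy $v_i\le \tfrac12-2^{-(n-i+1)}$ while those with $\theta_i=1$ satisfy $v_i\ge\tfrac12$, so a two-ReLU ramp across this gap reproduces the step function on all admissible values; shift via $v_{i+1}=2v_i-\theta_i=\bin 0.\theta_{i+1}\cdots\theta_n$; form the gate $g_i=\one_{\{i\le k\}}=\sigma(k-i+1)-\sigma(k-i)$, which is exact since $k$ is an integer; and update the sum by $s_{i+1}=s_i+\sigma(g_i+\theta_i-1)$, using that $\sigma(g_i+\theta_i-1)=g_i\theta_i$ whenever $g_i,\theta_i\in\{0,1\}$. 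Carrying $(v_i,k,s_i)$ forward through these $n$ blocks and reading off $s_{n+1}$ yields $\phi(x)=\sum_{i=1}^n\theta_i\,\one_{\{i\le k\}}=\sum_{\ell=1}^k\theta_\ell$ for every $k\in\{0,1,\dots,n\}$, including the degenerate cases $k=0$ (all gates vanish) and $k=n$ (all gates equal $1$).

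The conceptual content is routine once this encoding is fixed; the work lies in two places. The first, and main, obstacle is exactness: ReLU networks realize only continuous piecewise linear maps, so each ostensibly discontinuous operation (the floor, the leading-bit threshold, the integer gate) must be reproduced only on the discrete admissible set, which forces me to check in every case that the relevant inputs avoid a gap around the jump and that the ramp slopes, as large as $2^{n}$, restore the exact integer and Boolean values there. The second is the bookkeeping needed to meet the stated bound $128n+294$: I must fix the explicit widths and depths of the floor block and of the repeated recursion block, count parameters layer by layer through $\#\bmcalL=(d_1+1)d_2$, and absorb lower-order terms to arrive at $128n+294$.
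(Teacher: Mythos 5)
Your construction is correct, and at the top level it follows the same two-stage route as the paper: split $x$ into its integer part $k$ and fractional part by a continuous piecewise linear floor surrogate (exact away from the gaps, realized via Lemma~\ref{lem:cpl(p)} at $\calO(n)$ cost), then run a bit-extraction stage on the resulting pair. Where you genuinely depart is inside that second stage. The paper imports it as a black box: Lemma~\ref{lem:bit:extraction:ReLU:net} (Lemma~3.5 of \cite{shijun:2}) supplies a width-$7$, depth-$(2n+1)$ network computing $\sum_{\ell=1}^{k}\theta_\ell$ from the separated pair, but the paper uses it only for $k\ge 1$ and therefore needs an extra correction step --- $\phi_1(x,y)=\min\{M+\phi_0(x,y),\,2My\}$ followed by $\phi_2=\sigma(\phi_1-M)$ --- solely to force the value $0$ at $k=0$. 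You instead build the stage from scratch out of $n$ constant-size blocks, each computing $\theta_i$ by a two-ReLU ramp across the gap below $1/2$, the gate $\one_{\{i\le k\}}=\sigma(k-i+1)-\sigma(k-i)$ (exact because $k$ is a nonnegative integer carried through the layers), and the gated increment $\sigma(g_i+\theta_i-1)=g_i\theta_i$; this is essentially an inlining of the proof of the cited lemma, i.e., the bit-extraction scheme of \cite{Bartlett98almostlinear}. Your route buys self-containedness and makes $k=0$ trivial (all gates vanish), eliminating the paper's min/shift fix-up; the paper's route buys shorter bookkeeping, since $56(2n+5)+(16n+14)=128n+294$ falls out of two pre-counted components, whereas you still owe an explicit layer-by-layer count --- though with constant width and depth per block any careful count yields $Cn+C'$ with $C$ well below $128$, so the stated bound (which, per the paper's convention, means at most $128n+294$ parameters) is safely met.
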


%\begin{lemma}\label{lem:bit:extraction:induction:step}
%	Given any $n,m,s\in \N^+$ with $m\le n^s$, suppose $\varrho_1:\R\to \R$ is a function satisfying
%	\begin{equation}\label{eq:floor:g1}
%		\varrho_1(x)=\lfloor x \rfloor\quad \tn{for any $\displaystyle x\in \bigcup_{\ell=0}^{n-1}[\ell,\,\ell+1-\delta]$ with $\delta=2^{-nm}$}
%	\end{equation}
%	and $\varrho_2:\R\to\R$ is a function satisfying
%	\begin{equation}\label{eq:any:xi}
%		\varrho_2\big(p+\bin 0.\xi_1\xi_2\cdots\xi_m \big)=\sum_{j=1}^{p}\xi_j\quad \tn{for $p=0,1,\cdots,m$ and any $\xi_1,\xi_2,\cdots,\xi_m\in \{0,1\}$}.
%		%		\,\footnote{By convention, $\sum_{j=n_1}^{n_2} a_j=0$ if $n_1>n_2$, no matter what $a_j$ is for each $j$.}
%	\end{equation}
%	Then, there exists a function $\phi$ realized by a $(\sigma,\varrho_1,\varrho_2)$-activated network  with $298(s+1)^3(n+1)$ parameters such that: For any $\theta_1,\theta_2,\cdots,\theta_{nm}\in \{0,1\}$, we have
%	\begin{equation*}
%		\phi \big(k+\bin 0.\theta_1\theta_2\cdots\theta_{nm}\big)=\sum_{\ell=1}^{k}\theta_\ell \quad \tn{for $k=0,1,\cdots,nm$.}
%	\end{equation*}
%\end{lemma}

\begin{lemma}\label{lem:bit:extraction:induction:step}
	Given any $n,r,\hatn\in \N^+$, if $g \in \nestnet_{r}\{\hatn\}$ satisfying
	\begin{equation}\label{eq:any:xi}
		g\big(p+\bin 0.\xi_1\xi_2\cdots\xi_{n^r} \big)=\sum_{j=1}^{p}\xi_j\quad \tn{for  any $\xi_1,\xi_2,\cdots,\xi_{n^r}\in \{0,1\}$ and $p=0,1,\cdots,n^r$,}
		%		\,\footnote{By convention, $\sum_{j=n_1}^{n_2} a_j=0$ if $n_1>n_2$, no matter what $a_j$ is for each $j$.}
	\end{equation}
	then there exists $\phi \in \nestnet_{r+1}\big\{\hatn+114(r+7)(n+1)\big\}$
%	a function $\phi$ realized by a $(\sigma,\varrho_1,\varrho_2)$-activated network  with $298(s+1)^3(n+1)$ parameters 
	such that: For any $\theta_1,\theta_2,\cdots,\theta_{n^{r+1}}\in \{0,1\}$, we have
	\begin{equation*}
		\phi \big(k+\bin 0.\theta_1\theta_2\cdots\theta_{n^{r+1}}\big)=\sum_{\ell=1}^{k}\theta_\ell \quad \tn{for $k=0,1,\cdots,n^{r+1}$.}
	\end{equation*}
\end{lemma}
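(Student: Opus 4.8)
The plan is to realise $\phi$ as a height-$(r+1)$ NestNet that uses two shared height-$r$ activation functions — the hypothesised $g$ and a floor function supplied by Proposition~\ref{prop:floor:approx} — and that peels off the $n^{r+1}=nm$ fractional bits $m$ at a time, where $m:=n^r$. Writing the integer part as $k=qm+t$ with $q\in\{0,1,\cdots,n\}$ and $t\in\{0,1,\cdots,m-1\}$ (with $q=n,t=0$ at the top end), the target splits as $\sum_{\ell=1}^{k}\theta_\ell=\sum_{i=0}^{n-1}\sum_{j=1}^{p_i}\theta_{im+j}$, where $p_i:=\min\{\sigma\big((q-i)m+t\big),\,m\}$ equals $m$ for $i<q$, equals $t$ for $i=q$, and equals $0$ for $i>q$. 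If $v_i:=\bin 0.\theta_{im+1}\cdots\theta_{im+m}$ denotes the $i$-th block of $m$ bits, then the hypothesis on $g$ gives $g(p_i+v_i)=\sum_{j=1}^{p_i}\theta_{im+j}$, so the whole answer equals $\sum_{i=0}^{n-1}g(p_i+v_i)$ (note $p_i\in\{0,\cdots,m\}=\{0,\cdots,n^r\}$ and $v_i$ has exactly $m$ bits, so each argument is admissible for $g$). This identity is the conceptual heart; the work is to produce $k$, the $p_i$, and the $v_i$ inside a single NestNet, using $g$ and the floor only as shared, hence singly-counted, activation functions.

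First I would extract the integer data. Since $q=\lfloor x/m\rfloor\in\{0,\cdots,n\}$ takes only $n+1$ values, the corresponding almost-floor has $\calO(n)$ breakpoints and so, by Lemma~\ref{lem:cpl(p)}, is realised by a one-hidden-layer (height-$1$) ReLU network of width $\calO(n)$; then $t=\lfloor x-qm\rfloor$ is an almost-floor on $x-qm=m\{x/m\}\in[0,m)$, a range handled by $J=m-1\le 2^{n^r}$, which is exactly what a height-$r$ floor from Proposition~\ref{prop:floor:approx} supplies. Setting $k=qm+t$, each $p_i=\min\{\sigma((q-i)m+t),\,m\}$ is an affine-plus-ReLU function of the carried integers $q,t$ and the layer index $i$, costing only $\calO(1)$ parameters per block.

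Next I would generate the blocks $v_i$ by a depth-$n$ scan rather than by direct shifts, precisely so that every floor stays inside the height-$r$ range. Let $u_0:=x-k=\bin 0.\theta_1\cdots\theta_{nm}$ and iterate $u_{i+1}:=2^{m}u_i-\lfloor 2^{m}u_i\rfloor$; since $2^{m}u_i\in[0,2^{m})$ has integer part in $\{0,\cdots,2^{m}-1\}=\{0,\cdots,2^{n^r}-1\}$, this floor is again realisable by the same height-$r$ NestNet from Proposition~\ref{prop:floor:approx}, with $\delta$ chosen small enough that all the finitely many dyadic values $u_i$ (whose denominators divide $2^{nm}$) avoid the omitted gaps. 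Then $v_i=u_i-2^{-m}u_{i+1}$, and at the $i$-th layer the network applies the shared activation $g$ to $p_i+v_i$ and adds the result into a running accumulator; after $n$ steps the accumulator equals $\sum_{i=0}^{n-1}g(p_i+v_i)=\sum_{\ell=1}^{k}\theta_\ell$, the desired output. Because the same $g$ and the same floor appear at every layer, the NestNet parameter convention charges $\hatn$ for $g$ once and (via Proposition~\ref{prop:floor:approx}) at most $36(r+7)n$ for the floor once, while the affine maps, clips, and the $q$-extraction add only $\calO(n)$; summing and absorbing the cheap pieces into the generous constant yields $\phi\in\nestnet_{r+1}\big\{\hatn+114(r+7)(n+1)\big\}$, of height $r+1$ since both activations $g$ and the floor have height $\le r$.

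The main obstacle is the exact bit bookkeeping under the height constraint: a naive parallel construction would isolate block $i$ through the shift $2^{im}$ and hence demand a floor over the range $2^{(n-1)m}\gg 2^{n^r}$, which no height-$r$ NestNet delivers. The sequential peeling above circumvents this by shifting only $m$ bits at a time, keeping every floor argument inside $\{0,\cdots,2^{n^r}-1\}$; the remaining care is routine and threefold — (i) fixing $\delta$ small enough that all dyadic arguments land in the exact-floor region of Proposition~\ref{prop:floor:approx}, (ii) checking the boundary case $k=n^{r+1}$ (where $q=n$ and every $p_i=m$) is absorbed by the same clip formula, and (iii) confirming the shared-activation accounting so that $g$ and the floor are each charged only once.
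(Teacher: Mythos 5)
Your proposal is correct and takes essentially the same route as the paper's proof: the identical block decomposition $\sum_{\ell=1}^{k}\theta_\ell=\sum_{i=0}^{n-1}g\big(\min\{\sigma(k-im),\,m\}+\bin 0.\theta_{im+1}\cdots\theta_{im+m}\big)$, the identical sequential peeling of $m$-bit blocks through a shared height-$r$ floor from Proposition~\ref{prop:floor:approx} (your iteration $u_{i+1}=2^m u_i-\lfloor 2^m u_i\rfloor$ is exactly the paper's $\phi_2$, and $v_i=u_i-2^{-m}u_{i+1}$ is its $\phi_1$), and the identical shared-activation parameter accounting. The only divergence is a sub-step detail: the paper extracts the integer part $k$ directly by invoking Proposition~\ref{prop:floor:approx} with the doubled parameter $\tilden=2n$ (so the floor's range $2^{(2n)^r}$ exceeds $nm$, which $2^{n^r}$ alone need not, e.g.\ $n=3$, $r=1$), whereas you split $k=qm+t$ and recover $q$ with a cheap width-$\calO(n)$ height-$1$ piecewise linear layer and $t$ with the ordinary range-$2^{n^r}$ floor; both variants fit comfortably within the stated parameter budget.
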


The proofs of Lemmas~\ref{lem:bit:extraction:base:case} and \ref{lem:bit:extraction:induction:step} can be found in Sections~\ref{sec:proof:bit:extraction:base:case} and \ref{sec:proof:bit:extraction:induction:step}, respectively.
We remark that the function $\phi$ in Lemma~\ref{lem:bit:extraction:induction:step} is independent of $\theta_1,\theta_2,\cdots,\theta_{nm}$. 
%Thus, the parameters in $\phi$ can be pre-specified. That means $\phi$ can be also  regarded as a function.
%With Lemmas~\ref{lem:bit:extraction:base:case} and \ref{lem:bit:extraction:induction:step} in hand, we are ready to  prove Lemma~\ref{lem:bit:extraction}.
The proof of Lemma~\ref{lem:bit:extraction} mainly relies on Lemma~\ref{lem:bit:extraction:base:case} and repeated applications of Lemma~\ref{lem:bit:extraction:induction:step}. The details can be found below.
\begin{proof}[Proof of Lemma~\ref{lem:bit:extraction}]
%	The proof of Lemma~\ref{lem:bit:extraction} mainly relies on Lemma~\ref{lem:bit:extraction:ReLU:net} and repeated applications of Lemma~\ref{lem:bit:extraction:induction:step}. Below, we show how to prove 

We will use the mathematical induction to prove Lemma~\ref{lem:bit:extraction}.
First, let us consider the base case $s=1$. 
By Lemma~\ref{lem:bit:extraction:base:case}, there exists a function realized by a ReLU network with $128n+294$ parameters such that: For any $\theta_1,\theta_2,\cdots,\theta_{n}\in \{0,1\}$, we have
\begin{equation*}
	\phi \big(k+\bin 0.\theta_1\theta_2\cdots\theta_{n}\big)=\sum_{\ell=1}^{k}\theta_\ell \quad \tn{for $k=0,1,\cdots,n$.}
\end{equation*}
%	Set $m=1$ and define $\varrho_2(x,y)\coloneqq \min \{2x,\,y\}$ for any $x,y\in\R$. Then, for any $\xi\in \{0,1\}$, we have
%	\begin{equation*}
	%		\varrho_2(\bin 0.\xi,\, 0)=\min \big\{2\cdot\bin 0.\xi,\, 0\big\}= \min \big\{\xi,\, 0\big\}=0
	%	\end{equation*}
%	and
%	\begin{equation*}
	%		\varrho_2(\bin 0.\xi,\, 1)=\min \big\{2\cdot\bin 0.\xi,\, 1\big\}=\min \big\{\xi,\, 1\big\}=\xi.
	%	\end{equation*}
%That means $\varrho_2$ satisfies the condition in Equation~\eqref{eq:any:xi}. Then, by Lemma~\ref{lem:bit:extraction:induction:step}, there exists a $(\sigma,\varrho_1,\varrho_2)$-activated network $\phi$ with $570(s+1)^3(n+1)=570(n+1)$ parameters such that 
% \begin{equation*}
	%	\phi \big(k+\bin 0.\theta_1\theta_2\cdots\theta_{n}\big)=\sum_{\ell=1}^{k}\theta_\ell \quad \tn{for $k=0,1,\cdots,n$.}
	%\end{equation*}
	That means Equation~\eqref{eq:bit:extraction:s+1} holds for $s=1$. Moreover, $\phi$ can also be regarded as a height-$1$ ReLU NestNet with $128n+294\le 57(s+7)^2(n+1)$ parameters for $s=1$, which means Lemma~\ref{lem:bit:extraction} is proved for the case $s=1$.
	
	Next, assume Lemma~\ref{lem:bit:extraction} holds for $s=r\in\N^+$. 
	We need to show that it is also true for $s=r+1$ by applying Lemma~\ref{lem:bit:extraction:induction:step}.
%	To this end, we need to construct $\varrho_1$ and $\varrho_2$ in Lemma~\ref{lem:bit:extraction:induction:step}
%	by setting $s=r$  and $m=n^{r}=n^s$  therein. It is easy to construct a continuous piecewise linear function $\varrho_1:\R\to \R$ with $2n$ breakpoints satisfying
%	\begin{equation*}
%		\varrho_1(x)=\lfloor x \rfloor\quad \tn{for any $\displaystyle x\in \bigcup_{\ell=0}^{n-1}[\ell,\,\ell+1-\delta]$ with $\delta=2^{-nm}$.}
%	\end{equation*}
%	Then, $\varrho_1$ satisfies Equation~\eqref{eq:floor:g1}.
%	Moreover, by Lemma~\ref{lem:cpl(p)} with $p=2n$ therein,
%	$\varrho_1$ can be realized by a one-hidden-layer ReLU network of width $2n+1$.  Clearly, $\varrho_1$ has at most $(1+1)(2n+1)+\big((2n+1)+1\big)=6n+4$ parameters.
%		
	By the induction hypothesis, 
	there exists 
	\begin{equation*}
		g\in \nestnet_{r}\Big\{57(r+7)^2(n+1)\Big\}
	\end{equation*}
%	a height-$r$ ReLU NestNet $\varrho_2$ with $587(r+1)^4(n+1)$ parameters 
	such that: For any $\xi_1,\xi_2,\cdots,\xi_{n^{r}}\in \{0,1\}$, we have
	\begin{equation*}
		g \big(k+\bin 0.\xi_1\xi_2\cdots\xi_{n^{r}}\big)=\sum_{\ell=1}^{k}\theta_\ell \quad \tn{for $k=0,1,\cdots,n^{r}$.}
	\end{equation*}
	It follows from $m=n^{r}$ that
	\begin{equation*}
		g\big(p+\bin 0.\xi_1\xi_2\cdots\xi_m \big)=\sum_{j=1}^{p}\xi_j\quad \tn{for any $\xi_1,\xi_2,\cdots,\xi_m\in \{0,1\}$ and $p=0,1,\cdots,m$,}
	\end{equation*}
	which means $g$ satisfies Equation~\eqref{eq:any:xi}.
	Then, by Lemma~\ref{lem:bit:extraction:induction:step} with $m=n^{r}$ and $\hatn=57(r+7)^2(n+1)$ therein, there exists 
	\begin{equation*}
		\phi\in \nestnet_{r+1}\Big\{\hatn + 114(r+7)(n+1)\Big\}
	\end{equation*}
%	a function  $\phi$ realized by a $(\sigma,\varrho_1,\varrho_2)$-activated network with 
%	\begin{equation*}
%		298(s+1)^3(n+1)
%		=298(r+1)^3(n+1)
%		=298(r+1)^3(n+1)
%	\end{equation*}
%	parameters 
	such that: For any $\theta_1,\theta_2,\cdots,\theta_{nm}\in \{0,1\}$, we have
	\begin{equation*}
		\phi \big(k+\bin 0.\theta_1\theta_2\cdots\theta_{nm}\big)=\sum_{\ell=1}^{k}\theta_\ell \quad \tn{for $k=0,1,\cdots,nm$.}
	\end{equation*}
	It follows from $m=n^{r}$ that, for any $\theta_1,\theta_2,\cdots,\theta_{n^{r+1}}\in \{0,1\}$, we have
	\begin{equation*}
		\phi \big(k+\bin 0.\theta_1\theta_2\cdots\theta_{n^{r+1}}\big)=\sum_{\ell=1}^{k}\theta_\ell \quad \tn{for $k=0,1,\cdots,n^{r+1}$,}
	\end{equation*}
	which means Equation~\eqref{eq:bit:extraction:s+1} holds for $s=r+1$.
%	Moreover, the fact $		\varrho_2\in \nestnet_{r}\big\{299(r+1)^4(n+1)\big\}$ implies that $\phi$ can be realized as a height-$(r+1)$ NestNet with at most
%	\begin{equation*}
%		\begin{split}
%			& \hspace{12pt}\underbrace{298(r+1)^3(n+1)}_{\tn{outer network}}
%			+ \underbrace{\big(6n+4\big)}_{\varrho_1} 
%			+  \underbrace{299(r+1)^4(n+1)}_{\varrho_2}\\
%			&\le 299(n+1)\Big( (r+1)^3+ (r+1)^4\Big)\le 299\Big(\big((r+1)+1\big)^4\Big)(n+1),
%		\end{split}
%	\end{equation*}
%	where the last inequality comes from the following fact
%	\begin{equation*}
%		\big((r+1)+1\big)^4\ge (r+1)^4 + (r+1)^3.
%	\end{equation*}
%	Thus, we prove Lemma~\ref{lem:bit:extraction} for the case $s=r+1$, which means we finish the induction step.
Moreover, we have 
\begin{equation*}
	\begin{split}
		\hatn+114(r+7)(n+1)
		&=57(r+7)^2(n+1) +114(r+7)(n+1)\\
		&=57(n+1)\Big((r+7)^2+2(r+7)\Big)\\
		&\le 57(n+1)\big((r+7)+1\big)^2
		=57\big((r+1)+7\big)^2(n+1).
	\end{split}
\end{equation*}
This implies that 
\begin{equation*}
	\phi\in \nestnet_{r+1}\Big\{\hatn + 114(r+7)(n+1)\Big\}
	\subseteq 
	\nestnet_{r+1}\Big\{57\big((r+1)+7\big)^2(n+1)\Big\}.
\end{equation*}
Thus, we prove Lemma~\ref{lem:bit:extraction} for the case $s=r+1$, which means we finish the induction step.
 Hence, by the principle of induction, we complete the proof of Lemma~\ref{lem:bit:extraction}.
\end{proof}

%\begin{lemma}\label{lem:index:decomposition}
%	Given any $m,n\in\N^+$ and $k\in \{1,2,\cdots,nm\}$, there exist a ReLU network $\phi_1$ with
%		such that 
%	\begin{equation}
	%		\phi_1(k)=i \quad \tn{and}\quad \phi_2(k)=j,
	%	\end{equation}
%	where $k=im+j$ is the unique represent for $i\in \{0,1,\cdots,n-1\}$ and $j\in \{1,2,\cdots,m\}$.
%\end{lemma}

%\subsection{Proof of Lemmas in Section~\ref{sec:proof:bit:extraction}}

\subsubsection{Proof of Lemma~\ref{lem:bit:extraction:base:case} for Lemma~\ref{lem:bit:extraction}}
\label{sec:proof:bit:extraction:base:case}

To simplify the proof of Lemma~\ref{lem:bit:extraction:base:case}, we introduce the following lemma.
\begin{lemma}\label{lem:bit:extraction:ReLU:net}
	Given any $n\in \N^+$, there exists a function $\phi$ realized by a ReLU network of width $7$ and depth $2n+1$ such that: For any $\theta_1,\theta_2,\cdots,\theta_{n}\in \{0,1\}$, we have
	\begin{equation*}
		\phi \big(\bin 0.\theta_1\theta_2\cdots\theta_{n},\,k\big)=\sum_{\ell=1}^{k}\theta_\ell \quad \tn{for $k=0,1,\cdots,n$.}
	\end{equation*}
\end{lemma}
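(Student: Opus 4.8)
The plan is to realize $\phi$ by an iterative \textbf{bit-extraction} network that peels off the binary digits $\theta_1,\theta_2,\dots,\theta_n$ one at a time, spending a constant number of layers per digit. I would maintain three scalars as the network advances: the current fractional remainder $y_\ell = 0.\theta_\ell\theta_{\ell+1}\cdots\theta_n$ (so $y_1$ is the first input coordinate), a countdown $c_\ell$ initialised to the second input coordinate $k$, and a running sum $s_\ell$. The leading bit is recovered from $2y_\ell=\theta_\ell.\theta_{\ell+1}\cdots\theta_n$, the remainder is updated by $y_{\ell+1}=2y_\ell-\theta_\ell$, the countdown by $c_{\ell+1}=\sigma(c_\ell-1)=\max(c_\ell-1,0)$, and the sum by adding $\theta_\ell$ only while $c_\ell\ge 1$. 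Since $c_\ell=\max(k-\ell+1,0)$, this adds $\theta_\ell$ exactly when $\ell\le k$, so after $n$ digits $s_n=\sum_{\ell=1}^k\theta_\ell$, which is the desired output.

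Two elementary gadgets drive the construction. First, to read off $\theta_\ell$ I would use the clipped ramp
\[
T(t)=\tfrac{1}{\tilde\delta}\big(\sigma(t-1+\tilde\delta)-\sigma(t-1)\big),\qquad \tilde\delta=2^{-(n-1)},
\]
applied to $t=2y_\ell$. The point is that the dyadic structure guarantees a gap: if $\theta_\ell=0$ then $2y_\ell\le 1-2^{-(n-\ell)}\le 1-\tilde\delta$ and $T(2y_\ell)=0$, while if $\theta_\ell=1$ then $2y_\ell\ge 1$ and $T(2y_\ell)=1$; thus one fixed ramp (two ReLU units) extracts the correct bit at every step. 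Second, for the conditional addition I would use the identities $\theta_\ell\cdot\one_{\{\ell\le k\}}=\min(\theta_\ell,c_\ell)$ (valid because $\theta_\ell\in\{0,1\}$ and $c_\ell$ is a nonnegative integer) together with $\min(a,b)=a-\sigma(a-b)$, so that $s_\ell=s_{\ell-1}+\theta_\ell-\sigma(\theta_\ell-c_\ell)$. All three updates are then compositions of affine maps and a single layer of ReLUs.

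To hit the width and depth budget I would organise the computation into $n$ blocks of two hidden layers each. In the first layer of a block I compute the two ramp units $\sigma(2y_\ell-1+\tilde\delta),\ \sigma(2y_\ell-1)$ and carry $2y_\ell,\ s_{\ell-1},\ c_\ell$ through the activation (each of these is nonnegative, so a single $\sigma$ preserves it); in the second layer I form $\sigma(\theta_\ell-c_\ell)$, the updated remainder $\sigma(2y_\ell-\theta_\ell)$ and countdown $\sigma(c_\ell-1)$, and carry $s_{\ell-1}$ and $\theta_\ell$. The only subtlety is that the sum update needs the output of a ReLU ($\sigma(\theta_\ell-c_\ell)$) added to quantities already produced in the previous layer, so $s_\ell$ cannot be finalised inside the same block. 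I would resolve this by \emph{pipelining}: the affine map entering the next block assembles $s_\ell=s_{\ell-1}+\theta_\ell-\sigma(\theta_\ell-c_\ell)$ as a linear combination of the block's outputs, and the final sum $s_n$ is read out by the last affine map. Each layer then carries only a handful of neurons (well under $7$) and there are at most $2n$ hidden layers, comfortably within the claimed width $7$ and depth $2n+1$.

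The main obstacle I anticipate is precisely this bookkeeping: the conditional-sum update is intrinsically a two-ReLU operation, and one must thread $s_{\ell-1}$, $\theta_\ell$, $c_\ell$ and $2y_\ell$ through the layers so that every required linear combination is available exactly when a neuron needs it, without exceeding the width or adding extra layers. Verifying the gap inequality $1-2^{-(n-\ell)}\le 1-\tilde\delta$ at each step (so the ramp never lands in its transition region) and checking the two boundary cases $k=0$ and $\ell=n$ are the remaining routine points.
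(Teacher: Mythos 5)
Your construction is correct and is essentially the standard argument behind this lemma: the paper itself does not prove it internally but simply cites Lemma~3.5 of \cite{shijun:2}, whose proof is the same iterative bit-extraction scheme you describe --- a two-ReLU sharp ramp exploiting the dyadic gap to read off each leading bit exactly, a ReLU min/indicator gadget to add the bit only while the countdown from $k$ is positive, and pipelining at two hidden layers per bit so the partial sum is assembled by the affine map entering the next block. Your bookkeeping checks out (at most $5$ neurons per hidden layer and $2n$ hidden layers, within the stated width $7$ and depth $2n+1$, under the paper's convention that these are upper bounds), so the proposal constitutes a valid self-contained proof.
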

Lemma~\ref{lem:bit:extraction:ReLU:net} is the Lemma~$3.5$ of \cite{shijun:2}. The detailed proof can be found therein.
With Lemma~\ref{lem:bit:extraction:ReLU:net} in hand, we are ready to prove Lemma~\ref{lem:bit:extraction:base:case}.

\begin{proof}[Proof of Lemma~\ref{lem:bit:extraction:base:case}]
	By Lemma~\ref{lem:bit:extraction:ReLU:net}, there exists a function $\phi_0$  realized by  a ReLU network of width $7$ and depth $2n+1$ such that: For any $\theta_1,\theta_2,\cdots,\theta_{n}\in \{0,1\}$, we have
	\begin{equation*}
		\phi_0 \big(\bin 0.\theta_1\theta_2\cdots\theta_{n},\,k\big)=\sum_{\ell=1}^{k}\theta_\ell \quad \tn{for $k=1,2,\cdots,n$.}
	\end{equation*}	
	The equation above is not true for $k=0$. We will construct $\phi_2$ such that
	\begin{equation*}
		\phi_2(\bin 0.\theta_1\theta_2\cdots\theta_{n},\,k)
		=\sum_{\ell=1}^{k}\theta_\ell \quad \tn{for $k=0,1,\cdots,n$.}
	\end{equation*}
	To this end, we first set
	\begin{equation*}
		M=\max\big\{|\phi_0(x,\,y)|: x\in [0,1],\, y\in [0,n]\big\}
	\end{equation*}
	and define
	\begin{equation*}
		\phi_1(x,y)\coloneqq \min\big\{M+\phi_0(x,y),\, 2My\big\}\quad \tn{for any $x\in [0,1]$ and $y\in[0,n]$.}
	\end{equation*}
	
	\begin{figure}[htbp!]
		\centering
		\includegraphics[width=0.76\textwidth]{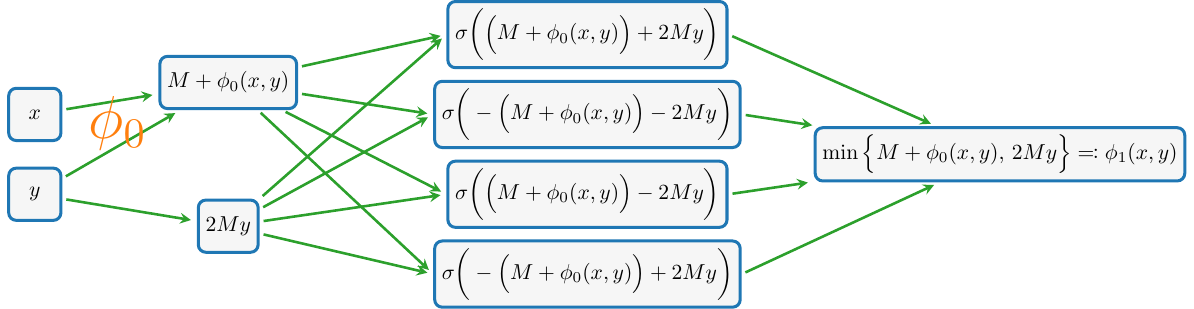}	
		\caption{An illustration of the network realizing $\phi_{1}$ for any $x\in [0,1]$ and $y\in[0,n]$ based on the fact
			$\min\{a,b\}=\tfrac{1}{2}\big(\sigma(a+b)-\sigma(-a-b)-\sigma(a-b)-\sigma(-a+b)\big)$. }
		\label{fig:phi:1:x:y}
	\end{figure}
	
	As we can see from Figure~\ref{fig:phi:1:x:y}, $\phi_1$ can be realized by a ReLU network of width $\max\{7,4\}=7$ and depth $(2n+1)+2=2n+3$.
	Moreover, we have
	\begin{equation*}
		\begin{split}
			\phi_1(\bin 0.\theta_1\theta_2\cdots\theta_{n},\,k)
			&=\min\big\{M+\phi_0(\bin 0.\theta_1\theta_2\cdots\theta_{n},\,k),\, 2Mk\big\}\\
			&=
			\begin{cases}
				M+\sum_{\ell=1}^{k}\theta_\ell &\tn{for}\  k=1,2,\cdots,n\\
				0 &\tn{for}\  k=0.
			\end{cases}
		\end{split}
	\end{equation*}
	Define 
	\begin{equation*}
		\phi_2(x,y)\coloneqq \sigma\big(\phi_1(x,y)-M\big)\quad \tn{for any $x\in [0,1]$ and $y\in[0,\infty)$.}
	\end{equation*}
	Then, $\phi_2$ can be realized by a ReLU network of width $7$ and depth $(2n+3)+1=2n+4$. Moreover, we have
	\begin{equation*}
		\begin{split}
			\phi_2(\bin 0.\theta_1\theta_2\cdots\theta_{n},\,k)
			&=\sigma\big(\phi_1(\bin 0.\theta_1\theta_2\cdots\theta_{n},\,k)-M\big)\\
			&=
			\begin{cases}
				\sigma\big(\sum_{\ell=1}^{k}\theta_\ell)=\sum_{\ell=1}^{k}\theta_\ell  & \tn{for}\   k=1,2,\cdots,n\\
				\sigma(-M)=0 &\tn{for}\  k=0.
			\end{cases}
		\end{split}
	\end{equation*}
	That is,
	\begin{equation*}
		\phi_2(\bin 0.\theta_1\theta_2\cdots\theta_{n},\,k)
		=\sum_{\ell=1}^{k}\theta_\ell \quad \tn{for $k=0,1,\cdots,n$.}
	\end{equation*}

	Next, we will construct $\bmPsi$ to extract $k$ and $\bin 0.\theta_1\theta_2\cdots\theta_n$ from $k+\bin 0.\theta_1\theta_2\cdots\theta_n$.
	It is easy to construct a continuous piecewise linear function $\psi:\R\to \R$ with $2n$ breakpoints satisfying
	\begin{equation*}
		\psi(x)=\lfloor x \rfloor\quad \tn{for any $\displaystyle x\in \bigcup_{\ell=0}^{n-1}[\ell,\,\ell+1-\delta]$ with $\delta=2^{-n}$.}
	\end{equation*}
	By Lemma~\ref{lem:cpl(p)} with $p=2n$ therein, $\psi$ can be realized by a one-hidden-layer ReLU network of width $2n+1$.  By defining 
	\begin{equation*}
		\bmPsi(x)\coloneqq \left[\begin{matrix*}[c]
			x-\psi(x)\\
			\psi(x)
		\end{matrix*}\right]=\left[\begin{matrix*}[c]
			\sigma(x)-\psi(x)\\
			\psi(x)
		\end{matrix*}\right]\quad \tn{for any $x\in [0,\infty)$.}
	\end{equation*}
	Then, $\bmPsi$ can be realized by a one-hidden-layer ReLU network of width $1+ 2(2n+1)=4n+3$. That means, the network realizing $\bmPsi$ has at most 
	\begin{equation*}
		(1+1)(4n+3)+\big((4n+3)+1\big)2=16n+14
	\end{equation*} 
	parameters. Moreover, for any $\theta_1,\theta_2,\cdots,\theta_n\in \{0,1\}$ and $k=0,1,\cdots,n$,
	we have  
	\begin{equation*}
		\psi(k+\bin 0.\theta_1\theta_2\cdots\theta_n)=\lfloor k+\bin 0.\theta_1\theta_2\cdots\theta_n\rfloor = k,
	\end{equation*}
	implying
	\begin{equation*}
		\begin{split}
			\bmPsi(k+\bin 0.\theta_1\theta_2\cdots\theta_n)
			&= \left[\begin{matrix*}[c]
				k+\bin 0.\theta_1\theta_2\cdots\theta_n-\psi(k+\bin 0.\theta_1\theta_2\cdots\theta_n)\\
				\psi(k+\bin 0.\theta_1\theta_2\cdots\theta_n)
			\end{matrix*}\right]\\
			&=\left[\begin{matrix*}[c]
				\bin 0.\theta_1\theta_2\cdots\theta_n\\
				k
			\end{matrix*}\right].
		\end{split}
	\end{equation*}
	
	Finally, the desired function $\phi$ can be defined via $\phi\coloneqq \phi_2\circ \bmPsi$. Clearly, the network realizing $\phi_2$ is of width $7$ and depth $2n+4$, and hence has at most 
	\begin{equation*}
		(7+1)7\big((2n+4)+1\big)=56(2n+5)
	\end{equation*}
	parameters, implying
	$\phi$ can be realized by a ReLU network with at most 
	\begin{equation*}
		56(2n+5) + (16n+14) =128n+294
	\end{equation*}
	parameters. Moreover, for any $\theta_1,\theta_2,\cdots,\theta_n\in \{0,1\}$ and $k=0,1,\cdots,n$, we have 
	\begin{equation*}
		\begin{split}
			\phi(k+\bin 0.\theta_1\theta_2\cdots\theta_n)
			&=\phi_2\circ \bmPsi(k+\bin 0.\theta_1\theta_2\cdots\theta_n)\\
			&=\phi_2(\bin 0.\theta_1\theta_2\cdots\theta_n,\, k)
			= \sum_{\ell=1}^{k} \theta_\ell.
		\end{split}
	\end{equation*}
	Thus, we finish the proof of Lemma~\ref{lem:bit:extraction:base:case}.
\end{proof}

\subsubsection{Proof of Lemma~\ref{lem:bit:extraction:induction:step} for Lemma~\ref{lem:bit:extraction}}
\label{sec:proof:bit:extraction:induction:step}

%Finally, let us prove Lemma~\ref{lem:bit:extraction:induction:step}.
%\begin{proof}[Proof of Lemma~\ref{lem:bit:extraction:induction:step}]

%To simplify the proof of Proposition~\ref{prop:floor:approx}, we introduce the following lemma.
%
%\begin{lemma}\label{lem:floor:approx:exp:nestnet}
%	Given any $v\in \N^+$ and $\delta\in (0,1)$, there exists a  function $\phi$ realized by a ReLU network  of width $v+1$ and depth $4v-3$ such that
%	\begin{equation}%\label{eq:floor:approx:exp:condition}
%		\phi(x)=\lfloor x\rfloor \quad \tn{for any $x\in \bigcup_{\ell=0}^{2^v-1}[\ell,\,\ell+1-\delta]$.}
%	\end{equation}
%\end{lemma}
%
%
%\begin{proof}[Proof of Lemma~\ref{lem:bit:extraction:induction:step}]

The key idea of proving Lemma~\ref{lem:bit:extraction:induction:step} is to construct a network with $n$ blocks, each of which extracts the sum of $n^r$ bits via $g$. Then the whole network can extract the sum of $n^{r+1}$ bits as we expect.

To simplify our notation, we set $m=n^r$.
Given any $nm$ binary bits $\theta_\ell\in \{0,1\}$ for $\ell=1,2,\cdots,nm$,
we divide these $nm$ bits into $n$ classes according to their indices, where the $i$-th class is composed of $m$ bits $\theta_{im+1},\cdots,\theta_{im+m}$ for  $i=0,1,\cdots,n-1$. We will show how to extract the $m$ bits of the $i$-th class, stored in $\bin 0.\theta_{im+1}\cdots\theta_{im+m}$. 
%	The key idea comes from the following equation:
%	\begin{equation*}
	%		\bin 0. \theta_{i m+1}\cdots\theta_{i m+m} 
	%		= \frac{\lfloor 2^m\cdot \bin 0. \theta_{i m+1}\cdots\theta_{nm}\rfloor}{2^m}\quad \tn{for $i=0,1,\cdots,n-1$},
	%	\end{equation*}
%where $\lfloor\cdot\rfloor$ is the floor function.
%Since the target network $\phi$ has only two activation functions $\sigma$ and $g$, not including the floor function. Thus, we need to construct a $(\sigma,g)$-activated network $\phi_1$ to replace/realize the floor function.
%There exists a continuous piecewise linear function $\phi_1\in \cpl(\R,n-2)$ satisfying:
%\begin{itemize}
%	\item $\phi_1$ matches
%	the set of samples
%	\begin{equation*}
	%		\bigcup_{i=0}^{n-1}\big\{(i,i),\,(i+1-\delta,i)\big\},\quad 
	%		\tn{where $\delta=2^{-nm}$.}
	%	\end{equation*} 
%	\item The breakpoint set of $\phi_1$ is 
%	\begin{equation*}
	%		\Big(\bigcup_{i=0}^{n-1}\big\{i,\,i+1-\delta\big\}\Big)\Big\backslash 
	%		\{0,\,n-\delta\}.
	%	\end{equation*} 
%\end{itemize}

First, let us show how to construct a network to extract $k$ and $\bin 0.\theta_1\theta_2\cdots\theta_{nm}$ from $k+0.\theta_1\theta_2\cdots\theta_{nm}$.
%By Lemma~\ref{lem:floor:approx:exp} with setting $v=(s+1)\lceil \log_2(n+1)\rceil \in\N^+$ and $\delta=2^{-nm}$ therein, there exists a function $\phi_0$ realized by a ReLU network of width $v+1$ and depth $4v-3$ such that
%\begin{equation*}
%	\phi_0(x)=\lfloor x\rfloor \quad \tn{for any $x\in \bigcup_{\ell=0}^{2^{v}-1}[\ell,\ell+1-\delta]$.}
%\end{equation*}
By setting $\tilden=2n$ and  Proposition~\ref{prop:floor:approx} with $J=2^{\tilden^r}$ therein, there exists 
\begin{equation*}
	\tildeg\in \nestnet_{r}\big\{36(r+7)\tilden\big\}
	=\nestnet_{r}\big\{36(r+7)(2n)\big\}
	=\nestnet_{r}\big\{72(r+7)n\big\}
\end{equation*}
 such that
\begin{equation*}
	\tildeg(x)=\lfloor x\rfloor \quad 
	\tn{for any $x\in \bigcup_{\ell=0}^{J-1}[\ell,\ell+1-\delta]$.}
\end{equation*}
Observe that 
\begin{equation*}
	J-1=2^{\tilden^r}=2^{(2n)^r}-1\ge 2^{2(n^r)}-1= 2^{2m}-1=4^m-1\ge m^2\ge nm.
\end{equation*}
%Then, we have 
%\begin{equation*}
%	\phi_0(x)=\lfloor x\rfloor \quad \tn{for any $x\in \bigcup_{\ell=0}^{nm}[\ell,\ell+1-\delta]$.}
%\end{equation*}
It follows from $\bin 0.\theta_1\theta_2\cdots\theta_{nm}\le 1-2^{-nm}=1-\delta$ that
\begin{equation*}
	k+\bin 0.\theta_1\theta_2\cdots\theta_{nm}\in \bigcup_{\ell=0}^{nm}[\ell,\ell+1-\delta]
	\subseteq \bigcup_{\ell=0}^{J-1}[\ell,\ell+1-\delta]
\end{equation*}
for $k=0,1,\cdots,nm$. Thus, we have
\begin{equation}\label{eq:extract:k}
	\tildeg(k+\bin 0.\theta_1\theta_2\cdots\theta_{nm})
	=k\quad 
	\tn{for $k=0,1,\cdots,nm$.}
\end{equation}

%Recall that
%\begin{equation*}
%	\varrho_1(x)=\lfloor x\rfloor\quad \tn{ for any $x\in \bigcup_{\ell=0}^{n-1}[\ell,\ell+1-\delta]$ with $\delta=2^{-nm}$}.
%\end{equation*}
It is easy to verify that
\begin{equation*}
	2^m \cdot \bin 0. \theta_{i m+1}\cdots \theta_{nm}\in \bigcup_{\ell=0}^{2^m-1}[\ell,\ell+1-\delta]\quad \tn{ for $i=0,1,\cdots,n-1$}.
\end{equation*}
Since $2^m-1=2^{n^r}-1\le 2^{(2n)^r}-1=J-1$, we have
\begin{equation*}
	\tildeg(2^m \cdot \bin 0. \theta_{i m+1}\cdots \theta_{nm})
	=\lfloor 2^m \cdot \bin 0. \theta_{i m+1}\cdots \theta_{nm} \rfloor
	\quad \tn{ for  $i =0,1,\cdots,n-1$}.
\end{equation*}

Therefore, for $i= 0,1,\cdots,n-1$, we have
\begin{equation*}
	\bin 0. \theta_{im+1}\cdots\theta_{im +m} 
	= \frac{\lfloor 2^m\cdot \bin 0. \theta_{i m+1}\cdots\theta_{nm}\rfloor}{2^m} 
	=\frac{\tildeg(2^m\cdot \bin 0. \theta_{i m+1}\cdots\theta_{nm})}{2^m}
\end{equation*}
and 
\begin{equation*}
	\begin{split}
		%		&\quad 	
		\bin 0. \theta_{(i+1) m+1}\cdots\theta_{nm} 
		&= 2^m\Big(\bin 0. \theta_{im+1}\cdots\theta_{nm} - \bin 0. \theta_{im+1}\cdots\theta_{im+m}\Big)\\
		&= 2^m\Big(\bin 0. \theta_{im+1}\cdots\theta_{nm} - \frac{\tildeg(2^m\cdot \bin 0. \theta_{i m+1}\cdots\theta_{nm})}{2^m}\Big).
		%		=\phi_2(\bin 0. \theta_{i m+1}\cdots\theta_{nm}).
	\end{split}
\end{equation*}
By defining 
\begin{equation*}
	\phi_1(x)\coloneqq \frac{\tildeg(2^m x)}{2^m}
	\quad \tn{and}\quad
	\phi_2(x)\coloneqq 2^m\Big(x - \frac{\tildeg(2^m x)}{2^m}\Big)=\Big(\sigma(x) - \frac{\tildeg(2^m x)}{2^m}\Big) \quad \tn{for $x\ge 0$,}
\end{equation*}
 we have
\begin{equation}\label{eq:theta:phi:1}
	\bin 0. \theta_{im+1}\cdots\theta_{im +m} 
	= \phi_1(\bin 0. \theta_{i m+1}\cdots\theta_{nm})
\end{equation}
and 
\begin{equation}\label{eq:theta:phi:2}
	\begin{split}
		%		&\quad 	
		\bin 0. \theta_{(i+1) m+1}\cdots\theta_{nm} 
		=\phi_2(\bin 0. \theta_{i m+1}\cdots\theta_{nm})
		%		=\phi_2(\bin 0. \theta_{i m+1}\cdots\theta_{nm}).
	\end{split}
\end{equation}
for any $i\in \{0,1,\cdots,n-1\}$. Moreover, $\phi_1$ can be realized by a one-hidden-layer $\tildeg$-activated network of width $1$; $\phi_2$ can be realized by a one-hidden-layer $(\sigma,\tildeg)$-activated network of width $2$.

%	Next, let us show how to co
Define 
\begin{equation*}
	\phi_{3,i}(x)\coloneqq \min\{\sigma(x-i m),\,m\}\quad \tn{for any $x\in\R$ and $i=0,1,\cdots,n-1$.}
\end{equation*}
%	 where $\sigma$ is ReLU, i.e., $\sigma(x)=\max\{0,x\}$. 
For any $k\in\{1,2,\cdots,nm\}$, there exist $k_1\in \{0,1,\cdots,n-1\}$ and $k_2\in\{1,2,\cdots,m\}$ such that $k=k_1m+k_2$. Then we have
\begin{equation}\label{eq:phi:3i}
	\phi_{3,i}(k)= \min\{\sigma(k-i m),\,m\}=
	\begin{cases}
		m &\tn{if}\   i\le k_1-1\\
		k_2 &\tn{if}\   i=k_1 \\
		0 &\tn{if}\   i\ge k_1+1.\\
	\end{cases}
\end{equation}
Observe that
\begin{equation*}
	\begin{split}					\big\{1,2,\cdots,k\big\}&=\big\{1,2,\cdots,k_1m+k_2\big\}\\
		&=\Big(\bigcup_{i=1}^{k_1-1}\big\{im+j: j=1,2,\cdots,m\big\}\Big)\bigcup \big\{k_1m+j: j=1,2,\cdots,k_2\big\}.
	\end{split}
\end{equation*}
It follows that
\begin{equation}\label{eq:sum:theta:k}
	\begin{split}
		\sum_{\ell=1}^k\theta_\ell
		=\sum_{\ell=1}^{k_1m+k_2} \theta_\ell
		&=\sum_{i=0}^{k_1-1} \bigg(\sum_{j=1}^m \theta_{i m+j} \bigg)
		+ \sum_{j=1}^{k_2}\theta_{k_1 m+j} 
		+ 0\\
		&=\sum_{i=0}^{k_1-1} \bigg(\sum_{j=1}^m \theta_{i m+j} \bigg)
		+ \sum_{i=k_1}^{k_1}\bigg(\sum_{j=1}^{k_2}\theta_{i m+j}\bigg)
		+ \sum_{i=k_1+1}^{n-1}\bigg(\sum_{j=1}^0 \theta_{i m+j}\bigg)\\
		&=\sum_{i=0}^{k_1-1} \bigg(\sum_{j=1}^{\phi_{3,i}(k)} \theta_{i m+j} \bigg)
		+ \sum_{i=k_1}^{k_1}\bigg(\sum_{j=1}^{\phi_{3,i}(k)}\theta_{i m+j}\bigg)
		+ \sum_{i=k_1+1}^{n-1}\bigg(\sum_{j=1}^{\phi_{3,i}(k)} \theta_{i m+j}\bigg)\\
		&=\sum_{i=0}^{n-1}\bigg(\sum_{j=1}^{\phi_{3,i}(k)} \theta_{i m+j}\bigg)
	\end{split}
\end{equation}
for $k\in \{1,2,\cdots,nm\}$,
where the %penultimate 
second to last equality 
comes from Equation~\eqref{eq:phi:3i}.
It is easy to verify that Equation~\eqref{eq:sum:theta:k} also holds for $k=0$, i.e., 
\begin{equation*}
	\begin{split}
		\sum_{\ell=1}^0\theta_\ell
		=0=\sum_{i=0}^{n-1}\bigg(\sum_{j=1}^{0} \theta_{i m+j}\bigg)=\sum_{i=0}^{n-1}\bigg(\sum_{j=1}^{\phi_{3,i}(0)} \theta_{i m+j}\bigg).
	\end{split}
\end{equation*}
Therefore, we have 
\begin{equation}\label{eq:theta:sum:decomposition}
	\begin{split}
		\sum_{\ell=1}^k\theta_\ell
		=\sum_{i=0}^{n-1}\bigg(\sum_{j=1}^{\phi_{3,i}(k)} \theta_{i m+j}\bigg) \quad \tn{for any $k\in \{0,1,\cdots,nm\}$.}
	\end{split}
\end{equation}

Fix $i\in \{0,1,\cdots,n-1\}$.
By setting $p=\phi_{3,i}(k)\in \{0,1,\cdots,m\}$ and $\xi_j=\theta_{im+j}$ for $j=1,2,\cdots,m$ in Equation~\eqref{eq:any:xi}, we have
\begin{equation}\label{eq:g:theta:sum}
	g\big(\phi_{3,i}(k)+\bin  0.\theta_{i m+1}\theta_{i m+2}\cdots\theta_{i m+m}\big)
	=\sum_{j=1}^{\phi_{3,i}(k)} \theta_{i m+j}.
\end{equation}

%	Then, the architecture of the target network $\phi$
With Equations~\eqref{eq:extract:k}, \eqref{eq:theta:phi:1}, \eqref{eq:theta:phi:2}, \eqref{eq:theta:sum:decomposition}, and \eqref{eq:g:theta:sum} in hand, we are ready to construct the desired function $\phi$, which can be realized by the NestNet in Figure~\ref{fig:nmBits}. Clearly, we have 
\begin{equation*}
	\phi(k+\bin 0.\theta_1\cdots\theta_{nm})=\sum_{\ell=1}^{k}\theta_\ell\quad \tn{for $k=0,1,\cdots,nm$.}
\end{equation*}
Note that $nm=n\cdot  n^r=n^{r+1}$. Then we have
\begin{equation*}
	\phi(k+\bin 0.\theta_1\cdots\theta_{n^{r+1}})=\sum_{\ell=1}^{k}\theta_\ell\quad \tn{for $k=0,1,\cdots,n^{r+1}$.}
\end{equation*}

\begin{figure}[htbp!]       
	\centering          
	\includegraphics[width=0.99\textwidth]{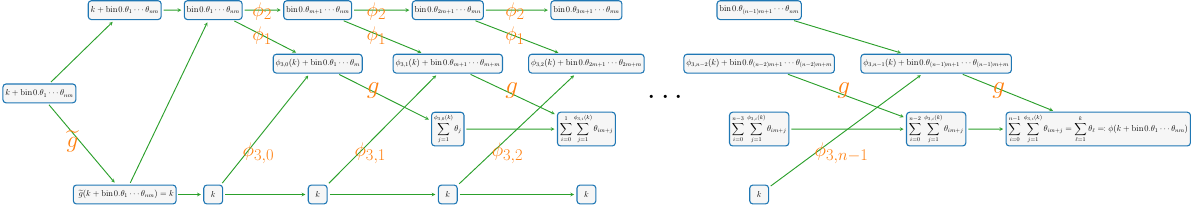}
	\caption{An illustration of the NestNet realizing $\phi$ based on Equations~\eqref{eq:extract:k}, \eqref{eq:theta:phi:1}, \eqref{eq:theta:phi:2}, \eqref{eq:theta:sum:decomposition}, and \eqref{eq:g:theta:sum}. Here, $g$ and $\tildeg$ are regarded as activation functions.}
	\label{fig:nmBits}
\end{figure}

It remains to estimate the number of parameters in the NestNet realizing $\phi$.
Recall that $\phi_1$ can be realized by a one-hidden-layer $\tildeg$-activated network of width $1$ and $\phi_2$ can be realized by a one-hidden-layer $(\sigma,\tildeg)$-activated network of width $2$.
%To this end, we need to estimate the number of parameters in Blocks $1$ and $2$ of Figure~\ref{fig:nmBits}.
%First, let us estimate the number of parameters in Block $1$. Recall that $\phi_0$ can be realized by a $\sigma$-activated network of width $v+1$ and depth $4v-3$. Then, Block $1$ of Figure~\ref{fig:nmBits} is of width $(v+1)+1=v+2$
%and depth $(4v-3)+2=4v-1$. Thus, the number of parameters in Block $1$ of Figure~\ref{fig:nmBits} is bounded by
%\begin{equation*}
%	\begin{split}
%		(v+2)\Big((v+2)+1\Big)\Big((4v-1)+1\Big)
%		&\le 4(v+2)^3
%		=4 \Big( (s+1)\lceil \log_2(n+1)\rceil+2\Big)^3\\
%		&\le 4(s+1)^3 \Big(\lceil \log_2(n+1)\rceil+2\Big)^3\le 256(s+1)^3n,
%	\end{split}
%\end{equation*}
%where the last inequality comes from the following fact
%\begin{equation*}
%	\lceil \log_2(x+1) \rceil+2\le \log_2(x+1)+3\le 4x^{1/3}\quad \tn{for any $x\in [1,\infty)$.}
%\end{equation*}
%
%Next, let us estimate the number of parameters in Block $2$  of Figure~\ref{fig:nmBits}. 
%Clearly, $\phi_1(x)= \varrho_1(2^m x)/2^m$ implies that $\phi_1$ can be realized by a $\varrho_1$-activated network of width $1$ and depth $1$.
%Moreover, 
%\begin{equation*}
%	\phi_2(x)= 2^m\big(x - \varrho_1(2^m x)/2^m\big)
%	=2^m \sigma(x)-\varrho_1(2^m x) \quad \tn{for any $x\ge 0$.}
%\end{equation*}
%Thus, $\phi_2$ limited on $[0,\infty)$ can be realized by a $(\sigma,\varrho_1)$-activated network of width $2$ and depth $1$. 
%We remark that only the property of $\phi_2$ on $[0,1]$ is used   in Figure~\ref{fig:nmBits}. 

Observe that 
\begin{equation*}
	\min\{a,b\}=\tfrac{1}{2}\big(\sigma(a+b)-\sigma(-a-b)-\sigma(a-b)-\sigma(-a+b)\big)\quad \tn{for any $a,b\in\R$}.
\end{equation*}
As we can see from Figure~\ref{fig:phi:3:i:arc},  $\phi_{3,i}$ can be realized by a $\sigma$-activated network of width $4$ and depth $2$ for each $i\in \{0,1,\cdots,n-1\}$.

\begin{figure}[htbp!]
	\centering
	\includegraphics[width=0.72\textwidth]{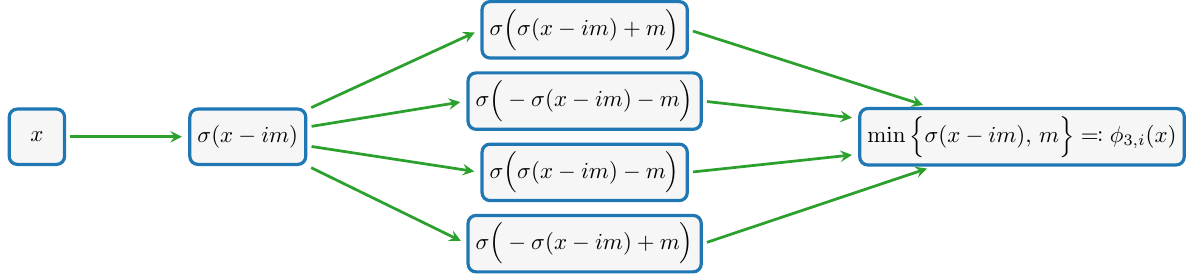}	
	\caption{An illustration of $\phi_{3,i}$ for each $i\in \{0,1,\cdots,n-1\}$. }
	\label{fig:phi:3:i:arc}
\end{figure}
%By Lemma~\ref{lem:cpl(p)}, we have
%\begin{equation*}
%	\phi_1,\phi_2\in \cpl(n-2) \subseteq  
%	\NNF(\NNinput=1\NNspace\NNwidth=5\NNspace\NNdepth=n-1\NNspace\NNoutput=1).
%\end{equation*}

Thus, the network in Figure~\ref{fig:nmBits} can be regarded as a $(\sigma,g,\tildeg)$-activated network of width $2+1+1+1+4+1=10$
and 
depth
$2+(2+1)n=3n+2$. 
Recall that $g\in \nestnet_{r}\{\hatn\}$ and $\tildeg\in \nestnet_{r}\{72(r+7)n\}$.
This implies that
$\phi$ can be realized by a height-$(r+1)$ NestNet with at most
\begin{equation*}
	\underbrace{(10+1)10\big((3n+2)+1\big)}_{\tn{outer network}}
	\ +\ 
	\underbrace{\hatn}_{g}
	\   +\
	\underbrace{72(r+7)n}_{\tildeg}
	\  \le \ 
	\hatn + 114(r+7)(n+1)
\end{equation*}
parameters, which means we finish the proof of Lemma~\ref{lem:bit:extraction:induction:step}.

%Then,
%Block $2$ of Figure~\ref{fig:nmBits} is a $(\sigma,\varrho_1,\varrho_2)$-activated network of width 
%$2+1+1+1+4+1=10$
%and depth $(2+1)n+1=3n+1$. That means the number of parameters in Block $2$ of Figure~\ref{fig:nmBits}  is bounded by 
%\begin{equation*}
%	10(10+1)\big((3n+1)+1\big)\le 110(3n+2).
%\end{equation*}
%Therefore, $\phi$ can be realized by a $(\sigma,\varrho_1,\varrho_2)$-activated network with  at most
%\begin{equation*}
%	256(s+1)^3n+ 110(3n+2)\le (256+330/8)(s+1)^3 (n+1)\le 298(s+1)^3 (n+1)
%\end{equation*}

%parameters, which means we finish the proof of Lemma~\ref{lem:bit:extraction:induction:step}.
%\end{proof}

\fi	
\end{document}